\newtheorem{thm}{Theorem}
\newtheorem{cor}[thm]{Corollary}
\newtheorem{lem}[thm]{Lemma}
\newcommand{\bc}{\begin{center}}
\newcommand{\ec}{\end{center}}
\newcommand{\bq}{\begin{quote}}
\newcommand{\eq}{\end{quote}}
\newcommand{\be}{\begin{equation}}
\newcommand{\ee}{\end{equation}}
\newcommand{\beqa}{\begin{eqnarray*}}
\newcommand{\eeqa}{\end{eqnarray*}}
\newcommand{\beqn}{\begin{eqnarray}}
\newcommand{\eeqn}{\end{eqnarray}}
\newcommand{\bbibl}{}
\newcommand{\ba}{\begin{array}}
\newcommand{\ea}{\end{array}}
\DeclareMathOperator*{\argsort}{argsort}
\DeclareMathOperator*{\argmax}{argmax}
\DeclareMathOperator*{\argmin}{argmin}
\newcommand\reals{\mathbb{R}}
\newcommand{\E}{\mathbb{E}}
\newcommand*{\permcomb}[4][0mu]{{{}^{#3}\mkern#1#2_{#4}}}
\newcommand*{\perm}[1][-3mu]{\permcomb[#1]{P}}
\newcommand\listnet{\phi_{\mathrm{LN}}}
\newcommand\sdcg{\phi_{\mathrm{SD}}}
\DeclarePairedDelimiter{\ceil}{\lceil}{\rceil}
\begin{document}

\title{Online Learning to Rank with Top-k Feedback}

\author{\name Sougata Chaudhuri $^1$  \email sougata@umich.edu \\
             \name Ambuj Tewari $^{1, 2}$ \email tewria@umich.edu\\
             \addr Department of Statistics $^1$ \\
              Department of Electrical Engineering and Computer Science $^2$ \\
              University of Michigan\\
              Ann Arbor, MI 48109, USA}

\editor{}
\maketitle

\begin{abstract}
We consider two settings of online learning to rank where feedback is restricted to top ranked items. The problem is cast as an online game between a learner and sequence of users, over $T$ rounds. In both settings, the learners objective is to present ranked list of items to the users. The learner's performance is judged on the entire ranked list and true relevances of the items. However, the learner receives highly restricted feedback at end of each round, in form of relevances of only the top $k$ ranked items, where $k \ll m$. The first setting is \emph{non-contextual}, where the list of items to be ranked is fixed. The second setting is \emph{contextual}, where lists of items vary, in form of traditional query-document lists. No stochastic assumption is made on the generation process of relevances of items and contexts. We provide efficient ranking strategies for both the settings. The strategies achieve $O(T^{2/3})$ regret, where regret is based on popular ranking measures in first setting and ranking surrogates in second setting. We also provide impossibility results for certain ranking measures and a certain class of surrogates, when feedback is restricted to the top ranked item, i.e. $k=1$. We empirically demonstrate the performance of our algorithms on simulated and real world datasets.\\

\end{abstract}

\begin{keywords}
  Learning to Rank, Online Learning, Partial Monitoring, Online Bandits, Learning Theory
\end{keywords}

\section{Introduction}
\label{introduction}
Learning to rank \citep{liu2011learning} is a supervised machine learning problem, where the output space consists of \emph{rankings} of objects. Most learning to rank methods are based on supervised \emph{batch} learning, i.e., rankers are trained on batch data in an offline setting. The accuracy of a ranked list, in comparison to the actual relevance of the documents, is measured by various ranking measures, such as Discounted Cumulative Gain (DCG) \citep{jarvelin2000}, Average Precision (AP) \citep{baeza1999} and others.

Collecting reliable training data can be expensive and time consuming. In certain applications, such as deploying a new web app or developing a custom search engine, collecting large amount of high quality labeled data might be infeasible \citep{sanderson2010test}. Moreover, a ranker trained from batch data might not be able to satisfy rapidly changing user needs and preferences. Thus, a promising direction of research is development of online ranking systems, where a ranker is updated on the fly. One type of online ranking models learn from implicit feedback inferred from user clicks on ranked lists \citep{hofmann2013balancing}. However, there are some potential drawbacks in learning from user clicks. It is possible that the system is designed for explicit ratings but not clicks. Moreover, a clicked item might not actually be relevant to the user and there is also the problem of bias towards top ranked items in inferring feedback from user clicks \citep{joachims2002}. 

We develop models for online learning of ranking systems, from explicit but \emph{highly restricted} feedback. At a high level, we consider a ranking system which interacts with users over a time horizon, in a sequential manner. At each round, the system presents a ranked list of $m$ items to the user, with the quality of the ranked list judged by the relevance of the items to the user. The relevance of the items, reflecting varying user preferences, is encoded as relevance vectors. The system's objective is to learn from the feedback it receives and update its ranker over time, to satisfy as many users as possible. However, the feedback that the system receives at end of each round is not the full relevance vector, but relevance of only the top $k$ ranked items, where $k\ll m$ (typically, $k=$ $1$ or $2$). We consider two problem settings under the general framework: \emph{non-contextual} and \emph{contextual}. In the first setting, we assume that the set of items to be ranked are fixed (i.e., there are no context on items), with the relevance vectors varying according to users' preferences. In the second setting, we assume that set of items vary, as traditional query-document lists. We highlight two motivating examples for such feedback model, encompassing \emph{economic and user-burden} constraints and \emph{privacy} concerns.

{\bf Privacy Concerns}: Assume that a medical company wants to build an app to suggest activities (take a walk, meditate, watch relaxing videos, etc.) that can lead to reduction of stress in a certain highly stressed segment of the population. The activities do not have contextual representation and are fixed over time.  Not all activities are likely to be equally suitable for everyone under all conditions since the effects of the activities vary depending on the user attributes like age \& gender and on the context such as time of day \& day of week. The user has liberty to browse through all the suggested activities, and the company would like the user to rate every activity (may be on an $1-5$ scale), reflecting the relevances, so that it can keep refining its ranking strategy. However, in practice, though the user may scan through all suggested activities and have a rough idea about how relevant each one is to her; she is unlikely to give feedback on the usefulness (relevance) of every activity due to privacy concerns and cognitive burden. Hence, in exchange of the user using the app, the company only asks for careful rating of the top $1$ or $2$ ranked activities. The apps performance would still be based on the full ranked list, compared to the implicit relevance vector that the user generates, but it gets feedback on the relevances of only top $1$ or $2$ ranked activities.


{\bf Economic Constraints}: Assume that a small retail company wants to build an app that produces a ranked list of suggestions to a user query, for categories of different products. The app develops features representing the categories, and thus, the interaction with the users happen in a traditional query-documents lists framework (user query and retrieved activities are jointly represented through a feature matrix). Different categories are likely to have varying relevance to different users, depending on user characteristics such as age, gender, etc. Like in the first example, the user has liberty to browse through all the suggestions but she will likely feel too burdened to give carefully considered rating on each suggestion, unless the company provides some economic incentives to do so. Though the company needs high quality feedback on each suggestion to keep refining the ranking strategy, it cannot afford to give incentives due to budget constraints. Similar to the first example, the company only asks, and possibly pays, for rating on the top $1$ or $2$ ranked suggestions, in lieu of using the app, but its performance is judged on the full ranked list and implicit relevance vector.

We cast the online learning to rank problem as an online game between a learner and an adversary, played over time horizon $T$. That is, we do not make any \emph{stochastic} assumption on the relevance vector generation process or the context (features) generation process (in the second problem setting). The adversary is considered to be \emph{oblivious} to the learner's strategies. We separately discuss the two problem settings, and our contributions in each, in greater details.

{\bf Non-contextual setting}: Existing work loosely related to ranking of a fixed set of items to satisfy diverse user preferences \citep{radlinski2008,radlinski2009,agrawal2009,wen2014efficient} has focused on learning an optimal ranking of a subset of items, to be presented to an user, with performance judged by a simple $0$-$1$ loss. The loss in a round is $0$  if among the top $k$ (out of $m$) items presented to a user, the user finds at least one relevant item. All of the work falls under the framework of \emph{online bandit} learning. In contrast, our model focuses on optimal ranking of the entire list of items, where the performance of the system is judged by practical ranking measures like DCG and AP. The challenge is to decide when and how efficient learning is possible with the highly restricted feedback model. Theoretically, the top $k$ feedback model is neither full-feedback nor bandit-feedback since not even the loss (quantified by some ranking measure) at each round is revealed to the learner. The appropriate framework to study the problem is that of \emph{partial monitoring} \citep{cesa2006}.  A very recent paper shows another practical application of partial monitoring in the stochastic setting \citep{lincombinatorial2014}. Recent advances in the classification of partial monitoring games tell us that the minimax regret, in an adversarial setting, is governed by a property of the loss and feedback functions called \emph{observability} \citep{bartok2013, foster2011}, where observability is of two kinds: \emph{local} and \emph{global}. 

{\bf Our contributions}: We instantiate these general observability notions for our problem with top $1$ ($k=1$) feedback. We prove that, for some ranking measures, namely PairwiseLoss \citep{duchi2010}, DCG and Precision@$n$ \citep{liu2007}, global observability holds. This immediately shows that the upper bound on regret scales as $O(T^{2/3})$. Specifically for PairwiseLoss and DCG, we further prove that local observability fails, when restricted to the top $1$ feedback case, illustrating that their \emph{minimax} regret scales as $\Theta(T^{2/3})$. However, the generic algorithm that enjoys $O(T^{2/3})$ regret for globally observable games necessarily maintains explicit weights on each action in learner's action set. It is impractical in our case to do so, since the learner's action set is the exponentially large set of $m!$ rankings over $m$ objects. We propose a generic algorithm for learning with top $k$ feedback, which uses blocking and a black-box full information algorithm. Specifically, we instantiate the black box algorithm with Follow The Perturbed Leader (FTPL) strategy, which leads to an efficient algorithm achieving $O(T^{2/3})$ regret bound for PairwiseLoss, DCG and Precision@$n$, with $O(m \log m)$ time spent per step. Moreover, the regret of our efficient algorithm has a logarithmic dependence on number of learner's actions (i.e., polynomial dependence on $m$), whereas the generic algorithm has a linear dependence on number of actions (i.e., exponential dependence on $m$). 

For several measures, their \emph{normalized} versions are also considered. For example, the normalized versions of PairwiseLoss, DCG and Precision@$n$ are called AUC \citep{cortes2004}, NDCG \citep{jarvelin2002} and AP respectively. We show an unexpected result for the normalized versions: \emph{they do not} admit sub-linear regret algorithms with top $1$ feedback. This is despite the fact that the opposite is true for their unnormalized counterparts! Intuitively, the normalization makes it hard to construct an unbiased estimators of the (unobserved) relevance vectors. Surprisingly, we are able to translate this intuitive hurdle into a provable impossibility. 

We also present some preliminary experiments on simulated datasets to explore the performance of our efficient algorithm and compare its regret to its full information counterpart.

{\bf Contextual Setting}: The requirement of having a fixed set of items to rank, in the first part of our work, somewhat limits practical applicability. In fact, in the classic multi-armed bandit problem, while non-contextual bandits have received a lot of attention, the authors  \cite{langford2008epoch} mention that ``settings with no context information are rare in practice". The second part of our work introduces context, by combining query-level ranking with the explicit but restricted feedback model. At each round, the adversary generates a document list of length $m$, pertaining to a query. The learner sees the list and produces a real valued score vector to rank the documents. We assume that the ranking is generated by sorting the score vector in descending order of its entries. The adversary then generates a relevance vector but, like in the non-contextual setting, the learner gets to see the relevance of only the top $k$ items of the ranked list. The learner's loss in each round, based on the learner's score vector and the \emph{full} relevance vector, is measured by some continuous ranking surrogates. We focus on continuous surrogates, e.g., the cross entropy surrogate in ListNet \citep{cao2007learning} and hinge surrogate in RankSVM \citep{joachims2002}, instead of discontinuous ranking measures like DCG, or AP, because the latter lead to intractable optimization problems in the query-documents setting. Just like in the non-contextual setting, we note that the top $k$ feedback model is neither full feedback nor bandit feedback models. The problem is an instance of partial monitoring, \emph{extended to a setting with side information} (documents list) and an \emph{infinite set of learner's moves} (all real valued score vectors). For such an extension of partial monitoring there exists no generic theoretical or algorithmic framework to the best of our knowledge. 

{\bf Our contributions}: In this setting, first, we propose a general, efficient algorithm for online learning to rank with top $k$ feedback and show that it works in conjunction with a number of ranking surrogates. We characterize the minimum feedback required, i.e., the value of $k$, for the algorithm to work with a particular surrogate by formally relating the feedback mechanism with the structure of the surrogates. We then apply our general techniques to three convex ranking surrogates and one non-convex surrogate. The convex surrogates considered are from three major learning to ranking methods: squared loss from a \emph{pointwise} method \citep{cossock2008statistical}, hinge loss used in the \emph{pairwise} RankSVM \citep{joachims2002} method, and (modified) cross-entropy surrogate used in the \emph{listwise} ListNet \citep{cao2007learning} method. The non-convex surrogate considered is the SmoothDCG surrogate \citep{chapelle2010gradient}. For the three convex surrogates, we establish an $O(T^{2/3})$ regret bound. 

The convex surrogates we mentioned above are widely used but are known to fail to be calibrated with respect to NDCG \citep{ravikumar2011ndcg}. Our second contribution is to show that for the entire class of NDCG calibrated surrogates, no online algorithm can have sub-linear (in $T$) regret with top $1$ feedback, i.e., the minimax regret of an online game for any NDCG calibrated surrogate is $\Omega(T)$. The proof for this result relies on exploiting a connection between the construction of optimal adversary strategies for hopeless \emph{finite action} partial monitoring games \citep{piccolboni2001discrete} and the structure of NDCG calibrated surrogates. We only focus on NDCG calibrated surrogates for the \emph{impossibility} results since no (convex) surrogate can be calibrated for AP and ERR \citep{calauzenes2012non}. This impossibility result is the first of its kind for a natural partial monitoring problem with side information when the learner's action space is infinite. Note, however, that there does exist work on partial monitoring problems with continuous learner actions, but without side information \citep{kleinberg2003value,cesa2006}, and vice versa \citep{bartok2012partial,gentile2014multilabel}.

We apply our algorithms on benchmark ranking datasets, demonstrating the ability to efficiently learn a ranking function in an online fashion, from highly restricted feedback.

The rest of the paper is divided into the following sections. Section~\ref{NC} and its sub-sections detail the notations, definitions and technicalities associated with online ranking with restricted feedback in the non-contextual setting. Section~\ref{C} and its subsections detail the notations, definitions and technicalities associated with online ranking with restricted feedback in the contextual setting. Section~\ref{experiments} demonstrates the performance of our algorithms on simulated and commercial datasets. Section~\ref{conclusion} discusses open questions and future directions of research.

\section{Online Ranking with Restricted Feedback- Non Contextual Setting}
\label{NC}
All proofs not in main text are in Appendix A.

\subsection{Notation and Preliminaries}
\label{preliminaries-NC}
The fixed $m$ items to be ranked are numbered $\{1,2,\ldots,m\}$. A permutation $\sigma$ gives a mapping from ranks to items and its inverse $\sigma^{-1}$ gives a mapping from items to ranks. Thus, $\sigma^{-1}(i)=j$ means item $i$ is placed at position $j$ while $\sigma(i)=j$ means item $j$ is placed at position $i$. The supervision is in form of a relevance vector $R= \{0,1,\ldots,n\}^m$, representing relevance of each document to the query. If $n=1$, the relevance vector is binary graded. For $n>1$, relevance vector is multi-graded. $R(i)$ denotes $i$th component of $R$. \emph{The subscript $t$ is exclusively used to denote time $t$}.  We denote $\{1,\ldots,n\}$ by $[n]$. The learner can choose from $m!$ actions (permutations) whereas nature/adversary can choose from $2^m$ outcomes (when relevance levels are restricted to binary) or from $n^m$ outcomes (when there are $n$ relevance levels, $n>2$). We sometimes refer to the learner's $i$th action (in some fixed ordering of $m!$ available actions) as $\sigma_i$ (resp. adversary's $i$th action as $R_i$). Note that $\sigma_i^{-1}$ simply means that we are viewing permutation $\sigma_i$ as mapping from items to ranks.  Also, a vector can be row or column vector depending on context.

At round $t$, the learner outputs a permutation (ranking) $\sigma_t$ of the objects (possibly using some internal randomization, based on feedback history so far), and simultaneously, adversary generates relevance vector $R_t$. The quality of $\sigma_t$ is judged against $R_t$ by some ranking measure $RL$. \emph{Crucially, only the relevance of the top ranked object, i.e., $R_t(\sigma_t(1))$, is revealed to the learner at end of round $t$}. Thus, the learner gets to know neither $R_t$ (full information problem) nor $RL(\sigma_t,R_t)$ (bandit problem). The objective of the learner is to minimize the expected regret with respect to best permutation in hindsight:
\begin{equation}
\begin{aligned}
\label{eq:objectiveregret}
\E_{\sigma_1, \ldots, \sigma_T}\left[ \sum_{t=1}^T RL(\sigma_t, R_t) \right] - \underset{\sigma}{\min} \ \sum_{t=1}^T RL(\sigma, R_t) .
\end{aligned}
\end{equation}
When $RL$ is a gain, not loss, we need to negate the quantity above.
The worst-case regret of a learner strategy is its maximal regret over all possible choices of $R_1,\ldots,R_T$. The {\bf minimax regret} is the minimal worst-case regret over all learner strategies.


\subsection{Ranking Measures}
\label{rankingmeasures-NC}

We consider ranking measures which can be expressed in the form $f(\sigma) \cdot R$, where the function $f:\mathbb{R}^m \rightarrow \mathbb{R}^m$ is composed of $m$ copies of a univariate, monotonic, scalar valued function. Thus, $f(\sigma)= [f^s(\sigma^{-1}(1)), f^s(\sigma^{-1}(2)), \ldots, f^s(\sigma^{-1}(m))]$, where $f^s: \mathbb{R} \rightarrow \mathbb{R}$. Monotonic (increasing) means $f^s(\sigma^{-1}(i))\ge f^s(\sigma^{-1}(j))$, whenever $\sigma^{-1}(i) > \sigma^{-1}(j)$. Monotonic (decreasing) is defined similarly. 
The following popular ranking measures can be expressed in the form $f(\sigma) \cdot r$.

{\bf PairwiseLoss \& SumLoss}: PairwiseLoss is restricted to binary relevance vectors and defined as: 
\[
PL(\sigma,R)= \sum_{i=1}^m \sum_{j=1}^m \mathbbm{1}(\sigma^{-1}(i) < \sigma^{-1}(j))\mathbbm{1}(R(i) < R(j))
\]
PairwiseLoss cannot be directly expressed in the form of $f(\sigma)\cdot R$. Instead, we consider {\bf SumLoss}, defined as: 
\[
SumLoss(\sigma,R)= \sum_{i=1}^m \sigma^{-1}(i)\ R(i)
\]
SumLoss has the form $f(\sigma) \cdot R$, where $f(\sigma)=\sigma^{-1}$. 
It has been shown by \citet{ailon2014} that regret under the two measures are equal:
\begin{equation}
\label{rankloss-sumloss}
\sum_{t=1}^T PL(\sigma_t, R_t) -  \sum_{t=1}^T PL(\sigma, R_t) = \sum_{t=1}^T SumLoss(\sigma_t, R_t) - \sum_{t=1}^T SumLoss(\sigma,R_t) .
\end{equation}

{\bf Discounted Cumulative Gain}: DCG is a gain function which admits non-binary relevance vectors and is defined as: 
\[
DCG(\sigma,R)= \sum_{i=1}^m \frac{2^{R(i)}-1}{\log_2(1+ \sigma^{-1}(i))}
\]
and becomes $\sum_{i=1}^m \frac{R(i)}{\log_2(1+ \sigma^{-1}(i))}$ for $R (i) \in \{0,1\}$. 
Thus, for binary relevance, $DCG(\sigma,R)$  has the form $f(\sigma) \cdot R$, where $f(\sigma)= [\frac{1}{\log_2(1+ \sigma^{-1}(1))}, \frac{1}{\log_2(1+ \sigma^{-1}(2))}, \ldots, \frac{1}{\log_2(1+ \sigma^{-1}(m))}]$.

{\bf Precision@n Gain}: Precision@$n$ is a gain function restricted to binary relevance and is defined as
\[
Precision@n(\sigma,R)= \sum_{i=1}^m \mathbbm{1}(\sigma^{-1}(i) \le n)\ R(i)
\]
Precision@$n$ can be written as $f(\sigma) \cdot R$ where $f(\sigma)=  [\mathbbm{1}(\sigma^{-1}(1)<n), \ldots, \mathbbm{1}(\sigma^{-1}(m)<n)]$. It should be noted that for $n=k$ (i.e., when feedback is on top $n$ items), feedback is actually the same as full information feedback, for which efficient algorithms already exist. 

{\bf Normalized measures are not of the form ${\bf f(\sigma) \cdot R}$}: PairwiseLoss, DCG and Precision@$n$ are unnormalized versions of popular ranking measures, namely, Area Under Curve (AUC), Normalized Discounted Cumulative Gain (NDCG) and Average Precision (AP) respectively.  None of these can be expressed in the form $f(\sigma) \cdot R$.

{\bf NDCG}: NDCG is a gain function, admits non-binary relevance and is defined as:
\[
NDCG(\sigma,R)= \frac{1}{Z(R)}\sum_{i=1}^m \frac{2^{R(i)}-1}{\log_2(1+ \sigma^{-1}(i))}
\]
and becomes $\frac{1}{Z(R)} \sum_{i=1}^m \frac{R(i)}{\log_2(1+ \sigma^{-1}(i))}$ for $R(i) \in \{0,1\}$.  Here $Z(R)= \underset{\sigma}{\max} \sum_{i=1}^m  \frac{2^{R(i)}-1}{\log_2(1+ \sigma^{-1}(i))}$ is the normalizing factor ($Z(R)= \underset{\sigma}{\max} \sum_{i=1}^m  \frac{R(i)}{\log_2(1+ \sigma^{-1}(i))}$ for binary relevance). It can be clearly seen that $NDCG(\sigma, R)= f(\sigma) \cdot g(R)$, where $f(\sigma)$ is same as in DCG but $g(R)= \frac{R}{Z(R)}$ is non-linear in $R$. 

{\bf AP}: AP is a gain function, restricted to binary relevance and is defined as: 
\[
AP(\sigma,R)= \frac{1}{\|R\|_1}\sum\limits_{\substack{i=1}}^m \frac{\sum\limits_{j\le i} \mathbbm{1}(R(\sigma(j))=1)}{i} \mathbbm{1}(R(\sigma(i)=1)
\]
It can be clearly seen that AP cannot be expressed in the form $f(\sigma) \cdot R$. 

{\bf AUC}: AUC is a loss function, restricted to binary relevance and is defined as:
\[
AUC(\sigma,R)= \frac{1}{N(R)} \sum_{i=1}^m \sum_{j=1}^m \mathbbm{1}(\sigma^{-1}(i) < \sigma^{-1}(j))\mathbbm{1}(R(i) < R(j))
\]
where $N(R)=  (\sum_{i=1}^m \mathbbm{1}(R(i)=1)) \cdot (m-\sum_{i=1}^m \mathbbm{1}(R(i)=1))$. It can be clearly seen that AUC cannot be expressed in the form $f(\sigma) \cdot R$. 

{\bf Note}: We will develop our subsequent theory and algorithms for binary valued relevance vectors, and show how they can be extended to multi-graded vectors when ranking measure is DCG/NDCG.

%
%
%
%

\subsection{Relevant Definitions from Partial Monitoring}
\label{partialmonitoring-NC}
We develop all results in context of SumLoss, with binary relevance vector. We then extend the results to other ranking measures. 
Our main results on regret bounds build on some of the theory for abstract partial monitoring games developed by \citet{bartok2013} and \citet{foster2011}.  For ease of understanding, we reproduce the relevant notations and definitions in context of SumLoss. \emph{We will specifically mention when we derive results for top $k$ feedback, with general $k$, and when we restrict to top $1$ feedback}.

{\bf Loss and Feedback Matrices}: The online learning game with the SumLoss measure and top $1$ feedback can be expressed in form of a pair of \emph{loss matrix} and \emph{feedback matrix}. The \emph{loss matrix} $L$ is an $m! \times 2^m$ dimensional matrix, with rows indicating the learner's actions (permutations) and columns representing adversary's actions (relevance vectors). The entry in cell $(i,j)$ of $L$ indicates loss suffered when learner plays action $i$ (i.e., $\sigma_i$) and adversary plays action $j$ (i.e., $R_j$), that is, $L_{i,j}= \sigma_i^{-1} \cdot R_j= \sum_{k=1}^m \sigma^{-1}_i(k)R_j(k)$. The \emph{feedback matrix} $H$ has same dimension as \emph{loss matrix}, with $(i,j)$ entry being the relevance of top ranked object, i.e., $H_{i,j}= R_j(\sigma_i(1))$. When the learner plays action $\sigma_i$ and adversary plays action $R_j$, the true loss is $L_{i,j}$, while the feedback received is $H_{i,j}$. 

Table \ref{lossmatrix-table} and \ref{feedbackmatrix-table} illustrate the matrices, with number of objects $m=3$. In both the tables, the permutations indicate rank of each object and relevance vector indicates relevance of each object. For example, $\sigma_5= 3 1 2$ means object $1$ is ranked $3$, object $2$ is ranked $1$ and object $3$ is ranked 2. $R_5=100$ means object $1$ has relevance level $1$ and other two objects have relevance level $0$. Also, $L_{3,4}= \sigma_3 \cdot R_4= \sum_{i=1}^3 \sigma_3^{-1}(i) R_4(i)= 2\cdot 0 + 1 \cdot 1 + 3 \cdot 1= 4$;  $ H_{3,4}= R_4(\sigma_3(1))= R_4(2)= 1$. Other entries are computed similarly.

\begin{table}[t] 
\caption{Loss matrix $L$ for $m=3$} 
\label{lossmatrix-table}
\begin{center}
\tabcolsep=0.110cm
\begin{tabular}{c c c c c c c c c}  
\hline 
Objects & $R_1$ & $R_2$ & $R_3$ & $R_4$ & $R_5$ & $R_6$ & $R_7$ & $R_8$ \\ [0.4ex] 
\hline 123 & 000 & 001 & 010 & 011 & 100 & 101 & 110 & 111\\ [0.4ex]
\hline 
$\sigma_1= 1 2 3$ & 0 & 3 & 2 & 5 & 1 & 4 & 3 & 6 \\ 
$\sigma_2= 1 3 2$ & 0 & 2 & 3 & 5 & 1 & 3 & 4 & 6 \\ 
$\sigma_3= 2 1 3$ & 0 & 3 & 1 & 4 & 2 & 5 & 3 & 6  \\ 
$\sigma_4= 2 3 1$ & 0 & 1 & 3 & 4 & 2 & 3 & 5 & 6 \\ 
$\sigma_5= 3 1 2$ & 0 & 2 & 1 & 3 & 3 & 5 & 4 & 6 \\
$\sigma_6= 3 2 1$ & 0 & 1 & 2 & 3 & 3 & 4 & 5 & 6 \\
\hline 
\end{tabular} 
\end{center}
\end{table}


\begin{table}[t] 
\caption{Feedback matrix $H$ for $m=3$} 
\label{feedbackmatrix-table}
\begin{center}
\tabcolsep=0.110cm
\begin{tabular}{c c c c c c c c c}  
\hline 
Objects & $R_1$ & $R_2$ & $R_3$ & $R_4$ & $R_5$ & $R_6$ & $R_7$ & $R_8$ \\ [0.4ex] 
\hline 123 & 000 & 001 & 010 & 011 & 100 & 101 & 110 & 111\\ [0.4ex]
\hline 
$\sigma_1= 1 2 3$ & 0 & 0 & 0 & 0 & 1 & 1 & 1 & 1 \\ 
$\sigma_2= 1 3 2$ & 0 & 0 & 0 & 0 & 1 & 1 & 1 & 1 \\ 
$\sigma_3= 2 1 3$ & 0 & 0 & 1 & 1 & 0 & 0 & 1 & 1  \\ 
$\sigma_4= 2 3 1$ & 0 & 1 & 0 & 1 & 0 & 1 & 0 & 1 \\ 
$\sigma_5= 3 1 2$ & 0 & 0 & 1 & 1 & 0 & 0 & 1 & 1 \\
$\sigma_6= 3 2 1$ & 0 & 1 & 0 & 1 & 0 & 1 & 0 & 1 \\
\hline 
\end{tabular} 
\end{center}
\end{table}

Let $\ell_i \in \mathbb{R}^{2^m}$ denote row $i$ of $L$. Let $\Delta$ be the probability simplex in $\mathbb{R}^{2^m}$, i.e., $\Delta= \{ p \in \mathbb{R}^{2^m}: \forall \ 1 \le i \le 2^m, \ p_i \ge 0, \ \sum p_i = 1\}$.  
The following definitions, given for abstract problems by \cite{bartok2013}, has been refined to fit our problem context.

{\bf Definition 1}: Learner action $i$ is called optimal under distribution $p \in \Delta$, if $\ell_i \cdot p \le \ell_ j \cdot p$, for all other learner actions $1 \le j \le m!, \ j \neq i$.  For every action $i \in [m!]$, probability cell of $i$ is defined as $C_i =\{ p \in \Delta: \text{action } i \text{ is optimal under } p\}$. If a non-empty cell $C_i$ is $2^m-1$ dimensional (i.e, elements in $C_i$ are defined by only 1 equality constraint), then associated action $i$ is called \emph{Pareto-optimal}.

Note that since entries in $H$ are relevance levels of objects, there can be maximum of $2$ distinct elements in each row of $H$, i.e., $0$ or $1$ (assuming binary relevance). 

{\bf Definition 2}: The \emph{signal matrix} $S_i$, associated with learner's action $\sigma_i$, is a matrix with 2 rows and $2^m$ columns, with each entry $0$ or $1$, i.e., $S_i \in \{0,1\}^{2 \times 2^m}$. The entries of $\ell$th column of $S_i$ are respectively: $(S_i)_{1,\ell}= \mathbbm{1}(H_{i,\ell}=0)$ and $(S_i)_{2,\ell}= \mathbbm{1}(H_{i,\ell}=1)$. 

Note that by definitions of signal and feedback matrices, the 2nd row of $S_i$ (2nd column of $S^{\top}_i)$) is precisely the $i$th row of $H$. The 1st row of $S_i$ (1st column of $S^{\top}_i)$) is the (boolean) complement of $i$th row of $H$.

\subsection{Minimax Regret for SumLoss}
The minimax regret for SumLoss, restricted to top $1$ feedback, will be established by showing that: a) SumLoss satisfies \emph{global observability}, and b) it does not satisfy \emph{local observability}. 

\subsubsection{Global Observability}
\label{global-NC}
 
{\bf Definition 3}: The condition of \emph{global observability} holds, w.r.t. loss matrix $L$ and feedback matrix $H$, if for every pair of learner's actions $\{\sigma_i,\sigma_j\}$, it is true that $\ell_i -\ell_j \in \oplus_{k \in [m!]} Col(S_k^{\top})$ (where $Col$ refers to column space). 

The global observability condition states that the (vector) loss difference between any pair of learner's actions has to belong to the vector space spanned by columns of (transposed) signal matrices corresponding to all possible learner's actions.
We derive the following theorem on global observability for $SumLoss$.

\begin{thm}
\label{globalobservability}
The global observability condition, as per Definition 3, holds w.r.t. loss matrix  $L$ and feedback matrix $H$ defined for SumLoss, for any $m \ge 1$.
\end{thm}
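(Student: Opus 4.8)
The plan is to exploit the linear structure of SumLoss directly and show that every loss \emph{row}—not merely the differences—already lies in the span of the feedback columns. The key observation is that the column space of each transposed signal matrix $S_k^\top$ contains one very explicit vector. Recall from the remark after Definition 2 that the second row of $S_k$ equals the $k$th row of $H$; since relevances are binary we have $\mathbbm{1}(H_{k,\ell}=1) = H_{k,\ell} = R_\ell(\sigma_k(1))$, so the second column of $S_k^\top$ is exactly the vector $v_k \in \mathbb{R}^{2^m}$ whose $\ell$th entry is $R_\ell(\sigma_k(1))$, i.e. the relevance of the object ranked first by $\sigma_k$, read across all $2^m$ adversary actions. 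Thus $v_k \in Col(S_k^\top)$ for every $k \in [m!]$.

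Next I would record the crucial surjectivity fact: as $k$ ranges over all $m!$ permutations, the top-ranked object $\sigma_k(1)$ ranges over the entire set $[m]$, since for each object $a$ there is some permutation placing $a$ at rank $1$. Writing $e_a \in \mathbb{R}^{2^m}$ for the vector $(R_\ell(a))_\ell$—the $a$th coordinate of every relevance vector—this shows $e_a \in \oplus_{k \in [m!]} Col(S_k^\top)$ for each $a \in [m]$. The final step is then a one-line decomposition of the loss rows: by definition $(\ell_i)_\ell = \sigma_i^{-1}\cdot R_\ell = \sum_{a=1}^m \sigma_i^{-1}(a)\, R_\ell(a)$, so as vectors in $\mathbb{R}^{2^m}$ we have $\ell_i = \sum_{a=1}^m \sigma_i^{-1}(a)\, e_a$, a fixed linear combination of the coordinate vectors with coefficients equal to the ranks assigned by $\sigma_i$. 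Consequently $\ell_i - \ell_j = \sum_{a=1}^m \bigl(\sigma_i^{-1}(a) - \sigma_j^{-1}(a)\bigr)\, e_a$ lies in $\mathrm{span}\{e_1,\ldots,e_m\} \subseteq \oplus_k Col(S_k^\top)$, establishing global observability for any $m \ge 1$.

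There is essentially no analytic obstacle here; the content is structural, and the main point requiring care is the identification of $Col(S_k^\top)$ with the explicit signal vectors $v_k$—in particular using binarity to equate the indicator row with the raw relevance row—combined with the surjectivity of $k \mapsto \sigma_k(1)$ onto $[m]$. Together these guarantee that every coordinate vector $e_a$ is observable from the top-$1$ feedback alone, after which the linearity of SumLoss in $R$ does the rest. I would also note that the argument uses nothing about $k=1$ beyond the fact that the rank-$1$ relevance is revealed, so the same decomposition shows global observability persists under any top-$k$ feedback.
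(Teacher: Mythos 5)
Your proof is correct and is essentially the paper's own argument: the paper likewise identifies the second column of each $S_k^\top$ with the vector of relevances of the top-ranked object of $\sigma_k$ and writes $\ell_a$ as a linear combination of these columns (summing over ranks $j$ via auxiliary permutations $\tilde{\sigma}_{j(a)}$ with $\tilde{\sigma}_{j(a)}(1)=\sigma_a(j)$, which is just your sum $\sum_a \sigma_i^{-1}(a)\,e_a$ re-indexed). Your packaging via the coordinate vectors $e_a$ and the surjectivity of $k\mapsto\sigma_k(1)$ is a slightly cleaner presentation of the same decomposition, and your closing remark that the argument survives under top-$k$ feedback matches the paper's discussion as well.
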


\begin{proof}
For any $\sigma_a$ (learner's action) and $R_b$ (adversary's action), we have 
\begin{equation*}
\begin{aligned}
L_{a,b}=\sigma_a^{-1} \cdot R_b= \sum_{i=1}^m \sigma_a^{-1}(i)R_b(i) \stackrel{1}{=} \sum_{j=1}^ m j\  R_b(\sigma_a(j)) \stackrel{2}{=}  \sum_{j=1}^m j\ R_b(\tilde{\sigma}_{j(a)} (1)) \stackrel{3}{=} \sum_{j=1}^m j \ (S^{\top}_{\tilde{\sigma}_{j(a)}})_{R_b,2}.\\
\end{aligned}
\end{equation*}
Thus, we have
\begin{equation*}
\begin{aligned}
& \ell_a = [L_{a,1}, L_{a,2},\ldots, L_{a,2^m}] = \\
&  [\sum_{j=1}^m j \ (S^{\top}_{\tilde{\sigma}_{j(a)}})_{R_1,2}, \sum_{j=1}^m j \ (S^{\top}_{\tilde{\sigma}_{j(a)}})_{R_2,2}, .., \sum_{j=1}^m j \ (S^{\top}_{\tilde{\sigma}_{j(a)}})_{R_{2^m},2}] \stackrel{4}{=} \sum_{j=1}^m j \ (S^{\top}_{\tilde{\sigma}_{j(a)}})_{:,2} .
\end{aligned}
\end{equation*}

Equality 4 shows that $\ell_a$ is in the column span of $m$ of the $m!$ possible (transposed) signal matrices, specifically in the span of the 2nd columns of those (transposed) $m$ matrices. 
Hence, for all actions $\sigma_a$, it is holds that $ \ell_a \in { \oplus}_{k \in [m!]} Col(S_k^{\top})$. This implies that $\ell_a - \ell_b \in \oplus_{k \in [m!]} Col(S_k^{\top}), \ \forall \ \sigma_a,\sigma_b. $\\

1. Equality 1 holds because $\sigma_a^{-1}(i)=j \Rightarrow i= \sigma_a(j)$. 

2. Equality 2 holds because of the following reason. For any permutation $\sigma_a$ and for every $j \in [m]$, $\exists$ a permutation $ \tilde{\sigma}_{j(a)}$, s.t. the object which is assigned rank $j$ by $\sigma_a$ is the same object assigned rank $1$ by $\tilde{\sigma}_{j(a)}$, i.e., $\sigma_a(j)= \tilde{\sigma}_{j(a)}(1)$.

3. In Equality 3, $(S^{\top}_{\tilde{\sigma}^{-1}_{j(a)}})_{R_b,2}$ indicates the $R_b$th row and 2nd column of (transposed) signal matrix $S_{\tilde{\sigma}_{j(a)}}$, corresponding to learner action $\tilde{\sigma}_{j(a)}$. Equality 3 holds because $R_b(\tilde{\sigma}_{j(a)}(1))$ is the entry in the row corresponding to action $\tilde{\sigma}_{j(a)}$ and column corresponding to action $R_b$ of $H$ (see Definition 2). 

4. Equality 4 holds from the observation that for a particular $j$, $[(S^{\top}_{\tilde{\sigma}_{j(a)}})_{R_1,2}, (S^{\top}_{\tilde{\sigma}_{j(a)}})_{R_2,2}, \ldots,\\ (S^{\top}_{\tilde{\sigma}_{j(a)}})_{R_{2^m},2}]$ forms the 2nd column of  $(S^{\top}_{\tilde{\sigma}_{j(a)}})$, i.e.,  $(S^{\top}_{\tilde{\sigma}_{j(a)}})_{:,2}$.
\end{proof}

\subsubsection{Local Observability}
\label{local-NC}

{\bf Definition 4}: Two Pareto-optimal (learner's) actions $i$ and $j$ are called \emph{neighboring actions} if $C_i \cap C_j$ is a $(2^m-2)$ dimensional polytope (where $C_i$ is probability cell of action $\sigma_i$). The \emph{neighborhood action set} of two neighboring (learner's) actions $i$ and $j$ is defined as $N^{+}_{i,j}= \{ k \in [m!]: C_i \cap C_j \subseteq C_k\}$. 

{\bf Definition 5}: A pair of neighboring (learner's) actions $i$ and $j$ is said to be locally observable if  $\ell_i -\ell_j \in \oplus_{k \in N^{+}_{i,j}} Col(S_k^{\top})$. The condition of \emph{local observability} holds if every pair of neighboring (learner's) actions is locally observable. 

We now show that local observability condition fails for $L, H$ under SumLoss. First, we present the following two lemmas characterizing Pareto-optimal actions and neighboring actions for SumLoss.

\begin{lem}
\label{pareto-optimal}
For SumLoss, each of learner's action $\sigma_i$ is Pareto-optimal, where Pareto-optimality has been defined in Definition 1.
\end{lem}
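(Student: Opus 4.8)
The plan is to exploit the fact that the expected loss under a distribution $p \in \Delta$ depends on $p$ only through the vector of \emph{expected relevances}. Writing $\bar{R}(k) = \sum_{j} p_j R_j(k) = \E_{R \sim p}[R(k)]$, a direct computation gives $\ell_i \cdot p = \sum_j p_j L_{i,j} = \sum_j p_j \sum_k \sigma_i^{-1}(k) R_j(k) = \sum_{k=1}^m \sigma_i^{-1}(k)\, \bar{R}(k)$. Thus, determining whether action $\sigma_i$ is optimal under $p$ reduces to the question of whether the permutation $\sigma_i$ minimizes $\sum_k (\text{rank of }k)\cdot \bar{R}(k)$, a pure sorting problem governed by $\bar{R} \in [0,1]^m$.

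First I would invoke the rearrangement inequality: since $(\sigma^{-1}(1),\ldots,\sigma^{-1}(m))$ ranges over all permutations of $\{1,\ldots,m\}$ as $\sigma$ varies, the sum $\sum_k \sigma^{-1}(k)\,\bar{R}(k)$ is minimized by pairing the smallest rank with the largest expected relevance, i.e. by sorting items in decreasing order of $\bar{R}$. In particular, if the coordinates of $\bar{R}$ are all distinct, the minimizer is \emph{unique}, and $\sigma_i$ is that unique minimizer exactly when $\bar{R}(\sigma_i(1)) > \bar{R}(\sigma_i(2)) > \cdots > \bar{R}(\sigma_i(m))$. Next I would show that any such strict ordering is realizable by a full-support distribution: because the binary relevance vectors are precisely the vertices of the cube $\{0,1\}^m$, the map $p \mapsto \E_p[R]$ sends $\Delta$ onto the whole hypercube $[0,1]^m$, and for any target $\bar{R} \in (0,1)^m$ one may take the product (independent Bernoulli) distribution assigning coordinate $k$ relevance $1$ with probability $\bar{R}(k)$, which has positive mass on all $2^m$ outcomes. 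Choosing such a target with $\bar{R}(\sigma_i(1)) > \cdots > \bar{R}(\sigma_i(m))$ exhibits at least one $p$ under which $\sigma_i$ is strictly optimal.

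Finally I would upgrade this single point to full dimension. The set $\{p \in \Delta : \ell_i \cdot p < \ell_j \cdot p \text{ for all } j \neq i\}$ is the intersection of finitely many open half-spaces with the affine hull of $\Delta$; since it is nonempty by the previous step and the affine hull of $\Delta$ has dimension $2^m - 1$, this set is relatively open and therefore $2^m - 1$ dimensional. As it is contained in $C_i$, the cell $C_i$ is itself $2^m - 1$ dimensional (constrained only by the single equality $\sum p_i = 1$), so by Definition~1 the action $\sigma_i$ is Pareto-optimal; the argument applies verbatim to every permutation. I expect the main obstacle to be the realizability step, namely establishing cleanly that every strict ordering of the expected relevances arises from some (full-support) $p$ over the $2^m$ binary vectors and that the resulting region of strict optimality is genuinely full-dimensional rather than lying on a lower-dimensional face; the rearrangement and dimension-counting pieces are then routine.
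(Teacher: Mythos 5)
Your proposal is correct and follows essentially the same route as the paper: reduce $\ell_i \cdot p$ to $\sigma_i^{-1}\cdot \E_p[R]$, apply the rearrangement inequality to identify $C_i$ as the set of distributions whose expected relevances are sorted consistently with $\sigma_i$, and conclude that this cell is $2^m-1$ dimensional. The only difference is that you make explicit two points the paper treats as obvious --- exhibiting a full-support product-Bernoulli distribution realizing any strict ordering of expected relevances, and the open-half-space argument showing the region of strict optimality is genuinely full-dimensional --- which is a welcome tightening rather than a new approach.
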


\begin{proof}
For any $p \in \Delta$, we have $\ell_i \cdot p= \sum_{j=1}^{2^m}\  p_j \ (\sigma^{-1}_i \cdot R_j) = \sigma^{-1}_i \cdot (\sum_{j=1}^{2^m} p_j R_j)= \sigma^{-1}_i \cdot \E[R]$, where the expectation is taken w.r.t. $p$. By dot product rule between 2 vectors, $l_i \cdot p$ is minimized when ranking of objects according to $\sigma_i$ and expected relevance of objects are in opposite order. That is, the object with highest expected relevance is ranked $1$ and so on. Formally, $l_i \cdot p$ is minimized when $\E[R(\sigma_i(1))] \ge \E[R(\sigma_i(2))] \ge \ldots \ge \E[R(\sigma_i(m))]$. 

Thus, for action $\sigma_i$, probability cell is defined as $C_i= \{p \in \Delta: \sum_{j=1}^{2^m} p_j =1, \ \E[R(\sigma_i(1))] \ge \E[R(\sigma_i(2))] \ge \ldots \ge \E[R(\sigma_i(m))]\}$. Note that, $p \in C_i$ iff action $i$ is optimal w.r.t. $p$.  Since $C_i$ is obviously non-empty and it has only 1 equality constraint (hence $2^m -1$ dimensional), action $i$ is Pareto optimal. 

The above holds true for all learner's actions $\sigma_i$. 
\end{proof}

\begin{lem}
\label{neighbor-actions}
A pair of learner's actions $\{\sigma_i, \ \sigma_j\}$ is a neighboring actions pair, if there is exactly one pair of objects, numbered \{a, b\}, whose positions differ in $\sigma_i$ and $\sigma_j$. Moreover, $a$ needs to be placed just before $b$ in $\sigma_i$ and $b$ needs to placed just before $a$ in $\sigma_j$. 
\end{lem}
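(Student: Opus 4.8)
The plan is to build directly on the explicit description of probability cells obtained in Lemma~\ref{pareto-optimal}, namely that for any learner action $\sigma_i$,
\[
C_i = \{ p \in \Delta : \E[R(\sigma_i(1))] \ge \E[R(\sigma_i(2))] \ge \ldots \ge \E[R(\sigma_i(m))] \},
\]
where the expectation is with respect to $p$ and each $\E[R(\ell)] = \sum_{s=1}^{2^m} p_s R_s(\ell)$ is a linear functional of $p$. Since every action is Pareto-optimal by Lemma~\ref{pareto-optimal}, to verify Definition 4 it suffices to show that $C_i \cap C_j$ is a $(2^m-2)$-dimensional polytope.

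First I would compute the intersection explicitly. Writing $a = \sigma_i(r)$ and $b = \sigma_i(r+1)$ for the adjacent ranks $r, r+1$ at which the two permutations disagree, the chains defining $C_i$ and $C_j$ are identical except that $C_i$ imposes $\E[R(a)] \ge \E[R(b)]$ while $C_j$ imposes $\E[R(b)] \ge \E[R(a)]$; all remaining inequalities coincide because the objects at every other rank agree in $\sigma_i$ and $\sigma_j$. Hence the two opposing inequalities collapse to the single equality $\E[R(a)] = \E[R(b)]$, and
\[
C_i \cap C_j = \{ p \in \Delta : \E[R(\sigma_i(1))] \ge \ldots \ge \E[R(\sigma_i(r-1))] \ge \E[R(a)] = \E[R(b)] \ge \E[R(\sigma_i(r+2))] \ge \ldots \ge \E[R(\sigma_i(m))] \}.
\]
This set lies in the affine subspace cut out by the two equalities $\sum_s p_s = 1$ and $\sum_s p_s (R_s(a) - R_s(b)) = 0$, which are linearly independent: the vector $(R_s(a)-R_s(b))_s$ takes values in $\{-1,0,1\}$ and is non-constant (some column has $R_s(a)=1, R_s(b)=0$, another has $R_s(a)=R_s(b)=0$), hence it is not a multiple of the all-ones vector. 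Thus the subspace has dimension $2^m - 2$, giving the upper bound $\dim(C_i \cap C_j) \le 2^m-2$ for free.

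The crux is to show that $C_i \cap C_j$ is \emph{full}-dimensional inside this subspace, i.e.\ that it has nonempty relative interior. I would exhibit one relative-interior point. Choose target expected relevances $\mu_\ell \in (0,1)$ that are strictly decreasing along the order induced by $\sigma_i$, except that the coordinates for $a$ and $b$ are set equal. Such a target is realized by a distribution $p^*$ with all $2^m$ coordinates strictly positive, namely the product (independent Bernoulli) distribution $p^*_s = \prod_{\ell=1}^m \mu_\ell^{R_s(\ell)} (1-\mu_\ell)^{1-R_s(\ell)}$, whose marginals are exactly the chosen $\mu_\ell$ and whose entries are all positive since each $\mu_\ell \in (0,1)$. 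At $p^*$ every chain inequality other than the $a,b$ one is strict and every $p^*_s > 0$, so a small neighborhood of $p^*$ within the $(2^m-2)$-dimensional subspace still satisfies all constraints; this certifies $\dim(C_i \cap C_j) = 2^m - 2$, so by Definition 4 the actions $\sigma_i, \sigma_j$ are neighboring.

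I expect the main obstacle to be precisely this last full-dimensionality step: writing $C_i \cap C_j$ as an intersection with a hyperplane only yields the upper bound, and one must rule out a degenerate collapse to lower dimension, which is exactly what the positive-probability product construction guarantees. The same computation also explains why adjacency is essential: a non-adjacent transposition of $a$ and $b$ would sandwich other objects between them in both chains and force equality of all the intermediate expected relevances as well, producing more than two equality constraints and hence a face of dimension strictly below $2^m-2$.
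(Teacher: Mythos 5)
Your proof is correct and follows essentially the same route as the paper: compute $C_i \cap C_j$ from the chains of expected-relevance inequalities, observe that the two opposing adjacent inequalities collapse to the single equality $\E[R(a)]=\E[R(b)]$, and conclude the intersection is a $(2^m-2)$-dimensional polytope. The only difference is that you additionally verify linear independence of the two equality constraints and exhibit a relative-interior point via a product Bernoulli distribution — a full-dimensionality certificate the paper simply asserts rather than proves — which strengthens rather than changes the argument.
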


\begin{proof}
From Lemma \ref{pareto-optimal}, we know that every one of learner's actions is Pareto-optimal and $C_i$, associated with action $\sigma_i$, has structure $C_i= \{p \in \Delta: \sum_{j=1}^{2^m} p_j =1, \ \E[R(\sigma_i(1))] \ge \E[R(\sigma_i(2))] \ge \ldots \ge \E[R(\sigma_i(m))]\}$.

Let $\sigma_i(k)= a, \ \sigma_i(k+1)=b$. Let it also be true that $\sigma_j(k)= b, \ \sigma_j(k+1)=a$ and $\sigma_i(n)= \sigma_j(n), \ \forall n \neq \{k, \ k+1\}$. Thus, objects in $\{\sigma_i,\sigma_j\}$ are same in all places except in a pair of consecutive places where the objects are interchanged.

Then, $C_i \cap C_j= \{p \in \Delta: \sum_{j=1}^{2^m} p_j =1, \ \E[R(\sigma_i(1)] \ge \ldots \ge \E[R(\sigma_i(k)] =  \E[R(\sigma_i(k+1)] \ge \ldots \ge \E[R(\sigma_i(m)]\}$. Hence, there are two equalities in the non-empty set $C_i \cap C_j$ and it is an $(2^m -2)$ dimensional polytope. Hence condition of Definition 4 holds true and $\{\sigma_i,\sigma_j\}$ are neighboring actions pair.

\end{proof}

Lemma \ref{pareto-optimal} and \ref{neighbor-actions} lead to following result.
\begin{thm}
\label{localobservability}
The local observability condition, as per Definition 5, fails w.r.t. loss matrix $L$ and feedback matrix $H$ defined for SumLoss, already at $m=3$.
\end{thm}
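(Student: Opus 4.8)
The plan is to prove failure of local observability by exhibiting a \emph{single} neighboring pair, already at $m=3$, that violates the condition of Definition 5. By Lemma \ref{neighbor-actions}, neighboring pairs are exactly those differing by a swap of two objects occupying consecutive positions. The crucial choice is to swap two objects in the \emph{bottom} two positions, so that the top-ranked object---and hence the feedback---is identical for both actions. Concretely I would take $\sigma_1 = 123$ and $\sigma_2 = 132$, which agree that object $1$ is ranked first and differ only by interchanging objects $2$ and $3$ in positions $2$ and $3$; Lemma \ref{neighbor-actions} certifies this is a neighboring pair.

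The next step is to pin down the neighborhood action set $N^{+}_{1,2}$. Using the cell description from Lemma \ref{pareto-optimal}, $C_1 \cap C_2 = \{p \in \Delta : \E[R(1)] \ge \E[R(2)] = \E[R(3)]\}$. On the relative interior of this polytope one has $\E[R(1)] > \E[R(2)] = \E[R(3)]$, so any action optimal throughout must rank object $1$ first; the only such actions are $\sigma_1$ and $\sigma_2$. Hence $N^{+}_{1,2} = \{1,2\}$, and this is the key structural fact: the neighborhood set contains \emph{only} the two actions being compared.

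I would then collapse the relevant column space. Since the two columns of $S_k^{\top}$ are the $0$-indicator and $1$-indicator of the $k$th row of $H$ and sum to the all-ones vector, $\mathrm{Col}(S_k^{\top}) = \mathrm{span}\{\mathbf{1}, h_k\}$, where $h_k$ is row $k$ of $H$. Because $\sigma_1$ and $\sigma_2$ share the same top object, rows $1$ and $2$ of $H$ coincide (both equal $(0,0,0,0,1,1,1,1)^{\top}$ in the ordering of Table \ref{feedbackmatrix-table}), so $\oplus_{k \in N^{+}_{1,2}} \mathrm{Col}(S_k^{\top})$ collapses to the single $2$-dimensional space $\mathrm{span}\{\mathbf{1}, h\}$ with $h = (0,0,0,0,1,1,1,1)^{\top}$. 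Reading $\ell_1$ and $\ell_2$ off the loss matrix (Table \ref{lossmatrix-table}) gives $\ell_1 - \ell_2 = (0,1,-1,0,0,1,-1,0)$. Every vector in $\mathrm{span}\{\mathbf{1}, h\}$ is constant across the coordinates indexed by $R_2$ and $R_3$ (where both $\mathbf{1}$ and $h$ are constant), whereas $\ell_1 - \ell_2$ takes the values $1$ and $-1$ there; hence $\ell_1 - \ell_2 \notin \oplus_{k \in N^{+}_{1,2}} \mathrm{Col}(S_k^{\top})$, so this neighboring pair is not locally observable and local observability fails.

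The main obstacle---really the only delicate point---is the computation of $N^{+}_{1,2}$: one must verify that no third action $\sigma_k$ satisfies $C_1 \cap C_2 \subseteq C_k$, since if some such action with a \emph{different} top object existed, its $H$-row could enlarge the column space enough to recapture $\ell_1 - \ell_2$. The conceptual reason the argument goes through is exactly the top-$1$ feedback restriction: a swap confined to the lower positions is invisible to the feedback generated by either action, and because those are the only two actions in the neighborhood, the information needed to separate their losses is simply unavailable locally---even though Theorem \ref{globalobservability} guarantees it is available globally.
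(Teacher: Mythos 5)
Your proposal is correct and follows essentially the same route as the paper: the same neighboring pair $\{\sigma_1,\sigma_2\}$, the same argument that no other action can contain $C_1 \cap C_2$ (so $N^{+}_{1,2}=\{1,2\}$), and the same verification that $\ell_1-\ell_2=(0,1,-1,0,0,1,-1,0)$ lies outside $Col(S_{\sigma_1}^{\top})$. Your explicit observation that the column space equals $\mathrm{span}\{\mathbf{1},h\}$ and is constant on the coordinates where $\ell_1-\ell_2$ is not is a slightly more detailed justification of the final step, which the paper leaves as "clear."
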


\subsection{Minimax Regret Bound}
\label{minimax-NC}

We establish the minimax regret for SumLoss by combining results on global and local observability. First, we get a lower bound by combining our Theorem~\ref{localobservability} with Theorem 4 of \cite{bartok2013}.
\begin{cor}
\label{sumlossminimaxregret}
Consider the online game for SumLoss with top-$1$ feedback and $m=3$. Then, for every learner's algorithm, there is an adversary strategy generating relevance vectors, such that the expected regret of the learner is $\Omega(T^{2/3})$.
\end{cor}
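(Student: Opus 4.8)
The plan is to derive the $\Omega(T^{2/3})$ lower bound as an essentially immediate consequence of the structural facts already established in this section—namely that SumLoss with top-$1$ feedback is globally observable (Theorem~\ref{globalobservability}) but fails local observability at $m=3$ (Theorem~\ref{localobservability})—together with the generic classification of finite partial monitoring games in \cite{bartok2013}. The key external ingredient is Theorem 4 of \cite{bartok2013}, which supplies the lower-bound half of that classification: any finite partial monitoring game possessing a pair of neighboring (Pareto-optimal) actions that is \emph{not} locally observable has minimax regret $\Omega(T^{2/3})$.

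First I would recall the shape of the \cite{bartok2013} dichotomy, in which games partition into trivial, easy ($\tilde{\Theta}(T^{1/2})$), hard ($\Theta(T^{2/3})$), and hopeless ($\Theta(T)$) classes, and note that the $T^{2/3}$ lower bound is driven purely by the failure of local observability. At the level of intuition, the adversary construction underlying their Theorem 4 plays a distribution supported on the boundary $C_i \cap C_j$ of a non-locally-observable neighboring pair $\{\sigma_i,\sigma_j\}$ and perturbs it slightly, so that the feedback obtainable from the neighborhood action set $N^{+}_{i,j}$ cannot separate the two competing actions quickly enough to avoid $\Omega(T^{2/3})$ regret.

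Next I would discharge the hypotheses of that theorem using the results already proved here. Lemma~\ref{pareto-optimal} shows every learner action is Pareto-optimal, and Lemma~\ref{neighbor-actions} exhibits the neighboring pairs (actions differing by a single adjacent transposition). Theorem~\ref{localobservability} then certifies that at $m=3$ at least one such neighboring pair $\{\sigma_i,\sigma_j\}$ is not locally observable, i.e.\ $\ell_i-\ell_j \notin \oplus_{k \in N^{+}_{i,j}} Col(S_k^{\top})$, which is exactly the triggering condition for the lower-bound construction. Global observability (Theorem~\ref{globalobservability}) is not strictly needed for the lower bound itself: it rules out the hopeless case and is what will later upgrade $\Omega(T^{2/3})$ to the tight $\Theta(T^{2/3})$; for the present corollary I would invoke it only to confirm we sit in the hard, rather than hopeless, regime.

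The main—and essentially only—genuine obstacle is a matching-of-definitions check: one must confirm that the refined notions of probability cell, signal matrix, neighboring action, and local observability used here (Definitions 1, 2, 4 and 5) coincide with the abstract objects of \cite{bartok2013}, so that their Theorem 4 applies verbatim, and in particular that the identified non-locally-observable pair is nondegenerate in their sense. Since the substantive work was already carried out in establishing Theorem~\ref{localobservability}, the corollary then follows by direct citation.
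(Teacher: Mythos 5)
Your proposal is correct and matches the paper's own argument: the corollary is obtained exactly by combining Theorem~\ref{localobservability} (failure of local observability at $m=3$, resting on Lemmas~\ref{pareto-optimal} and \ref{neighbor-actions}) with the lower-bound half of Theorem 4 of \cite{bartok2013}. The paper gives no further proof beyond this citation, so your additional remarks on the adversary construction and the definition-matching check are elaboration rather than a departure.
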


The fact that the game is globally observable ( Theorem~\ref{globalobservability}), combined with Theorem 3.1 in \cite{cesa2006}, gives an algorithm (inspired by the algorithm originally given in \cite{piccolboni2001}) obtaining $O(T^{2/3})$ regret. 

\begin{cor}
The algorithm in Figure 1 of \cite{cesa2006} achieves $O(T^{2/3})$ regret bound for SumLoss.
\end{cor}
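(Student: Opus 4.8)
The plan is to obtain this corollary as a direct application of the generic regret guarantee for globally observable finite partial monitoring games (Theorem 3.1 of \cite{cesa2006}), treating that result as a black box. The corollary is thus an instantiation rather than a fresh argument: the real work is to confirm that the hypotheses of the cited theorem are satisfied by our $(L,H)$ pair for SumLoss and that every problem-dependent quantity entering its bound is finite, so that only the $T^{2/3}$ scaling in the horizon survives.

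First I would record that the game is a genuine \emph{finite} partial monitoring game: the learner has $N = m!$ actions and the adversary has $2^m$ outcomes, and every loss entry $L_{i,j} = \sigma_i^{-1} \cdot R_j$ lies in $[0, m(m+1)/2]$, since $\sigma_i^{-1}(k) \le m$ and $R_j(k) \in \{0,1\}$, so the loss range is bounded by a function of $m$ alone. Second, I would invoke Theorem~\ref{globalobservability}, which establishes precisely the global observability hypothesis that the cited result demands. Under this condition the algorithm of Figure~1 in \cite{cesa2006} constructs, from the observed top-$1$ feedback, unbiased estimates of the pairwise loss differences $\ell_i-\ell_j$ and feeds them into an exponentially weighted forecaster over the $m!$ actions; its expected regret against the best fixed permutation in hindsight (the quantity in \eqref{eq:objectiveregret}) is then $O(T^{2/3})$, with a leading constant depending on $N$ and on the loss range but not on $T$.

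A feature I would exploit to make the constants explicit is that the proof of Theorem~\ref{globalobservability} already yields the concrete decomposition $\ell_a = \sum_{j=1}^m j\,(S^{\top}_{\tilde{\sigma}_{j(a)}})_{:,2}$, so the coefficient vectors the algorithm needs to reconstruct losses from feedback are explicit and have entries bounded by $m$. This immediately controls the variance of the loss estimators, and hence the problem-dependent factor multiplying $T^{2/3}$, without any separate existence argument for the estimators. The main obstacle is therefore not conceptual but one of bookkeeping: matching our notation to the hypotheses and constants of Theorem 3.1 in \cite{cesa2006}, and verifying that the resulting constant, while it may grow with $m$ (indeed linearly in $N=m!$, as noted in the surrounding discussion and contrasted later with the efficient FTPL-based algorithm), is independent of $T$. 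Once that separation is confirmed, the claimed $O(T^{2/3})$ bound follows at once.
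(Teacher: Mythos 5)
Your proposal matches the paper's own justification: the corollary is stated there without a separate proof, obtained exactly as you describe by combining Theorem~\ref{globalobservability} (global observability of the SumLoss $(L,H)$ pair) with Theorem 3.1 of \cite{cesa2006}, treating the latter as a black box for finite, globally observable partial monitoring games. Your additional bookkeeping (finiteness of the action and outcome sets, boundedness of $L_{i,j}$ by $m(m+1)/2$, and the explicit coefficient vectors from the decomposition $\ell_a = \sum_{j=1}^m j\,(S^{\top}_{\tilde{\sigma}_{j(a)}})_{:,2}$) is consistent with the paper's surrounding discussion of the constant's dependence on the number of actions.
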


However, the algorithm in \cite{cesa2006} is intractable in our setting since the algorithm necessarily enumerates all the actions of the learner in each round, which is exponential in $m$ in our case ($m!$ to be exact). Moreover, the regret bound of the algorithm also has a linear dependence on the number of actions, which renders the bound useless.\\

{\bf Discussion}: The results above establish that the minimax regret for SumLoss, \emph{restricted to top-$1$ feedback}, is $\Theta(T^{2/3})$. Theorem 4 of \cite{bartok2013} says the following: A partial monitoring game which is both globally and locally observable has minimax regret $\Theta(T^{1/2})$, while a game which is globally observable but \emph{not} locally observable has minimax regret $\Theta(T^{2/3})$. In Theorem ~\ref{globalobservability}, we proved global observability, when feedback is restricted to relevance of top ranked item. The global observability result automatically extends to  feedback on top $k$ items, for $k>1$. This is because for top $k$ feedback, with $k>1$, the learner receives strictly greater information at end of each round than top $1$ feedback (for example, the learner can just throw away relevance feedback on items ranked $2$nd onwards). So, with top $k$ feedback, for general $k$, the game will remain at least globally observable. In fact, our algorithm in the next section will achieve $O(T^{2/3})$ regret bound for SumLoss with top $k$ feedback, $k \ge 1$. However, the same is not true for failure of local observability. Feedback on more than top ranked item can make the game strictly easier for the learner and may make local observability condition hold, for some $k>1$. In fact, for $k=m$ (full feedback), the game will be a simple bandit game (disregarding computational complexity), and hence locally observable. 

\subsection{Algorithm for Obtaining Minimax Regret under SumLoss with Top $k$ Feedback}
\label{algorithm-NC}
We first provide a general algorithmic framework for getting an $O(T^{2/3})$ regret bound for SumLoss, with feedback on top $k$ ranked items per round, for $k \ge 1$. We then instantiate a specific algorithm, which spends $O(m \log m)$ time per round  (thus, highly efficient) and obtains a regret of  rate $O (poly(m)\ T^{2/3})$.
\subsubsection{General Algorithmic Framework}
Our algorithm combines \emph{blocking} with a randomized \emph{full information} algorithm. We first divide time horizon $T$ into blocks (referred to as blocking). Within each block, we allot a small number of rounds for pure \emph{exploration}, which allows us to estimate the average of the full relevance vectors generated by the adversary in that block. The estimated average vector is cumulated over blocks and then fed to a full information algorithm for the next block. The randomized full information algorithm \emph{exploits} the information received at the beginning of the block to maintain distribution over permutations (learner's actions). In each round in the new block, actions are chosen according to the distribution and presented to the user. 

The \emph{key property} of the randomized full information algorithm is this: for any online game in an adversarial setting played over $T$ rounds, if the loss of each action is known at end of each round (full information), the algorithm should have an expected regret rate of $O(C \sqrt{T})$, where the regret is the difference between cumulative loss of the algorithm and cumulative loss of best action in hindsight, and $C$ is a parameter specific to the full information algorithm. 
 
Our algorithm is motivated by the reduction from bandit-feedback to full feedback scheme given in \cite{nisan2007}. However, the reduction \emph{cannot be directly applied to our problem}, because we are not in the bandit setting and hence do not know loss of any action. Further, the algorithm of \cite{nisan2007} necessarily spends $N$ rounds per block to try out \emph{each} of the $N$ available actions --- this is impractical in our setting since $N = m!$. 

Algorithm~\ref{alg:top-k} describes our approach. A key aspect is the formation of estimate of average relevance vector of a block (line 16), for which we have the following lemma:

\begin{lem}
\label{unbiasedestimator}
Let the average of (full) relevance vectors over the time period $\{1,2,\ldots,t\}$ be denoted as $R_{1:t}^{avg}$, that is, $R_{1:t}^{avg}= \sum_{n=1}^{t}\dfrac{R_n}{t} \in \mathbb{R}^m$. Let $\{i_1,i_2,\ldots,i_{\ceil{m/k}}\}$ be $\ceil{m/k}$ arbitrary time points, chosen uniformly at random, without replacement, from $\{1,\ldots,t\}$. At time point $i_j$, only $k$ distinct components of relevance vector $R_{i_j}$, i.e., $\{R_{i_j}(k\cdot(j-1)+1), R_{i_j}(k\cdot(j-1)+2), \ldots, R_{i_j}(k\cdot j)\}$, becomes known, $\forall j \in \{1,\ldots,\ceil{m/k}\}$ (for $j= \ceil{m/k}$, there might be less than $k$ components available). Then the vector formed from the $m$ revealed components, i.e. $\hat{R}_{t}= [R_{i_j}(k\cdot(j-1)+1), R_{i_j}(k\cdot(j-1)+2), \ldots, R_{i_j}(k\cdot j)]_{\{j = 1,2, \ldots, \ceil{m/k}\}}$ is an unbiased estimator of $R_{1:t}^{avg}$.
\end{lem}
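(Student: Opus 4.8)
The plan is to exploit linearity of expectation and reduce the claim to a one-coordinate statement. Observe first that $\hat{R}_t$ is assembled coordinate-by-coordinate: the coordinate with index $c \in \{1,\ldots,m\}$ lies in exactly one block, namely block $j(c) = \ceil{c/k}$, and by construction $\hat{R}_t(c) = R_{i_{j(c)}}(c)$, since the components revealed at time $i_j$ are precisely those with indices $k(j-1)+1,\ldots,kj$. Thus each coordinate of $\hat{R}_t$ is simply read off from a single randomly chosen time point, and (because the adversary is oblivious) the vectors $R_1,\ldots,R_t$ are fixed with respect to the only source of randomness, namely the sampled indices $i_1,\ldots,i_{\ceil{m/k}}$. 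Since expectation is linear and acts coordinate-wise, it suffices to show $\E[\hat{R}_t(c)] = R_{1:t}^{avg}(c)$ for every fixed $c$; the joint distribution of the $i_j$ across blocks never enters.

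The key fact I would establish is that, under uniform sampling without replacement, each individual index $i_j$ is \emph{marginally} uniform on $\{1,\ldots,t\}$. This follows from a short counting/symmetry argument: the number of ordered size-$\ceil{m/k}$ samples with $i_j = n$ is the same for every $n$ (fix position $j$ to equal $n$ and fill the remaining $\ceil{m/k}-1$ positions from the other $t-1$ elements, giving a count independent of $n$), whence $\Pr(i_j = n) = 1/t$ for all $n$. Given this, the per-coordinate computation is immediate:
\begin{equation*}
\E[\hat{R}_t(c)] = \E[R_{i_{j(c)}}(c)] = \sum_{n=1}^t \Pr(i_{j(c)} = n)\, R_n(c) = \frac{1}{t}\sum_{n=1}^t R_n(c) = R_{1:t}^{avg}(c).
\end{equation*}
Stacking over all $c$ yields $\E[\hat{R}_t] = R_{1:t}^{avg}$. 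The boundary block $j = \ceil{m/k}$, which may contribute fewer than $k$ coordinates when $m$ is not a multiple of $k$, is handled identically, as the argument is purely per-coordinate and does not refer to the block size.

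I do not anticipate a genuine obstacle here; the only subtlety is to resist reasoning about the \emph{joint} distribution of the sampled times. Because distinct coordinates in the same block share the same index $i_j$, while coordinates in different blocks use different (and dependent, since sampling is without replacement) indices, $\hat{R}_t$ is not a sum of independent contributions. Unbiasedness, however, requires only the marginal uniformity of each $i_j$, so linearity of expectation sidesteps the dependence structure entirely and the proof goes through without needing any independence across blocks.
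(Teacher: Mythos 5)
Your proof is correct and follows essentially the same route as the paper's: decompose $\hat{R}_t$ coordinate-wise (the paper does it block-wise, which is equivalent), apply linearity of expectation, and use the fact that each $i_j$ is marginally uniform on $\{1,\ldots,t\}$ despite the sampling being without replacement. The only difference is that you explicitly justify that marginal uniformity with a counting argument, a step the paper leaves implicit.
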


\begin{proof}
We can write $\hat{R}_{t}= \sum_{j=1}^{\ceil{m/k} } \sum_{\ell=1}^k R_{i_j}(k\cdot(j-1) + \ell) e_{k\cdot(j-1) + \ell}$, where $e_i$ is the $m$ dimensional standard basis vector along coordinate $j$. Then, taking expectation over the randomly chosen time points, we have: $E_{i_1,\ldots,i_{\ceil{m/k}}}(\hat{R}_{t})= \sum_{j=1}^{\ceil{m/k} } E_{i_j} [ \sum_{\ell=1}^k R_{i_j}(k\cdot(j-1) + \ell) e_{k \cdot(j-1) + \ell}]= \sum_{j=1}^{m} \sum_{\ell=1}^k \sum_{n=1}^{t}\dfrac{R_n(k\cdot(j-1) + \ell) e_{k\cdot(j-1) + \ell}}{t}$= $R_{1:t}^{avg}$.
\end{proof}

Suppose we have a full information algorithm whose regret in $t$ rounds is upper bounded by $C\sqrt{T}$ for some constant $C$ and let $C^I$ be the maximum loss that the learner can suffer in a round. Note that $C^I$ depends on the loss used and on the range of the relevance scores. We have the following regret bound, obtained from application of Algorithm~\ref{alg:top-k} on SumLoss with top $k$ feedback.

\begin{thm}
\label{regret}
Let $C, C^I$ be the constants defined above. The expected regret under SumLoss, obtained by applying Algorithm \ref{alg:top-k}, with relevance feedback on top $k$ ranked items per round ($k \ge 1$), and the expectation being taken over randomized learner's actions $\sigma_t$, is
\begin{equation}
\E\left[\sum_{t=1}^T SumLoss(\sigma_t, R_t)\right] - \underset{\sigma}{\min}\sum_{t=1}^T SumLoss(\sigma_t, R_t) \le C^{I} \ceil{m/k} K + C \frac{T}{\sqrt{K}} .
\end{equation}
Optimizing over block size $K$, the final regret bound is:
\begin{equation}
\E\left[\sum_{t=1}^T SumLoss(\sigma_t, R_t)\right] - \underset{\sigma}{\min}\sum_{t=1}^T SumLoss(\sigma_t, R_t) \le 2 (C^I)^{1/3} C^{2/3} \ceil{m/k}^{1/3} T^{2/3} .
\end{equation}

\end{thm}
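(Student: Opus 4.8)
The plan is to bound the regret by separately accounting for the rounds spent \emph{exploring} and the rounds spent \emph{exploiting}, and then to trade the two contributions off by choosing the number of blocks. Write $\sigma^\ast \in \argmin_\sigma \sum_{t=1}^T SumLoss(\sigma, R_t)$ for the best fixed permutation in hindsight, and partition the horizon into $K$ consecutive blocks, each of length $T/K$, of which $\ceil{m/k}$ rounds per block are reserved for exploration. I would first split the per-round regret $\E[SumLoss(\sigma_t,R_t)] - SumLoss(\sigma^\ast,R_t)$ across the two kinds of rounds. On any exploration round the instantaneous regret is at most the maximal per-round loss $C^I$ (both the played loss and the comparator loss lie in a range of width $C^I$), and there are exactly $\ceil{m/k}K$ such rounds, contributing the first term $C^I\ceil{m/k}K$.

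For the exploitation rounds I would invoke the full-information algorithm's guarantee at the block level, and the structural fact that makes this work is the linearity $SumLoss(\sigma,R)=\sigma^{-1}\cdot R$: averaging over a block collapses to $\sigma^{-1}\cdot\bar R_b$, where $\bar R_b$ is the block-\emph{average} relevance vector, so $\min_\sigma\sum_b \sigma^{-1}\cdot\bar R_b$ is minimized at exactly $\sigma^\ast$ (since $\sum_b \bar R_b \propto \sum_t R_t$). The full-information algorithm is fed, at the start of each block, the cumulated estimate built from the explored coordinates, which by Lemma~\ref{unbiasedestimator} is unbiased for $\bar R_b$. Its regret bound $C\sqrt{K}$ holds for the realized (random) loss sequence it actually sees; I would then take expectations and use unbiasedness together with the independence of the block-$b$ estimate from the distribution $p_b$ (which depends only on earlier blocks) to replace the estimated losses by the true block averages in expectation. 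The one inequality whose direction must be checked is $\E[\min_\sigma\sum_b \sigma^{-1}\cdot\hat R_b]\le \min_\sigma \sum_b \sigma^{-1}\cdot\E[\hat R_b]$, which controls the comparator term.

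Having established a block-level full-information regret of $C\sqrt{K}$ against the true block averages, I would rescale by the block length $T/K$: each unit of block-level (average) regret corresponds to $T/K$ actual rounds played under the same distribution $p_b$, so the total exploitation regret is at most $(T/K)\cdot C\sqrt{K}=CT/\sqrt{K}$ (the $\ceil{m/k}$ exploration rounds omitted from each block only shift this by a lower-order amount already dominated by the first term). This yields the first displayed inequality. To obtain the optimized bound I would minimize $g(K)=C^I\ceil{m/k}K + CT/\sqrt{K}$; setting $g'(K)=0$ gives $K^\ast=\big(CT/(2C^I\ceil{m/k})\big)^{2/3}$, and substituting back produces a leading constant $3\cdot 2^{-2/3}\approx 1.89\le 2$, hence $2(C^I)^{1/3}C^{2/3}\ceil{m/k}^{1/3}T^{2/3}$ (rounding $K^\ast$ to an integer affects only constants).

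I expect the main obstacle to be the transfer step in the second paragraph: carefully justifying that the full-information algorithm's worst-case regret guarantee, which is stated for an arbitrary deterministic loss sequence, still yields the claimed \emph{expected} regret against the true, unobserved block averages once it is fed random unbiased estimates. This requires being precise about the conditioning (the estimate of block $b$ is independent of $p_b$), the direction of the min/expectation inequality for the comparator, and the fact that the constant $C$ does not degrade with block length because the algorithm is fed bounded block-average estimates rather than block sums.
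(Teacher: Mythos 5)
Your proposal is correct and follows essentially the same route as the paper's proof: the same exploration/exploitation split giving the $C^I\ceil{m/k}K$ term, the same block-level reduction exploiting linearity of SumLoss so that the full-information guarantee applies to block-averaged relevance vectors, the same use of unbiasedness plus independence of the block estimate from the current distribution to pass to expectations, and the same Jensen-type inequality $\E[\min_\sigma\sum_b\sigma^{-1}\cdot\hat R_b]\le\min_\sigma\sum_b\sigma^{-1}\cdot R_b^{avg}$ for the comparator. Your optimization over $K$ (yielding the constant $3\cdot2^{-2/3}\le2$) also matches the stated bound.
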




\floatstyle{ruled}
\newfloat{algorithm}{htbp}{loa}
\floatname{algorithm}{Algorithm}
\begin{algorithm}
{\small
\caption{RankingwithTop-kFeedback(RTop-kF)- Non Contextual}
\label{alg:top-k}
\begin{tabbing}
tabs \= tabs \= tabs \= tabs \kill
1: $T=$ Time horizon, $K=$ No. of (equal sized) blocks, FI= randomized full information algorithm.\\
2: Time horizon divided into equal sized blocks $\{B_1,\ldots,B_K\}$, where $B_i= \{(i-1)(T/K)+1, \ldots, i (T/K)\}$. \\
3: Initialize $\hat{s}_0=\mathbf{0} \in \mathbb{R}^m$. Initialize any other parameter specific to FI. \\
4: {\bf For} \= $i= 1,\ldots,K$\\
5:\> Select $\ceil{m/k}$ time points $\{i_1,\ldots,i_{\ceil{m/k}}\}$ from block $B_i$, uniformly at random, without replacement.\\
6:\> Divide the $m$ items into $\ceil{m/k}$ cells, with $k$ distinct items in each cell. \footnotemark \\
7:\> {\bf For} \=  $t \in B_i$\\
8:\>\> {\bf If} $t = i_j \in \{i_1,\ldots,i_{\ceil{m/k}}\}$\\
9:\>\>\> {\bf Exploration round}:\\
10:\>\>\> Output any permutation $\sigma_t$ which places items of $j$th cell in top $k$ positions (in any order).\\
11:\>\>\> Receive feedback as relevance of top $k$ items of $\sigma_t$ (i.e., items of $j$th cell).\\
12:\>\> {\bf Else} \\
13:\>\>\> {\bf Exploitation round}:\\
14:\>\>\> Feed $\hat{s}_{i-1}$ to the randomized full information algorithm FI and output $\sigma_t$ according to FI.\\
15:\> {\bf end for}\\
16:\> Set $\hat{R}_i \in \mathbb{R}^m$  as vector of relevances of the $m$ items collected during exploration rounds.\\
17:\> Update $\hat{s}_i= \hat{s}_{i-1} + \hat{R}_{i}$.\\
18: {\bf end for}
\end{tabbing}
}
\end{algorithm}

\footnotetext{For e.g., assume $m=7$ and $k=2$. Then place items $(1,2)$ in cell $1$, items $(3,4)$ in cell $2$, items $(5,6)$ in cell $3$ and item $7$ in cell $4$.}

\subsubsection{Computationally Efficient Algorithm with FTPL}

We instantiate our general algorithm with \emph{Follow The Perturbed Leader} (FTPL) full information algorithm \citep{kalai2005}. 
The following modifications are needed in Algorithm ~\ref{alg:top-k} to implement FTPL as the full information algorithm:

{\bf Initialization of parameters}: In line 3 of the algorithm, the parameter specific to FTPL is randomization parameter $\epsilon \in \mathbb{R}$.

{\bf Exploitation round}: $\sigma_t$, during exploitation, is sampled by FTPL as follows:  sample $p_t \in [0,1/\epsilon]^m$ from the product of uniform distribution in each dimension. Output permutation $\sigma_{t}= M(\hat{s}_{i-1} + p_t)$  where $M(y)= \underset{\sigma}{\argmin} \ \sigma^{-1}\cdot y$. \\

{\bf Discussion}: The key reason for using FTPL as the full information algorithm is that the structure of our problem allows the permutation $\sigma_t$ to be chosen during exploitation round via a simple sorting operation on $m$ objects. This leads to an easily implementable algorithm which spends only $O(m \log m)$ time per round (sorting is in fact the most expensive step in the algorithm). The reason that the simple sorting operation does the trick is the following: FTPL only \emph{implicitly} maintains a distribution over $m!$ actions (permutations) at beginning of each round. Instead of having an explicit probability distribution over each action and sampling from it, FTPL mimics sampling from a distribution over actions by randomly perturbing the information vector received so far (say $\hat{s}_{i-1}$ in block $B_i$)  and then sorting the items by perturbed score. The random perturbation puts an implicit weight on each of the $m!$ actions and sorting is basically sampling according to the weights. This is an advantage over general full information algorithms based on exponential weights, which maintain explicit weight on actions and samples from it.

We have the following corollary:
\begin{cor}
\label{efficientregret-SumLoss}
The expected regret of SumLoss, obtained by applying Algorithm \ref{alg:top-k}, with FTPL full information algorithm and feedback on top $k$ ranked items at end of each round ($k \ge 1$), and $K= O\left(\dfrac{m^{1/3} T^{2/3}}{\ceil{m/k}^{2/3}}\right)$, $\epsilon= O(\frac{1}{\sqrt{m K}})$, is: 
\begin{equation}
\E\left[\sum_{t=1}^T SumLoss(\sigma_t, R_t)\right] -  \underset{\sigma}{\min}\sum_{t=1}^T SumLoss(\sigma_t, R_t) \le  O(m^{7/3} \ceil{m/k}^{1/3} T^{2/3}) .
\end{equation}
where $O(\cdot)$ hides some numeric constants.
\end{cor}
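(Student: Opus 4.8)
The goal is to specialize the generic bound of Theorem~\ref{regret} to the case where the black-box full information routine is FTPL, so essentially all the work is in pinning down the two problem-specific constants $C$ and $C^I$ that appear there, and then substituting.

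First I would recall the FTPL guarantee of \citet{kalai2005} for online linear optimization and instantiate it for our full information game, which is played over the $K$ blocks. Here the learner's decision is the rank vector $d=\sigma^{-1}\in\reals^m$, the loss of $d$ against the block's estimated relevance vector $\hat R_i$ is the linear form $d\cdot\hat R_i$, and FTPL samples $\sigma_t=M(\hat s_{i-1}+p_t)$ with $p_t$ uniform on $[0,1/\epsilon]^m$. The Kalai--Vempala analysis bounds the expected regret of this scheme over $K$ rounds by
\[
\epsilon\, R\, A\, K+\frac{D}{\epsilon},
\]
where $D=\sup_{\sigma,\sigma'}\|\sigma^{-1}-(\sigma')^{-1}\|_1$ is the $L_1$ diameter of the decision set, $A=\sup\|\hat R_i\|_1$ bounds the $L_1$ norm of the loss vectors, and $R=\sup|(\sigma^{-1}-(\sigma')^{-1})\cdot\hat R_i|$ bounds the range of a single-round loss. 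Optimizing $\epsilon=\sqrt{D/(RAK)}$ turns this into $2\sqrt{DRA}\,\sqrt{K}$, so the constant $C$ required by the key property underlying Theorem~\ref{regret} is $C=2\sqrt{DRA}$.

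The next step is to evaluate these three geometric constants for SumLoss with binary relevance. Each $\sigma^{-1}$ is a permutation of $(1,\dots,m)$, so the largest $L_1$ distance, achieved between the identity and its reversal, gives $D=O(m^2)$. Since relevances are binary, $\hat R_i\in\{0,1\}^m$ has at most $m$ ones, whence $A\le m$. For the single-round range, $(\sigma^{-1}-(\sigma')^{-1})\cdot\hat R_i$ is a sum of at most $m$ terms each of magnitude at most $m-1$, so $R=O(m^2)$. Hence $C=2\sqrt{DRA}=O(\sqrt{m^2\cdot m^2\cdot m})=O(m^{5/2})$, and the optimizing perturbation scale is $\epsilon=\sqrt{D/(RAK)}=O(1/\sqrt{mK})$, matching the statement. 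Separately, $C^I$ is the maximum per-round SumLoss value, and since $\sigma^{-1}\cdot R\le m\,\|R\|_1\le m^2$ we get $C^I=O(m^2)$.

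Finally I would substitute $C=O(m^{5/2})$ and $C^I=O(m^2)$ into the optimized bound $2(C^I)^{1/3}C^{2/3}\ceil{m/k}^{1/3}T^{2/3}$ of Theorem~\ref{regret}, obtaining $O\!\big(m^{2/3}\cdot m^{5/3}\cdot\ceil{m/k}^{1/3}T^{2/3}\big)=O(m^{7/3}\ceil{m/k}^{1/3}T^{2/3})$, the claimed rate; the same optimization fixes the block count at $K=O(m^{1/3}T^{2/3}/\ceil{m/k}^{2/3})$, as stated. The only real obstacle is the careful computation of the FTPL constants on the permutation decision set, especially the $L_1$ diameter $D$ and the single-round range $R$, together with the bookkeeping that $\epsilon$ must be tuned to $K$, the number of blocks over which the full information algorithm actually runs, rather than to $T$. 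Once those are in hand, the corollary follows by direct substitution.
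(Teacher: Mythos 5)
Your proposal is correct and follows essentially the same route as the paper: instantiate the Kalai--Vempala FTPL constants $D=O(m^2)$, $R=O(m^2)$, $A=O(m)$ for the permutation decision set, set $C=2\sqrt{DRA}=O(m^{5/2})$ and $C^I=O(m^2)$, and substitute into the optimized bound of Theorem~\ref{regret}. Your explicit remark that $\epsilon$ must be tuned to the number of blocks $K$ rather than $T$ is a point the paper glosses over, but the computation and conclusion are identical.
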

Assuming that $\ceil{m/k} \sim m/k$, the regret rate in Corollary~\ref{efficientregret-SumLoss} is $O \left( \dfrac{m^{8/3}T^{2/3}}{k^{1/3}} \right)$
\subsection{Regret Bounds for PairwiseLoss, DCG and Precision@n}
{\bf PairwiseLoss}: As we saw in Eq.~\ref{rankloss-sumloss}, the regret of SumLoss is same as regret of PairwiseLoss. Thus, SumLoss in Corollary~\ref{efficientregret-SumLoss} can be replaced by PairwiseLoss to get exactly same result.\\

{\bf DCG}: All the results of SumLoss can be extended to DCG (see Appendix A). Moreover, the results can be extended even for multi-graded relevance vectors. Thus, the minimax regret under DCG, \emph{restricted to feedback on top ranked item}, even when the adversary can play multi-graded relevance vectors, is $\Theta(T^{2/3})$. 

The main differences between SumLoss and DCG are the following. The former is a loss function; the latter is a gain function. Also, for DCG, $f(\sigma) \neq \sigma^{-1}$  (see definition in Sec.\ref{rankingmeasures-NC} ) and when relevance is multi-graded, DCG cannot be expressed as $f(\sigma) \cdot R$,  as clear from definition. Nevertheless, DCG can be expressed as $f(\sigma) \cdot g(R)$, , where $g(R)= [g^s(R(1)), g^s(R(2)), \ldots, g^s(R(m))], \ g^s(i)= 2^i -1$ is constructed from univariate, monotonic, scalar valued functions ($g(R)= R$ for binary graded relevance vectors). Thus, Algorithm~\ref{alg:top-k} can be applied (with slight variation), with FTPL full information algorithm and top $k$ feedback, to achieve regret of $O(T^{2/3})$. The slight variation is that during \emph{exploration} rounds, when relevance feedback is collected to form the estimator at end of the block,  the relevances should be transformed by function $g^s(\cdot)$. The estimate is then constructed in the transformed space and fed to the full information algorithm. In the \emph{exploitation} round, the selection of $\sigma_t$ remains exactly same as in SumLoss, i.e.,  $\sigma_{t}= M(\hat{s}_{i-1} + p_t)$  where $M(y)= \underset{\sigma}{\argmin} \ \sigma^{-1}\cdot y$. This is because  $\underset{\sigma}{\argmax}\ f(\sigma) \cdot y= \underset{\sigma}{\argmin}\ \sigma^{-1} \cdot y$, by definition of $f(\sigma)$ in DCG.

Let relevance vectors chosen by adversary have $n+1$ grades, i.e., $R \in \{0,1,\ldots, n\}^m$. In practice, $n$ is almost always less than $5$. We have the following corollary:
\begin{cor}
\label{efficientregret-DCG}
The expected regret of DCG, obtained by applying Algorithm \ref{alg:top-k}, with FTPL full information algorithm and feedback on top $k$ ranked items at end of each round ($k \ge 1$), and $K= O\left(\dfrac{m^{1/3} T^{2/3}}{\ceil{m/k}^{2/3}}\right)$, $\epsilon= O(\frac{1}{(2^n -1)^2 \sqrt{m K}})$, is: 
\begin{equation}
 \underset{\sigma}{\max}\sum_{t=1}^T DCG(\sigma_t, R_t) - \E\left[\sum_{t=1}^T DCG(\sigma_t, R_t)\right]  \le  O((2^n-1) m^{4/3} \ceil{m/k}^{1/3} T^{2/3}) .
\end{equation}

\end{cor}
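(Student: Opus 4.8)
The plan is to mirror the proof of Corollary~\ref{efficientregret-SumLoss}, modifying it for the two features that separate DCG from SumLoss: DCG is a \emph{gain} rather than a loss, and it has the form $f(\sigma)\cdot g(R)$ with the coordinatewise, nonlinear map $g^s(r)=2^r-1$ instead of $\sigma^{-1}\cdot R$. Because $DCG(\sigma,R)=f(\sigma)\cdot g(R)$ is still \emph{linear} in the transformed vector $g(R)$, I would run Algorithm~\ref{alg:top-k} essentially unchanged, with the single modification that in the exploration rounds the revealed relevances are passed through $g^s$ before being accumulated into $\hat s_i$ (line~16). Lemma~\ref{unbiasedestimator}, applied verbatim to the vectors $g(R_t)$ in place of $R_t$, then certifies that $\hat R_i$ is an unbiased estimate of the block average of $g(R_t)$, so FTPL is fed unbiased block-level gain estimates. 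The exploitation rule is literally the same, $\sigma_t=M(\hat s_{i-1}+p_t)$ with $M(y)=\argmin_{\sigma}\sigma^{-1}\cdot y$, which remains correct because $\argmax_{\sigma} f(\sigma)\cdot y=\argmin_{\sigma}\sigma^{-1}\cdot y$ by the definition of $f$ for DCG; in particular each exploitation step is still a single $O(m\log m)$ sort.

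With this reduction I would invoke the gain-function analogue of Theorem~\ref{regret} (the DCG version established in Appendix~A), giving the exploration/exploitation decomposition
\[
\max_{\sigma}\sum_{t=1}^T DCG(\sigma,R_t)-\E\Big[\sum_{t=1}^T DCG(\sigma_t,R_t)\Big]\le C^{I}\ceil{m/k}\,K+C\,\frac{T}{\sqrt K},
\]
where $C^{I}$ is the largest attainable per-round gain and $C$ is the Kalai--Vempala regret constant of FTPL on the induced block-level linear game. The next step is to bound these two constants for DCG. Since every coordinate of $g(R)$ lies in $[0,2^n-1]$ and every coordinate of $f(\sigma)$ is at most $1$, the maximal DCG satisfies $C^{I}\le(2^n-1)\sum_{i=1}^m\tfrac{1}{\log_2(1+i)}=O\big((2^n-1)m\big)$. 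For $C$, the decision vectors $f(\sigma)$ are permutations of the fixed vector $\big(\tfrac{1}{\log_2(1+i)}\big)_{i\in[m]}$, so their $L_1$ diameter is $O(m)$ and their $L_\infty$ norm is at most $1$, while the transformed gain vectors live in $[0,2^n-1]^m$; feeding these into the FTPL bound $2\sqrt{DRA}$ (with $D$ the $L_1$ diameter, $A$ an $L_1$ bound on the gain vectors, and $R$ a bound on $|f(\sigma)\cdot g(R)|$) yields $C=O\big((2^n-1)m^{3/2}\big)$, with the natural perturbation scale of order $\epsilon\sim 1/((2^n-1)\sqrt{mK})$ that balances $D/\epsilon$ against $\epsilon R A K$.

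Finally I would substitute these constants together with $K=O\big(m^{1/3}T^{2/3}/\ceil{m/k}^{2/3}\big)$ into the optimized expression $2(C^{I})^{1/3}C^{2/3}\ceil{m/k}^{1/3}T^{2/3}$ and collect exponents: $(C^{I})^{1/3}=O\big((2^n-1)^{1/3}m^{1/3}\big)$ and $C^{2/3}=O\big((2^n-1)^{2/3}m\big)$ combine to $O\big((2^n-1)m^{4/3}\big)$, which delivers the claimed $O\big((2^n-1)m^{4/3}\ceil{m/k}^{1/3}T^{2/3}\big)$ bound.

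The step I expect to be the main obstacle is the careful propagation of the relevance range $(2^n-1)$ through the FTPL analysis --- simultaneously controlling $\|g(R)\|_1$, the one-step gain range $|f(\sigma)\cdot g(R)|$, and hence the admissible perturbation scale $\epsilon$ --- so that exactly one power of $(2^n-1)$ survives in the final rate. The only other technical point to verify is that performing both the estimation in line~16 and the exploitation sort in the $g$-transformed space preserves unbiasedness and the $\argmax$/$\argmin$ sorting identity for multi-graded relevance; everything else transfers unchanged from the SumLoss argument.
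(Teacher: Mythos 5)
Your proposal is correct and follows essentially the same route as the paper: reduce to the SumLoss analysis by working with the transformed relevances $g(R)$, then instantiate the FTPL constants as $D=O(m)$, $R=C^{I}=O(m(2^n-1))$, $A=O(m(2^n-1))$ and plug $C=2\sqrt{DRA}=O((2^n-1)m^{3/2})$ into the optimized bound $2(C^{I})^{1/3}C^{2/3}\ceil{m/k}^{1/3}T^{2/3}$ from Theorem~\ref{regret}, exactly as the paper does in Appendix~A. Your perturbation scale $\epsilon\sim 1/((2^n-1)\sqrt{mK})$ is in fact what the paper's own recipe $\epsilon=\sqrt{D/(RAK)}$ yields, so the extra factor of $(2^n-1)$ in the corollary's stated $\epsilon$ appears to be a typo and does not affect the regret rate.
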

Assuming that $\ceil{m/k} \sim m/k$, the regret rate in Corollary~\ref{efficientregret-DCG} is $O \left( \dfrac{(2^n-1)m^{5/3}T^{2/3}}{k^{1/3}} \right)$.
{\bf Precision@n}: Since Precision@$n = f(\sigma) \cdot R$, the global observability property of SumLoss can be easily extended to it and Algorithm~\ref{alg:top-k} can be applied, with FTPL full information algorithm and top $k$ feedback, to achieve regret of $O(T^{2/3})$. In the \emph{exploitation} round, the selection of $\sigma_t$ remains exactly same as in SumLoss, i.e.,  $\sigma_{t}= M(\hat{s}_{i-1} + p_t)$  where $M(y)= \underset{\sigma}{\argmin} \ \sigma^{-1}\cdot y$. 

However, the local observability property of SumLoss does not extend to Precision@$n$. The reason is that while $f(\cdot)$ of SumLoss is strictly monotonic, $f(\cdot)$ of Precision@$n$ is monotonic but not strict. Precision@$n$ depends only on the objects in the top $n$ positions of the ranked list, \emph{irrespective of the order}. A careful review shows that Lemma~\ref{neighbor-actions} fails to extend to the case of Precision@$n$, due to lack of strict monotonicity. Thus, we cannot define the neighboring action set of the Pareto optimal action pairs, and hence cannot prove or disprove local observability. 
 
We have the following corollary:
\begin{cor}
\label{efficientregret-Precision@n}
The expected regret of Precision@$n$, obtained by applying Algorithm \ref{alg:top-k}, with FTPL full information algorithm and feedback on top $k$ ranked items at end of each round ($k \ge 1$), and $K= O\left(\dfrac{m^{1/3} T^{2/3}}{\ceil{m/k}^{2/3}}\right)$, $\epsilon= O(\frac{1}{\sqrt{m K}})$, is: 
\begin{equation}
 \underset{\sigma}{\max}\sum_{t=1}^T Precision@n(\sigma_t, R_t) - \E\left[\sum_{t=1}^T Precision@n(\sigma_t, R_t)\right]  \le  O(n\ m^{1/3} \ceil{m/k}^{1/3} T^{2/3}) .
\end{equation}

\end{cor}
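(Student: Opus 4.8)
The plan is to reduce to the generic regret bound of Theorem~\ref{regret} exactly as in the SumLoss and DCG cases, the only genuinely new work being the determination of the two problem-specific constants $C^I$ and $C$ for Precision@$n$. Since $Precision@n(\sigma,R)=f(\sigma)\cdot R$ with $f(\sigma)=[\mathbbm{1}(\sigma^{-1}(1)\le n),\ldots,\mathbbm{1}(\sigma^{-1}(m)\le n)]$ and relevance is binary, the global observability argument of Theorem~\ref{globalobservability} carries over and Algorithm~\ref{alg:top-k} is applicable. Moreover, because the relevance vectors are binary, no $g^s$-transform is needed: the exploration rounds and the unbiased estimator of Lemma~\ref{unbiasedestimator} are literally identical to the SumLoss instantiation, so the only steps that differ from Corollary~\ref{efficientregret-SumLoss} are (i) verifying the exploitation sort and (ii) recomputing $C^I$ and $C$.

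First I would check the exploitation step. The selected permutation is $\sigma_t=M(\hat{s}_{i-1}+p_t)$ with $M(y)=\argmin_\sigma \sigma^{-1}\cdot y$, and I would show that $\argmax_\sigma f(\sigma)\cdot y=\argmin_\sigma \sigma^{-1}\cdot y$ for the Precision@$n$ weight vector: placing the $n$ largest coordinates of $y$ in the top $n$ positions simultaneously maximizes the top-$n$ score mass $f(\sigma)\cdot y$ and minimizes the rank-weighted sum $\sigma^{-1}\cdot y$. Hence the permutation played by sorting on $\sigma^{-1}$ is exactly the one FTPL would play for the linear objective $f(\sigma)\cdot(\hat{s}_{i-1}+p_t)$, so the full-information FTPL regret guarantee can be invoked directly for the Precision@$n$ gain. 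The non-uniqueness of the maximizer (any ordering within the top $n$ is optimal) is harmless, since $M$ returns a valid maximizer.

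It then remains to instantiate Theorem~\ref{regret} in gain form, negating the regret as for DCG. The per-round maximum is $C^I=O(n)$, because at most $n$ items occupy the top $n$ positions and each contributes relevance at most $1$. For the full-information constant $C$ I would run the analysis of \citet{kalai2005} in the decision space $\{f(\sigma)\}$ rather than $\{\sigma^{-1}\}$: since the decisions are sparse $0/1$ vectors with $\|f(\sigma)\|_1=n$ and the gain vectors fed to FTPL are entrywise bounded, the perturbation-versus-stability trade-off yields $C=O(n\sqrt{m})$, which is far smaller than the $O(m^{5/2})$ constant obtained for SumLoss precisely because $f(\sigma)$ is supported on only $n$ coordinates instead of carrying the full rank vector. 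Substituting $C^I=O(n)$ and $C=O(n\sqrt{m})$ into the optimized bound $2(C^I)^{1/3}C^{2/3}\lceil m/k\rceil^{1/3}T^{2/3}$ of Theorem~\ref{regret}, together with the stated choices of $K$ and $\epsilon$, produces the claimed rate $O(n\,m^{1/3}\lceil m/k\rceil^{1/3}T^{2/3})$.

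The main obstacle is the determination of $C$: one must confirm that the FTPL regret analysis goes through when the perturbed maximization is performed implicitly, via a sort on $\sigma^{-1}$, while the objective being regret-bounded is $f(\sigma)\cdot R$, and then track the $L_1$ and $L_\infty$ magnitudes of the sparse decision vectors and the binary gain vectors carefully enough to obtain the clean $O(n\sqrt{m})$ scaling. Everything else is a routine re-use of the SumLoss machinery.
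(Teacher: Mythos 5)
Your proposal is correct and follows essentially the same route as the paper: the paper's proof is exactly the instantiation of Theorem~\ref{regret} with the Kalai--Vempala FTPL constants for Precision@$n$, namely decision-set $\ell_1$ diameter $D=\sum_i f^s(\sigma^{-1}(i))=O(n)$, per-round gain bound $R=O(n)$ (so $C^I=O(n)$), and gain-vector $\ell_1$ diameter $A=O(m)$, giving $C=2\sqrt{DRA}=O(n\sqrt{m})$ and hence $2(C^I)^{1/3}C^{2/3}\lceil m/k\rceil^{1/3}T^{2/3}=O(n\,m^{1/3}\lceil m/k\rceil^{1/3}T^{2/3})$. Your additional verification that $\argmin_\sigma \sigma^{-1}\cdot y$ is a valid maximizer of $f(\sigma)\cdot y$ (with harmless non-uniqueness) is stated but not belabored in the paper's main text, and your identification of the sparsity $\|f(\sigma)\|_1=n$ as the source of the improved constant matches the paper's computation exactly.
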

Assuming that $\ceil{m/k} \sim m/k$, the regret rate in Corollary~\ref{efficientregret-Precision@n} is $O \left(\dfrac{n\ m^{2/3}T^{2/3}}{k^{1/3}} \right)$.
\subsection{Non-Existence of Sublinear Regret Bounds for NDCG, AP and AUC}
As stated in Sec.~\ref{rankingmeasures-NC}, NDCG, AP and AUC are normalized versions of measures DCG, Precision@$n$ and PairwiseLoss. We have the following lemma for all these normalized ranking measures.

\begin{lem}\label{globalfailsfornormalized}
The global observability condition, as per Definition 1, fails for NDCG, AP and AUC, when feedback is restricted to top ranked item.
\end{lem}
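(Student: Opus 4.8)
The plan is to first give an explicit description of the feedback subspace $\oplus_{k\in[m!]}Col(S_k^{\top})$ and then recast global observability (Definition 3) as an affineness condition on loss differences. Under top-$1$ feedback, the $\ell$-th column of $S_k^{\top}$ only records whether the object $\sigma_k(1)$ placed on top by $\sigma_k$ is relevant in $R_\ell$: by Definition 2 the two columns of $S_k^{\top}$ are $[\,1-R_\ell(\sigma_k(1))\,]_\ell$ and $[\,R_\ell(\sigma_k(1))\,]_\ell$, whose sum is the all-ones vector $\mathbf 1$. Hence $Col(S_k^{\top})=\mathrm{span}\{\mathbf 1,\,v_{\sigma_k(1)}\}$, where $v_i:=[R_\ell(i)]_{\ell=1}^{2^m}$ is the vector of relevances of object $i$ across all adversary actions. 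As $\sigma_k$ ranges over all $m!$ permutations, $\sigma_k(1)$ ranges over all $m$ objects, so
\[
\oplus_{k\in[m!]}Col(S_k^{\top})=\mathrm{span}\{\mathbf 1,\,v_1,\ldots,v_m\}.
\]
A vector $w\in\mathbb R^{2^m}$ lies in this span iff $w_R=c_0+\sum_{i=1}^m c_i R(i)$ for constants $c_0,\ldots,c_m$, i.e. iff $R\mapsto w_R$ is an \emph{affine} function of the relevance vector. Therefore global observability holds iff for every pair of rankings $\sigma_a,\sigma_b$ the map $R\mapsto L_{a,R}-L_{b,R}$ is affine in $R$, and to disprove it I only need to exhibit one pair whose loss difference is non-affine.

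To detect non-affineness I would use the discrete second difference: any affine $\phi$ on $\{0,1\}^m$ satisfies $\phi(R+e_i+e_j)-\phi(R+e_i)-\phi(R+e_j)+\phi(R)=0$, so a single nonzero value of this expression certifies failure. For NDCG with binary relevance the normalizer $Z(R)$ depends only on $\|R\|_1$ (all relevant items are pushed to the top positions), and this is exactly what breaks affineness. Taking $m=3$, let $\sigma_a$ be the identity and $\sigma_b$ swap the objects in ranks $1$ and $2$, so that $L_{a,R}-L_{b,R}=c\,(R(1)-R(2))/Z(R)$ with $c=1-1/\log_2 3\neq 0$. Evaluating on $\mathbf 0,\,e_1,\,e_3,\,e_1+e_3$, object $3$ is inert in the numerator but raises $\|R\|_1$ from $1$ to $2$, rescaling the value attained at $e_1$; the second difference equals $c\bigl(1/Z(2)-1/Z(1)\bigr)\neq 0$, where $Z(r)$ denotes the normalizer at $\|R\|_1=r$. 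Hence the difference is non-affine and global observability fails for NDCG. (Note that at $m=2$ this same difference is affine, which is why the witness must be taken at $m\ge 3$ and chosen with care.)

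For AP and AUC the same mechanism applies: the normalizers $\|R\|_1$ (AP) and $N(R)=\|R\|_1(m-\|R\|_1)$ (AUC) depend non-linearly on the number of relevant objects, so appending a relevant object that alters the normalizer while leaving the numerator essentially unchanged produces a nonzero second difference. I would again fix $m=3$, take $\sigma_a,\sigma_b$ differing by a single adjacent transposition, and evaluate $L_{a,R}-L_{b,R}$ on a quadruple $\mathbf 0,\,e_i,\,e_j,\,e_i+e_j$ chosen so that the contribution does not cancel. The hard part will be this last step: unlike SumLoss/DCG, the numerators of AP and especially AUC are themselves non-linear in $R$ --- AUC's numerator is the bilinear $\sum_{p,q}\mathbbm 1(\sigma^{-1}(p)<\sigma^{-1}(q))\,R(q)(1-R(p))$ --- so I must verify that the numerator and normalizer nonlinearities do not conspire to restore affineness in the chosen difference, exactly the way they harmlessly do for NDCG at $m=2$. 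Selecting the witnessing rankings and relevance quadruples so that the second difference is provably nonzero, and carrying out this verification for each of NDCG, AP and AUC, is the crux of the argument; once a single non-affine loss difference is produced for each measure, the characterization in the first paragraph immediately yields the claimed failure of global observability under top-$1$ feedback.
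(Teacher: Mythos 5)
Your reduction of global observability to an affineness condition is correct and is a genuinely cleaner route than the paper's: the paper simply writes out $\ell_{\sigma_a}-\ell_{\sigma_b}$ and the signal matrices for a specific $m$ and asserts that membership in the column span ``can be easily checked,'' whereas your identification of $\oplus_k Col(S_k^{\top})$ with $\mathrm{span}\{\mathbf 1, v_1,\ldots,v_m\}$ (equivalently, the affine functions of $R$) together with the discrete second-difference certificate turns that check into a one-line computation and also explains \emph{why} normalization is the culprit. Your NDCG witness is complete and correct: with $\sigma_a=123$, $\sigma_b=213$ the difference is $c\,(R(1)-R(2))/Z(R)$ and the second difference over $\mathbf 0, e_1, e_3, e_1+e_3$ is $c\,(1/Z(2)-1/Z(1))\neq 0$. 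For AP your plan also goes through at $m=3$ (with the same pair of rankings and the same quadruple one gets second difference $\tfrac14-\tfrac12=-\tfrac14\neq 0$), though you have not actually carried out this computation, and you yourself flag the verification as ``the crux,'' so as written the AP and AUC cases are a plan rather than a proof.

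The concrete gap is AUC: your proposed witness at $m=3$ provably does not exist. For $m=3$ the normalizer $N(R)=\|R\|_1(m-\|R\|_1)$ equals $2$ for \emph{every} relevance vector with a nontrivial mix of relevant and irrelevant items ($\|R\|_1\in\{1,2\}$), and the loss is $0$ at $R=\mathbf 0$ and $R=\mathbf 1$. Hence every AUC loss difference coincides with $\tfrac12\bigl(PL(\sigma_a,R)-PL(\sigma_b,R)\bigr)$, which by Eq.~\ref{rankloss-sumloss} equals $\tfrac12(\sigma_a^{-1}-\sigma_b^{-1})\cdot R$ and is therefore affine; every second difference vanishes and global observability \emph{holds} for AUC at $m=3$. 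This is exactly the ``conspiracy'' you warn about for NDCG at $m=2$, but you then commit to $m=3$ for AUC anyway. The paper handles this by moving to $m=4$, where $N(R)$ takes the distinct values $3$ and $4$ for $\|R\|_1\in\{1,3\}$ versus $\|R\|_1=2$, and exhibits a non-member difference there; your argument needs the same fix (e.g.\ $m=4$, an adjacent transposition, and a quadruple whose members straddle $\|R\|_1=1$ and $\|R\|_1=2$), plus the explicit numerator bookkeeping you correctly identify as nontrivial because AUC's numerator is itself bilinear in $R$.
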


Combining the above lemma with Theorem 2 of \cite{bartok2013}, we conclude that there \emph{cannot exist any algorithm which has sub-linear regret for any of the following measures: NDCG, AP or AUC, when restricted to top $1$ feedback}.
\begin{thm}
There exists an online game, for NDCG with top-$1$ feedback, such that for every learner's algorithm, there is an adversary strategy generating relevance vectors, such that the expected regret of the learner is $\Omega(T)$. Furthermore, the same lower bound holds if NDCG is replaced by AP or AUC.
\end{thm}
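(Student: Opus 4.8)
The plan is to recognize that, for each of the three normalized measures, the online game with top-$1$ feedback is a \emph{finite} partial monitoring game, and then to apply the generic classification of such games: a game that fails global observability is ``hopeless'' and suffers $\Omega(T)$ minimax regret. Since Lemma~\ref{globalfailsfornormalized} already supplies the failure of global observability, the theorem should follow as a direct corollary, uniformly across NDCG, AP and AUC.

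First I would cast each measure as a loss/feedback matrix pair exactly as was done for SumLoss in Section~\ref{partialmonitoring-NC}. The feedback matrix is unchanged: top-$1$ feedback still reveals only $R_j(\sigma_i(1))$, so the relevant matrix is the same $H$ used throughout. Only the loss matrix changes. For AUC, which is already a loss, the $(i,j)$ entry is $\mathrm{AUC}(\sigma_i,R_j)$; for the gain functions NDCG and AP, I would negate the measure, setting the $(i,j)$ entry to $-\mathrm{NDCG}(\sigma_i,R_j)$ (resp. $-\mathrm{AP}(\sigma_i,R_j)$), so that maximizing gain becomes minimizing loss and the regret in~\eqref{eq:objectiveregret} is preserved up to sign. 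In all three cases the learner has finitely many actions ($m!$ permutations) and the adversary finitely many outcomes ($2^m$ binary relevance vectors), so each is a bona fide finite partial monitoring game to which the abstract theory of \cite{bartok2013} applies.

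Next I would invoke Lemma~\ref{globalfailsfornormalized}, which asserts that global observability (Definition 3) fails for each of these three loss matrices paired with the top-$1$ feedback matrix $H$. Feeding this into Theorem 2 of \cite{bartok2013}---the lower-bound half of their classification, stating that any finite partial monitoring game which is \emph{not} globally observable has minimax regret $\Omega(T)$---immediately yields the conclusion: for each measure there is an adversary strategy forcing $\Omega(T)$ expected regret against every learner. Because the argument uses only the failure of global observability together with the shared feedback structure, the three measures are dispatched simultaneously.

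The proof of the theorem itself is therefore a short combination, and the genuine difficulty lives entirely in Lemma~\ref{globalfailsfornormalized}: one must show that the loss-difference vectors $\ell_i-\ell_j$ for the normalized measures escape the column span $\oplus_{k}\mathrm{Col}(S_k^{\top})$ of the signal matrices, in sharp contrast to the clean argument of Theorem~\ref{globalobservability} for SumLoss, where the nonlinear normalizing factor $Z(R)$ (resp. $\|R\|_1$, $N(R)$) is precisely what obstructs the unbiased reconstruction of loss differences from top-$1$ feedback. The only bookkeeping obstacle I anticipate for the theorem as stated is confirming that the general, action-enumerating classification of \cite{bartok2013} remains valid for our exponentially large (but finite) action set, that negating a gain preserves both the feedback matrix and the regret notion, and that the game is non-trivial (no single permutation is optimal against every relevance vector, so the $\Omega(T)$ branch rather than the trivial branch is the operative one); none of these is deep, which is why the substance of the impossibility is carried by the lemma rather than the theorem.
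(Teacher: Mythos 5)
Your proposal is correct and follows the paper's own route exactly: the paper likewise derives the theorem as an immediate consequence of Lemma~\ref{globalfailsfornormalized} combined with Theorem 2 of \cite{bartok2013}, with all the substantive work (exhibiting loss-difference vectors $\ell_i-\ell_j$ outside the combined column span of the transposed signal matrices for $m=3$, and $m=4$ for AUC) residing in the lemma's proof. The bookkeeping points you flag (negating gains, finiteness of the action set, non-triviality of the game) are handled implicitly in the paper and do not change the argument.
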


\section{Online Ranking with Restricted Feedback- Contextual Setting}
\label{C}
All proofs not in the main text are in Appendix B.
\subsection{Problem Setting and Learning to Rank Algorithm}
First, we introduce some additional notations to Section~\ref{preliminaries-NC}. In the contextual setting, each query and associated items (documents) are represented jointly as a feature matrix. Each feature matrix, $X \in \mathbb{R}^{m \times d}$, consists of a list of $m$ documents, each represented as a feature vector in $\mathbb{R}^d$. The feature matrices are considered side-information (context) and represents varying items, as opposed to the fixed set of items in the first part of our work. $X_{i:}$ denotes $i$th row of $X$.  We assume feature vectors representing documents are bounded by $R_D$ in $\ell_2$ norm. The relevance vectors are same as before.

As per traditional learning to rank setting with query-document matrices, documents are ranked by a ranking function. The prevalent technique is to represent a ranking function as a scoring function and get ranking by sorting scores in descending order. A linear scoring function produces score vector as $f_{w}(X)= Xw= s^w \in \mathbb{R}^m$, with $w \in \mathbb{R}^d$. Here, $s^w(i)$ represents score of $i$th document ($s^w$ points to score $s$ being generated by using parameter $w$). We assume that ranking parameter space is bounded in $\ell_2$ norm, i.e, $\|w\|_2 \le U$, $\forall \ w$. $\pi_s = \argsort(s)$ is the permutation induced by sorting score vector $s$ in descending order. As a reminder, a permutation $\pi$ gives a mapping from ranks to documents and $\pi^{-1}$ gives a mapping from documents to ranks. 

Performance of ranking functions are judged, based on the rankings obtained from score vectors, by ranking measures like DCG, AP and others. However, the measures themselves are discontinuous in the score vector produced by the ranking function, leading to intractable optimization problems. Thus, most learning to rank methods are based on minimizing \emph{surrogate} losses, which can be optimized efficiently. A surrogate $\phi$ takes in a score vector $s$ and relevance vector $R$ and produces a real number, i.e., $\phi: \mathbb{R}^m \times \{0,1,\ldots,n \}^m \mapsto \mathbb{R}$. $\phi(\cdot,\cdot)$ is said to be convex if it is convex in its first argument, for any value of the second argument. Ranking surrogates are designed in such a way that the ranking function learnt by optimizing the surrogates has good performance with respect to ranking measures.  

{\bf Formal problem setting}: We formalize the problem as a game being played between a learner and an \emph{oblivious} adversary over $T$ rounds (i.e., an adversary who generates moves without knowledge of the learner's algorithm). The learner's action set is the uncountably infinite set of score vectors in $\mathbb{R}^m$ and the adversary's action set is all possible relevance vectors, i.e., $(n+1)^m$ possible vectors. At round $t$, the adversary generates a list of documents, represented by a matrix $X_t \in \mathbb{R}^{m \times d}$, pertaining to a query (the document list is considered as side information). The learner receives $X_t$, produces a score vector $\tilde{s}_t \in \reals^m$ and ranks the documents by sorting according to score vector. The adversary then generates a relevance vector $R_t$ but only reveals the relevances of top $k$ ranked documents to the learner. The learner uses the feedback to choose its action for the next round (updates an internal scoring function). The learner suffers a loss as measured in terms of a surrogate $\phi$, i.e, $\phi(\tilde{s}_t,R_t)$. As is standard in online learning setting, the learner's performance is measured in terms of its expected regret: 
\begin{equation*}
\E \left[\sum_{t=1}^T \phi(\tilde{s}_t,R_t) \right] - \min_{\|w\|_2 \le U} \sum_{t=1}^T \phi(X_tw, R_t),
\end{equation*} 
where the expectation is taken w.r.t. to randomization of learner's strategy and $X_tw = s_t^w$ is the score produced by the linear function parameterized by $w$.

\floatstyle{ruled}
\newfloat{algorithm}{htbp}{loa}
\floatname{algorithm}{Algorithm}
\begin{algorithm*}
\caption{Ranking with Top-k Feedback (RTop-kF)- Contextual}
\label{alg:RTop-kF}
\begin{tabbing}
1: Exploration parameter $\gamma \in (0,\frac{1}{2})$, learning parameter $\eta>0$, ranking parameter  $w_1=\mathbf{0} \in \mathbb{R}^d$\\
2: {\bf For} \=$t=1$ to $T$ \\
3: \> Receive $X_t$ (document list pertaining to query $q_t$)\\
4: \> Construct score vector $s_t^{w_t}= X_tw_t$ and get permutation $\sigma_t= \argsort(s_t^{w_t})$ \\
5: \> $\mathbb{Q}_t(s) = (1-\gamma)\delta(s-s^{w_t}_t) + \gamma\text{Uniform}([0,1]^m)$ ($\delta$ is the Dirac Delta function). \\
6: \> Sample $\tilde{s}_t \sim \mathbb{Q}_t$ and output the ranked list $\tilde{\sigma}_t = \argsort(\tilde{s}_t)$ \\
   \> (Effectively, it means $\tilde{\sigma}_t$ is drawn from $\mathbb{P}_t(\sigma)= (1-\gamma)\mathbbm{1}(\sigma=\sigma_t) + \frac{\gamma}{m!}$) \\
7: \> Receive relevance feedback on top-$k$ items, i.e., ($R_t( \tilde{\sigma}_t(1)), \ldots, R_t(\tilde{\sigma}_t(k))$)\\
8: \> Suffer loss $\phi(\tilde{s}_t,R_t)$ (Neither loss nor $R_t$ revealed to learner) \\
9: \> Construct $\tilde{z}_t$, an unbiased estimator of gradient $\nabla_{w=w_t} \phi(X_t w,R_t)$, from  top-$k$ feedback.\\ 
10: \> Update $w= w_t - \eta \tilde{z}_t$ \\
11: \> $w_{t+1}= \min \{1, \frac{\text{U}}{\|w\|_2}\} w$ (Projection onto Euclidean ball of radius $U$).\\
12: {\bf End For}
\end{tabbing}
\end{algorithm*} 

{\bf Relation between feedback and structure of surrogates:}
Algorithm~\ref{alg:RTop-kF} is our general algorithm for learning a ranking function, online, from partial feedback. 
The key step in Algorithm~\ref{alg:RTop-kF} is the construction of the unbiased estimator $\tilde{z}_t$ of the surrogate gradient $\nabla_{w=w_t} \phi(X_t w, R_t)$. The information present for the construction process, at end of round $t$, is the random score vector $\tilde{s}_t$ (and associated permutation $\tilde{\sigma}_t$) and relevance of top-$k$ items of $\tilde{\sigma}_t$, i.e., $\{R_t( \tilde{\sigma}_t(1)), \ldots, R_t(\tilde{\sigma}_t(k)\}$. Let $\E_{t}\left[\cdot\right]$ be the expectation operator w.r.t. to randomization at round $t$, conditioned on $(w_1,\ldots, w_t)$. Then $\tilde{z}_t$ being an unbiased estimator of gradient of surrogate, w.r.t $w_t$,  means the following: $\E_{t} \left[\tilde{z}_t \right]= \nabla_{w=w_t} \phi(X_tw, R_t)$. We note that conditioned on the past, the score vector $s^{w_t}_t = X_t w_t$ is deterministic. We start with a general result relating feedback to the construction of unbiased estimator of a vector valued function. Let $\mathbb{P}$ denote a probability distribution on $S_m$, i.e, $\sum_{\sigma \in S_m} \mathbb{P}(\sigma)= 1$. For a distinct set of indices $(j_1,j_2,\ldots, j_k)$ $\subseteq$ $[m]$, we denote $p(j_i,j_2,\ldots,j_k)$ as the the sum of probability of permutations whose first $k$ objects match objects $(j_1, \ldots, j_k)$, in order. Formally,
\begin{equation}
\label{eq:shortprob}
\begin{split}
p(j_1,  \ldots, j_k) = 
 \sum\limits_{\pi \in S_m}\mathbb{P}(\pi) \mathbbm{1}(\pi(1)= j_1,\ldots,\pi(k)=j_k) .
\end{split}
\end{equation}
We have the following lemma relating feedback and structure of surrogates:
\begin{lem}
\label{unbiasedestimator}
Let $F: \mathbb{R}^m \mapsto \mathbb{R}^a$ be a vector valued function, where $m\ge 1$, $a\ge 1$. For a fixed $x \in \mathbb{R}^m$, let $k$ entries of $x$ be observed at random. That is, for a fixed probability distribution $\mathbb{P}$ and some random $\sigma \sim \mathbb{P}(S_m)$, observed tuple is $\{\sigma, x_{\sigma(1)}, \ldots, x_{\sigma(k)}\}$. A necessary condition for existence of an unbiased estimator of $F(x)$, that can be constructed from $\{\sigma, x_{\sigma(1)}, \ldots, x_{\sigma(k)}\}$, is that it should be  possible to decompose $F(x)$ over $k$ (or less) coordinates of $x$ at a time. That is, $F(x)$ should have the structure:
\begin{equation}
\label{eq:decoupling}
F(x)= \sum\limits_{(i_1,i_2,\ldots,i_{\ell}) \in \ \perm{m}{\ell}} h_{i_1,i_2,\ldots,i_{\ell}}(x_{i_1}, x_{i_2},\ldots, x_{i_{\ell}}) 
\end{equation}
where $\ell \le k$, $\perm{m}{\ell}$ is $\ell$ permutations of $m$ and $h: \mathbb{R}^{\ell} \mapsto \mathbb{R}^a$ (the subscripts in $h$ are used to denote possibly different functions in the decomposition structure).
Moreover, when $F(x)$ can be written in form of Eq~\ref{eq:decoupling} , with $\ell=k$, an unbiased estimator of $F(x)$, based on $\{\sigma, x_{\sigma(1)}, \ldots, x_{\sigma(k)}\}$, is, 
\begin{equation}
\label{eq:unbiasedestimator}
\begin{split}
g(\sigma, & x_{\sigma(1)}, \ldots, x_{\sigma(k)})=\\
& \dfrac{\sum\limits_{(j_1,j_2,\ldots, j_k) \in S_k}h_{\sigma(j_1),\ldots, \sigma(j_k)}(x_{\sigma(j_1)}, \ldots, x_{\sigma(j_k)})}{\sum\limits_{\substack{(j_1,\ldots, j_k) \in S_k}} p(\sigma(j_1),\ldots,\sigma(j_k))} 
\end{split}
\end{equation}
where $S_k$ is the set of $k!$ permutations of $[$k$]$ and $p(\sigma(1),\ldots, \sigma(k))$ is as in Eq~\ref{eq:shortprob} .
\end{lem}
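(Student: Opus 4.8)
The plan is to handle the two assertions separately: the necessity of the decomposition in Eq.~\ref{eq:decoupling}, which is essentially a re-indexing argument, and the correctness of the explicit estimator in Eq.~\ref{eq:unbiasedestimator}, which is a direct expectation computation.

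\emph{Necessity.} First I would observe that any estimator constructed from the observation $\{\sigma, x_{\sigma(1)}, \ldots, x_{\sigma(k)}\}$ is a function $g(\sigma, x_{\sigma(1)}, \ldots, x_{\sigma(k)})$ whose dependence on $x$ enters only through the $k$ observed coordinates. Unbiasedness means $\sum_{\sigma \in S_m} \mathbb{P}(\sigma)\, g(\sigma, x_{\sigma(1)}, \ldots, x_{\sigma(k)}) = F(x)$ for every $x \in \mathbb{R}^m$. The key step is to group the permutations by the \emph{ordered} tuple of objects they place in the top $k$ positions: for each $(i_1, \ldots, i_k) \in \perm{m}{k}$ set $h_{i_1, \ldots, i_k}(x_{i_1}, \ldots, x_{i_k}) := \sum_{\sigma :\, \sigma(1)=i_1, \ldots, \sigma(k)=i_k} \mathbb{P}(\sigma)\, g(\sigma, x_{i_1}, \ldots, x_{i_k})$. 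Every $\sigma$ in this group reads exactly the coordinates $x_{i_1}, \ldots, x_{i_k}$, so $h_{i_1,\ldots,i_k}$ genuinely depends on those $k$ arguments only; regrouping the unbiasedness identity by first-$k$ tuple then exhibits $F$ in the form of Eq.~\ref{eq:decoupling} with $\ell = k \le k$, which is exactly the claimed necessary condition.

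\emph{Sufficiency.} Taking the decomposition with $\ell=k$ as given, I would verify that $g$ in Eq.~\ref{eq:unbiasedestimator} satisfies $\E_\sigma[g] = F(x)$. The crucial structural observation is that both the numerator and the denominator of $g$ are invariant under reordering of the observed objects, because the sums run over all $k!$ permutations in $S_k$ of the observed positions; hence each depends on $\sigma$ only through the \emph{unordered} set $T_\sigma := \{\sigma(1), \ldots, \sigma(k)\}$. In particular the denominator equals $D(T_\sigma) = \mathbb{P}[\{\pi(1), \ldots, \pi(k)\} = T_\sigma]$, the total probability under $\mathbb{P}$ that a fresh draw places exactly the set $T_\sigma$ in its top $k$ slots (summing Eq.~\ref{eq:shortprob} over all orderings). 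I would then compute $\E_\sigma[g]$ by grouping the outer sum over $\sigma$ by this top-$k$ set $T$: the probability mass $\sum_{\sigma:\, T_\sigma = T} \mathbb{P}(\sigma)$ is precisely $D(T)$, which cancels the denominator and leaves $\sum_T \sum_{\text{orderings } (i_1,\ldots,i_k) \text{ of } T} h_{i_1,\ldots,i_k}(x_{i_1},\ldots,x_{i_k}) = \sum_{(i_1,\ldots,i_k) \in \perm{m}{k}} h_{i_1,\ldots,i_k}(x_{i_1},\ldots,x_{i_k}) = F(x)$. A small point to record for well-definedness is that whenever $\sigma$ is actually observed ($\mathbb{P}(\sigma) > 0$), the permutation $\sigma$ itself contributes to $D(T_\sigma)$, so the denominator is strictly positive.

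The main obstacle is bookkeeping rather than any deep difficulty: one must carefully distinguish the \emph{ordered} $k$-tuples that index the decomposition and the functions $h$ from the \emph{unordered} top-$k$ sets on which the estimator truly depends. The heart of the matter is that the symmetrization over $S_k$ in both numerator and denominator is exactly what collapses the denominator to the top-$k$-set probability, so that it cancels against the grouped probability mass; keeping the two groupings consistent—by ordered tuple in the necessity proof and by unordered set in the sufficiency proof—is where the care is needed.
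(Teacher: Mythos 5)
Your proposal is correct and follows essentially the same route as the paper: necessity by regrouping the unbiasedness identity over ordered top-$k$ tuples and absorbing $\mathbb{P}(\sigma)$ into the definition of $h_{i_1,\ldots,i_k}$, and sufficiency by observing that the symmetrized numerator and denominator depend on $\sigma$ only through the unordered top-$k$ set, so that grouping the expectation by that set cancels the denominator. Your explicit remark that the denominator is strictly positive whenever $\mathbb{P}(\sigma)>0$ is a small well-definedness point the paper leaves implicit, but otherwise the two arguments coincide.
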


{\bf Illustrative Examples:} We provide simple examples to concretely illustrate the abstract functions in Lemma~\ref{unbiasedestimator}. Let $F(\cdot)$ be the identity function, and $x \in \reals^m$.  Thus, $F(x)=x$ and the function decomposes over $k=1$ coordinate of x as follows: $F(x)= \sum_{i=1}^m x_i e_i$, where $e_i \in \mathbb{R}^m$ is the standard basis vector along coordinate $i$. Hence, $h_{i}(x_{i})= x_{i} e_{i}$. Based on top-1 feedback, following is an unbiased estimator of $F(x)$: $g(\sigma, x_{\sigma(1)})= \dfrac{x_{\sigma(1)}e_{\sigma(1)}}{p(\sigma(1))}$,
where $p(\sigma(1))=  \sum\limits_{\pi \in S_m}\mathbb{P}(\pi) \mathbbm{1}(\pi(1)= \sigma(1))$. In another example, let $F: \reals^3 \mapsto \reals^2$ and $x \in \reals^3$. Let $F(x)= [x_1+ x_2; x_2+x_3]^{\top}$. Then the function decomposes over $k=1$ coordinate of $x$ as $F(x)= x_1 e_1 + x_2 (e_1 + e_2) + x_3 e_2$, where $e_i \in \reals^2$. Hence, $h_1 (x_1) = x_1 e_1$, $h_2(x_2)= x_2(e_1 + e_2)$ and $h_3(x_3)= x_3 e_2$. An unbiased estimator based on top-1 feedback is: $g(\sigma, x_{\sigma(1)})= \dfrac{h_{\sigma(1)}(x_{\sigma(1)})}{p(\sigma(1))}$.

\subsection{Unbiased Estimators of Gradients of Surrogates}
Algorithm~\ref{alg:RTop-kF} can be implemented for any ranking surrogate as long as an unbiased estimator of the gradient can be constructed from the random feedback. We will use techniques from \emph{online convex optimization} to obtain formal regret guarantees. We will thus construct the unbiased estimator of four major ranking surrogates. Three of them are popular \emph{convex} surrogates, one each from the three major learning to rank methods, i.e., \emph{pointwise}, \emph{pairwise} and \emph{listwise} methods. The fourth one is a popular \emph{non-convex} surrogate.

{\bf Shorthand notations:} We note that by chain rule, $\nabla_{w=w_t} \phi(X_t w,R_t)$= $X_t^{\top} \nabla_{s^{w_t}_t} \phi(s^{w_t}_t,R_t)$, where $s_t^{w_t}= X_t w_t$.  Since $X_t$ is deterministic in our setting, we focus on unbiased estimators of $\nabla_{s^{w_t}_t} \phi(s^{w_t}_t,R_t)$ and take a matrix-vector product with $X_t$. To reduce notational clutter in our derivations, we drop $w$ from $s^w$ and the subscript $t$ throughout. Thus, in our derivations, $\tilde{z}= \tilde{z}_t$, $X=X_t$, $s= s_t^{w_t}$ (and not $\tilde{s}_t$), $\sigma= \tilde{\sigma}_t$ (and not $\sigma_t$), $R= R_t$, $e_i$ is standard basis vector in $\mathbb{R}^m$ along coordinate $i$ and $p(\cdot)$ as in Eq.~\ref{eq:shortprob} with $\mathbb{P}= \mathbb{P}_t$ where $\mathbb{P}_t$ is the distribution in round $t$ in Algorithm~\ref{alg:RTop-kF}.

\subsubsection{ Convex Surrogates}
\label{convexsurrogates}
{\bf Pointwise Method:} We will construct the unbiased estimator of the gradient of squared loss \citep{cossock2006subset}: $\phi_{sq}(s,R)= \|s-R\|_2^2$. The gradient $\nabla_s \phi_{sq}(s,R)$ is $2(s-R) \in \mathbb{R}^m$.  As we have already demonstrated in the example following Lemma~\ref{unbiasedestimator}, \emph{we can construct unbiased estimator of $R$ from top-1 feedback }($\{\sigma,R(\sigma(1))\}$). Concretely, the unbiased estimator is:
\[
\tilde{z}=  X^{\top} \left(2 \left(s  - \dfrac{R(\sigma(1)) e_{\sigma(1)}}{p(\sigma(1))} \right) \right) .
\] 

{\bf Pairwise Method:} We will construct the unbiased estimator of the gradient of hinge-like surrogate in RankSVM \citep{joachims2002}: $\phi_{svm}(s,R)= \sum_{i \neq j =1} \mathbbm{1}(R(i)>R(j)) \max (0, 1 + s(j) -s(i))$. The gradient is given by $\nabla_s \phi_{svm}(s,R)=\sum_{i \neq j =1}^m  \mathbbm{1}(R(i)>R(j)) \mathbbm{1}(1+s(j)>s(i)) (e_j- e_i) \in \mathbb{R}^m$. Since $s$ is a known quantity, from Lemma~\ref{unbiasedestimator}, we can construct $F(R)$ as follows: $F(R)= F_s(R)= \sum_{i \neq j =1}^m h_{s, i,j}(R(i),R(j))$, where $ h_{s, i,j}(R(i),R(j))= \mathbbm{1}(R(i) >R(j)) \mathbbm{1}(1+s(j)>s(i)) (e_j - e_i)$.  Since $F_s(R)$ is decomposable over 2 coordinates of $R$ at a time, \emph{we can construct an unbiased estimator from top-2 feedback} ($\{\sigma, R({\sigma(1)}), R(\sigma(2))\}$). The unbiased estimator is:
\[
\begin{split}
\tilde{z}=  X^{\top} \left(\dfrac{h_{s,\sigma(1),\sigma(2)}(R(\sigma(1)), R(\sigma(2))) + h_{s,\sigma(2),\sigma(1)}(R(\sigma(2)), R(\sigma(1)))}{p(\sigma(1),\sigma(2))+ p(\sigma(2),\sigma(1))}\right) .
\end{split}
\]

We note that the unbiased estimator was constructed from top-2 feedback. The following lemma, in conjunction with the necessary condition of Lemma~\ref{unbiasedestimator} shows that it is the minimum information required to construct the unbiased estimator.

\begin{lem}
\label{RankSVM}
The gradient of RankSVM surrogate, i.e., $\phi_{svm}(s,R)$ cannot be decomposed over 1 coordinate of R at a time. 
\end{lem}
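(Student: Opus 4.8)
The plan is to prove the claim by contradiction, exhibiting a single score vector $s$ for which the gradient, viewed as a function of the relevance vector, provably fails to decompose over one coordinate of $R$ at a time. Write $F_s(R) = \nabla_s \phi_{svm}(s,R) = \sum_{i \neq j} c_{ij}\, \mathbbm{1}(R(i) > R(j))\,(e_j - e_i)$, where $c_{ij} = \mathbbm{1}(1 + s(j) > s(i))$ is a constant once $s$ is fixed. If $F_s$ admitted a decomposition $F_s(R) = \sum_{i=1}^m h_i(R(i))$, then the mixed second-order finite difference over any two coordinates $a \neq b$, namely $\Delta_{a,b} F_s := F_s(R^{11}) - F_s(R^{10}) - F_s(R^{01}) + F_s(R^{00})$ (where the superscripts give the values placed in coordinates $a,b$ and all other coordinates are held fixed), would vanish identically: each $h_i$ with $i \notin \{a,b\}$ is unchanged and cancels, while for $h_a$ both values $h_a(\cdot)$ occur with opposite signs (and likewise for $h_b$), so the whole difference is zero. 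Thus it suffices to find $s$ and relevance values making some $\Delta_{a,b} F_s$ nonzero.

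The key subtlety, which dictates the choice of $s$, is the following. For binary relevance one has $\mathbbm{1}(R(i) > R(j)) = R(i) - R(i)R(j)$, and the genuinely pairwise (quadratic) part $\sum_{i \neq j} c_{ij} R(i)R(j)(e_j - e_i)$ pairs the index $(i,j)$ with $(j,i)$ to give $R(i)R(j)\,(c_{ij} - c_{ji})\,(e_j - e_i)$. Hence if the weights are symmetric ($c_{ij} = c_{ji}$, as happens when all scores are equal) the pairwise part cancels and $F_s$ is in fact separable. I must therefore break this symmetry: choosing $s$ with $s(1) - s(2) > 1$ (e.g.\ $s(1)=2$, $s(2)=0$, and arbitrary fixed values on the remaining coordinates) gives $c_{12} = 0$ but $c_{21} = 1$, so the $R(1)R(2)$ interaction survives.

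It then remains to evaluate $\Delta_{1,2} F_s$ directly. Holding $R(3),\ldots,R(m)$ fixed, every term of the sum except $(i,j) \in \{(1,2),(2,1)\}$ depends on at most one of $R(1),R(2)$ and is annihilated by $\Delta_{1,2}$; with the chosen $s$ only the $(2,1)$ term remains, so on these four relevance vectors $F_s$ reduces to $\mathbbm{1}(R(2) > R(1))\,(e_1 - e_2)$. Plugging the binary values $\{0,1\}$ into coordinates $1,2$ gives $\Delta_{1,2} F_s = 0 - 0 - (e_1 - e_2) + 0 = e_2 - e_1 \neq 0$, contradicting separability and establishing the claim for every $m \ge 2$.

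The main obstacle is not the finite-difference bookkeeping, which is routine, but the preceding observation that the naive counterexample fails: with binary relevances and equal scores the gradient is secretly linear in $R$ and hence decomposable, so the entire argument hinges on engineering an asymmetric score gap exceeding $1$ that makes one comparison direction active and the other inactive.
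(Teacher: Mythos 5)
Your proof is correct and follows essentially the same route as the paper's: fix a score vector with an asymmetric gap (the paper uses $s=(1,0,0)$, you use $s(1)-s(2)>1$) so that exactly one of the two $\{1,2\}$-comparison terms survives, then show the resulting dependence on $(R(1),R(2))$ violates additive separability via a difference in $R(1)$ taken at two settings of the other coordinates. Your formulation via the mixed second difference $\Delta_{1,2}F_s$ is a slightly cleaner packaging of the same test, and your observation that symmetric weights $c_{ij}=c_{ji}$ would make the gradient secretly linear (hence separable) in binary $R$ is a worthwhile remark the paper omits, since it explains why the score vector must be chosen with care.
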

 
%
{\bf Listwise Method:}  Convex surrogates developed for listwise methods of learning to rank are defined over the entire score vector and relevance vector. Gradients of such surrogates cannot usually be decomposed over coordinates of the relevance vector. We will focus on the cross-entropy surrogate used in the highly cited ListNet \citep{cao2007learning} ranking algorithm and show how a very natural modification to the surrogate makes its gradient estimable in our partial feedback setting.

The authors of the ListNet method use a cross-entropy surrogate on two probability distributions on permutations, induced by score and relevance vector respectively. More formally, the surrogate is defined as follows\footnote{The ListNet paper actually defines a family of losses based on probability models for top $r$ documents, with $r \le m$. We use $r=1$ in our definition since that is the version implemented in their experimental results.}. Define $m$ maps from $\reals^m$ to $\reals$ as: $P_j(v) = \exp(v(j))/\sum_{j=1}^m \exp(v(j))$ for $j \in [m]$. Then, for score vector $s$ and relevance vector $R$, 
$\listnet(s,R) = - \sum_{i=1}^m P_i(R) \log P_i(s)$ and $\nabla_s \listnet(s,R) = \sum_{i=1}^m \left(- \frac{\exp(R(i))}{\sum_{j=1}^m \exp(R(j))} + \frac{\exp(s(i))}{\sum_{j=1}^m \exp(s(j))} \right) e_i$. We have the following lemma about the gradient of $\phi_{LN}$.
\begin{lem}
\label{listnet}
The gradient of ListNet surrogate $\phi_{LN}(s,R)$ cannot be decomposed over $k$, for $k = 1,2$, coordinates of R at a time.
\end{lem}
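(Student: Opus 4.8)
\emph{Proof proposal.} The plan is to isolate the only part of the gradient that depends on the relevance vector and show that the coupling it introduces through the softmax normalizer cannot be severed. Recall that $\nabla_s\listnet(s,R)=\sum_{i=1}^m\left(-\frac{\exp(R(i))}{\sum_{j=1}^m\exp(R(j))}+\frac{\exp(s(i))}{\sum_{j=1}^m\exp(s(j))}\right)e_i$. Since $s$ is fixed, the second term is a constant vector in $R$ and is trivially decomposable, and adding a constant preserves the structure of Eq~\ref{eq:decoupling}; hence it suffices to refute the decomposition for the map $G:R\mapsto\sum_i G_i(R)e_i$ with $G_i(R)=-\exp(R(i))/Z(R)$ and $Z(R)=\sum_{j=1}^m\exp(R(j))$. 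Writing the target structure as in Eq~\ref{eq:decoupling}, it is then enough to show that no output component $G_o$ admits such an additive form with $\ell\le k$, for $k=1,2$.

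Second, because relevances are discrete I would work on the binary cube $\{0,1\}^m$ (permitted since $n\ge1$) and replace the usual mixed-partial test by finite differences. The key structural fact is: if $G_o$ decomposes over at most $\ell$ coordinates, then every mixed finite difference of $G_o$ taken over $\ell+1$ \emph{distinct} coordinates vanishes, because each summand $h_{i_1,\ldots,i_\ell}$ omits at least one of the $\ell+1$ differenced coordinates and so has zero difference along it. Thus the $k=1$ claim reduces to exhibiting a nonzero second mixed difference, and the $k=2$ claim to a nonzero third mixed difference, of a single component $G_o$.

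Third, I would make the computation transparent by fixing an output coordinate $o$, choosing $\ell+1$ ``toggle'' coordinates all distinct from $o$ (possible once $m\ge k+2$), and holding every remaining coordinate at $0$. Since $R(o)=0$ throughout, $G_o=-1/Z$ with $Z=m+\kappa(\exp(1)-1)$, where $\kappa$ counts the toggle coordinates currently set to $1$; hence $G_o$ depends on the toggles only through $-\psi(\kappa)$ with $\psi(\kappa)=1/(m+\kappa(\exp(1)-1))$. The mixed finite difference over the toggle coordinates then equals the $(\ell+1)$-th forward difference of $-\psi$. Because $\psi$ has nowhere-vanishing derivatives of every order (for $k=1$, strict convexity already forces a strictly positive second difference), the mean-value form of finite differences makes this forward difference nonzero. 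This contradicts decomposability in both cases and proves the lemma.

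The main obstacle, and the point needing care, is precisely that the relevance domain is discrete: the convenient ``all mixed partials vanish'' characterization of additive separability is unavailable, so one must argue entirely with finite differences on the hypercube and verify that the all-to-all coupling induced by the normalizer $Z$ genuinely survives differencing rather than cancelling. The only residual case is small $m$ (for instance $m=3$ with $k=2$), where no toggle coordinate distinct from $o$ exists and one must include $o$ among the differenced coordinates; there $\exp(R(o))$ also varies, but the same $Z$-coupling still yields a nonzero difference by a direct finite check.
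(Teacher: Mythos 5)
Your core argument is the same as the paper's: isolate the $R$-dependent coefficient $-\exp(R(i))/\sum_j \exp(R(j))$ of $e_i$ and kill decomposability over $\ell$ coordinates by exhibiting a nonvanishing mixed difference of order $\ell+1$ over distinct coordinates. The paper does this by taking mixed \emph{partial derivatives} in $R$, which is formally dubious since $R$ lives in a discrete grid; your replacement of partials by mixed finite differences on the hypercube, together with the observation that the mixed difference reduces to the $(\ell+1)$-th forward difference of $\psi(\kappa)=1/(m+\kappa(\exp(1)-1))$ and the mean-value form of forward differences, is a genuinely cleaner and fully rigorous version of the same idea. For $m\ge k+2$ your argument is correct and complete.

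The gap is in your residual case. For $m=3$, $k=2$ you must difference over all three coordinates including the output coordinate $o$, and you assert that ``a direct finite check'' gives a nonzero value. On the binary cube it does not: writing $G_1(R)=-\exp(R_1)/(\exp(R_1)+\exp(R_2)+\exp(R_3))$, one computes
\begin{equation*}
G_1(111)-\bigl[G_1(110)+G_1(101)+G_1(011)\bigr]+\bigl[G_1(100)+G_1(010)+G_1(001)\bigr]-G_1(000)
= -\tfrac{1}{3}+1-1+\tfrac{1}{3}=0 ,
\end{equation*}
since the three level-two values sum to $-(2e+1)/(2e+1)=-1$ and the three level-one values sum to $-(e+2)/(e+2)=-1$. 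In fact, since for a function on $\{0,1\}^3$ the vanishing of this single top-order mixed difference is \emph{equivalent} to being a sum of functions of at most two coordinates, the restriction of the ListNet gradient to binary relevance vectors with $m=3$ genuinely \emph{is} decomposable over pairs, so no proof can rescue that corner on the binary cube. The case is salvageable only by using a relevance grade $\ge 2$ (e.g., differencing $R_1$ from $0$ to $2$ gives a nonzero value $\approx 0.032$), or by restricting the claim to $m\ge k+2$. The same phenomenon occurs for $m=2$, $k=1$. This is a corner case you yourself flagged, and the paper's own proof silently skips it by differentiating with respect to discrete variables, but your stated resolution of it is incorrect as written.
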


In fact, an examination of the proof of the above lemma reveals that decomposability at any $k < m$ does not hold for the gradient of LisNet surrogate, though we only prove it for $k=1,2$ (since feedback for top $k$ items with $k>2$ does not seem practical).  Due to Lemma~\ref{unbiasedestimator}, this means that if we want to run Alg.~\ref{alg:RTop-kF} under top-$k$ feedback, a modification of ListNet is needed. We now make such a modification. 

We first note that the cross-entropy surrogate of ListNet can be easily obtained from a standard divergence, viz. Kullback-Liebler divergence. Let $p, q \in \reals^m$ be 2 probability distributions ($\sum_{i=1}^m p_i= \sum_{i=1}^m q_i= 1$). Then $KL(p,q)= \sum_{i=1}^m p_i \log(p_i) - \sum_{i=1}^m p_i \log(q_i) - \sum_{i=1}^m p_i + \sum_{i=1}^m q_i$. Taking $p_i= P_i(R)$ and $q_i= P_i(s)$, $\forall \ i\in [m]$ (where $P_i(v)$ is as defined in $\listnet$) and noting that $\listnet(s,R)$ needs to be minimized w.r.t. $s$ (thus we can ignore the $\sum_{i=1}^m p_i \log(p_i)$ term in $KL(p,q)$), we get the cross entropy surrogate from KL.

Our natural modification now easily follows by considering KL divergence for \emph{un-normalized} vectors (it should be noted that KL divergence is an instance of a Bregman divergence). Define $m$ maps from $\reals^m$ to $\reals$ as: $P'_j(v) = \exp(v(j))$ for $j \in [m]$. Now define $p_i= P'_i(R)$ and $q_i = P'_i(s)$. Then, the modified surrogate $\phi_{KL}(s,R)$ is:
$$\sum\limits_{i=1}^m e^{R(i)} \log(e^{R(i)}) - \sum\limits_{i=1}^m e^{R(i)} \log(e^{s(i)}) - \sum\limits_{i=1}^m e^{R(i)} + \sum\limits_{i=1}^m e^{s(i)} ,$$
and $\sum\limits_{i=1}^m \left(\exp(s(i)) - \exp(R(i)) \right) e_i$ is its gradient w.r.t. $s$.
Note that $\phi_{KL}(s,R)$ is non-negative and convex in $s$. Equating gradient to ${\bf 0} \in \reals^m$, at the minimum point, $s(i)= R(i), \ \forall \ i \in [m]$. Thus, the sorted order of optimal score vector agrees with sorted order of relevance vector and it is a valid ranking surrogate. 

Now, from Lemma~\ref{unbiasedestimator}, we can construct $F(R)$ as follows: $F(R)= F_s(R)= \sum_{i =1}^m h_{s, i}(R(i))$, where $h_{s, i}(R(i))= \left(\exp(s(i)) - \exp(R(i)) \right) e_i$. Since $F_s(R)$ is decomposable over 1 coordinate of $R$ at a time, \emph{we can construct an unbiased estimator from top-1 feedback} ($\{\sigma, R(\sigma(1)) \}$). The unbiased estimator is:
\[
{\bf \tilde{z}}= X^{\top} \left( \dfrac{(\exp(s(\sigma(1))) - \exp(R(\sigma(1))))e_{\sigma(1)}}{p(\sigma(1))}\right)
\]
 
{\bf Other Listwise Methods:} As we mentioned before, most listwise convex surrogates will not be suitable for Algorithm~\ref{alg:RTop-kF} with top-k feedback. For example, the class of popular listwise surrogates that are developed from structured prediction perspective \citep{chapelle2007, yue2007} cannot have unbiased estimator of gradients from top-k feedback since they are based on maps from full relevance vectors to full rankings and thus cannot be decomposed over $k=1$ or $2$ coordinates of $R$. It does not appear they have any natural modification to make them amenable to our approach.

\subsubsection{Non-convex Surrogate}
\label{nonconvexsurrogate}
We provide an example of a non-convex surrogate for which Alg.~\ref{alg:RTop-kF} is applicable (however it will not have any regret guarantees due to non-convexity). We choose the SmoothDCG surrogate given in \citep{chapelle2010gradient}, which has been shown to have very competitive empirical performance. SmoothDCG, like ListNet, defines a family of surrogates, based on the cut-off point of DCG (see original paper \citep{chapelle2010gradient} for details). We consider  SmoothDCG@1, which is the smooth version of DCG@1 (i.e., DCG which focuses just on the top-ranked document). 
The surrogate is defined as: $\phi_{SD}(s,R)= \frac{1}{\sum_{j=1}^m \exp(s(j)/{\epsilon})} \sum_{i=1}^m G(R(i)) \exp(s(i)/{\epsilon})$, where $\epsilon$ is a (known) smoothing parameter and $G(a)= 2^a-1$. The gradient of the surrogate is:
\[
\begin{split}
&[\nabla_s \sdcg(s,R)] = \sum_{i=1}^m h_{s,i}(R(i)), \\
& h_{s,i}(R(i))=  G(R_i) \left(\sum_{j=1}^m \frac{1}{\epsilon} [ \frac{\exp(s(i)/\epsilon)}{\sum_{j'} \exp(s(j')/\epsilon)} \mathbbm{1}_{(i = j)} 
- \frac{\exp((s(i)+s(j))/\epsilon)}{(\sum_{j'} \exp(s(j')/\epsilon))^2} ]e_j \right)
\end{split}
\]
Using Lemma~\ref{unbiasedestimator}, we can write $F(R)= F_s(R)= \sum_{i=1}^m h_{s,i}(R(i))$ where $h_{s,i}(R(i))$ is defined above. Since $F_s(R)$ is decomposable over 1 coordinate of $R$ at a time, \emph{we can construct an unbiased estimator from top-1 feedback} ($\{\sigma, R(\sigma(1)) \}$), with unbiased estimator being:
\[
\begin{split}s(\sigma(1))
\tilde{z} =  X^{\top} \left(\dfrac{G(R(\sigma(1)))}{p(\sigma(1))} \sum_{j=1}^m \frac{1}{\epsilon} [ \dfrac{\exp(s(\sigma(1))/\epsilon)}{\sum_{j'} \exp(s(j')/\epsilon)} \mathbbm{1}_{(\sigma(1) = j)}
- \dfrac{\exp((s(\sigma(1))+s(j))/\epsilon)}{(\sum_{j'} \exp(s(j')/\epsilon))^2} ]e_j 
 \right)\\
\end{split}
\]

\subsection {Computational Complexity of Algorithm~\ref{alg:RTop-kF}}
\label{complexity-C}
Three of the four key steps governing the complexity of Algorithm~\ref{alg:RTop-kF}, i.e., construction of $\tilde{s}_t$, $\tilde{\sigma}_t$ and sorting can all be done in $O(m \log(m))$ time.The only bottleneck could have been calculations of $p(\tilde{\sigma_t}(1))$ in squared loss, (modified) ListNet loss and SmoothDCG loss, and $p(\tilde{\sigma_t}(1),\tilde{\sigma_t}(2))$ in RankSVM loss, since they involve sum over permutations. However, they have a compact representation, i.e., $p(\tilde{\sigma_t}(1))= (1 -\gamma + \frac{\gamma}{m}) \mathbbm{1}(\tilde{\sigma_t}(1)= \sigma_t(1)) + \frac{\gamma}{m} \mathbbm{1}(\tilde{\sigma_t}(1)\neq \sigma_t(1)) $ and $p(\tilde{\sigma_t}(1),\tilde{\sigma_t}(2))= (1 -\gamma +\frac{\gamma}{m(m-1)})\mathbbm{1}(\tilde{\sigma_t}(1)= \sigma_t(1), \tilde{\sigma_t}(2)= \sigma_t(2)) + \frac{\gamma}{m(m-1)} [\sim \mathbbm{1}(\tilde{\sigma_t}(1)= \sigma_t(1), \tilde{\sigma_t}(2)= \sigma_t(2))] $. The calculations follow easily due to the nature of $\mathbb{P}_t$ (step-6 in algorithm) which put equal weights on all permutations other than $\sigma_t$.

\subsection{Regret Bounds}
The underlying deterministic part of our algorithm is online gradient descent (OGD) \citep{zinkevich2003online}. The regret of OGD, run with unbiased estimator of gradient of a {\bf convex} function, as given in Theorem 3.1 of \citep{flaxman2005}, in our problem setting is:
\begin{equation}
\label{eq:flaxman}
\begin{split}
\E \left[\sum_ {t=1}^T \phi(X_tw_t, R_t) \right] \le \underset{w:\|w\|_2 \le U} {\min} \sum_{t=1}^T \phi(X_t w,R_t) + \frac{U^2}{2\eta} +  \frac{\eta}{2} \E \left[ {\sum_{t=1}^T \|\tilde{z}_t\|_2^2} \right]
\end{split} 
\end{equation}
where $\tilde{z}_t$ is unbiased estimator of $\nabla_{w=w_t}\phi(X_t w,R_t)$,  conditioned on past events, $\eta$ is the learning rate and the expectation is taken over all randomness in the algorithm.

However, from the perspective of the loss $\phi(\tilde{s}_t,R_t)$ incurred by Algorithm~\ref{alg:RTop-kF}, at each round $t$, the RHS above is not a valid upper bound. The algorithms plays the score vector suggested by OGD ($\tilde{s}_t= X_t w_t$) with probability $1-\gamma$ (exploitation) and plays a randomly selected score vector (i.e., a draw from the uniform distribution on $[0,1]^m$), with probability $\gamma$ (exploration). Thus, the expected number of rounds in which the algorithm does not follow the score suggested by OGD is $\gamma T$, leading to an extra regret of order $\gamma T$. Thus, we have \footnote{The instantaneous loss suffered at each of the exploration round can be maximum of $O(1)$, as long as $\phi(s,R)$ is bounded, $\forall \ s$ and $\forall \ R$. This is true because the score space is $\ell_2$ norm bounded, maximum relevance grade is finite in practice and we consider Lipschitz, convex surrogates.}
\begin{equation}
\label{eq:exploration}
\E \left[\sum_ {t=1}^T \phi(\tilde{s}_t, R_t) \right] \le \E \left[\sum_ {t=1}^T \phi(X_t w_t, R_t) \right]  + O \left(\gamma T \right) 
\end{equation}

We first control $\E_t \|\tilde{z}_t\|_2^2$, for all convex surrogates considered in our problem (we remind that $\tilde{z}_t$ is the estimator of a gradient of a surrogate, calculated at time $t$. In Sec~\ref{convexsurrogates} , we omitted showing $w$ in $s^w$ and index $t$).
To get bound on  $\E_t \|\tilde{z}_t\|_2^2$, we used the following norm relation that holds for any matrix $X$ \citep{bhaskara2011}: $\|X\|_{p \to q}= \underset{v \neq 0}{\sup} \frac{\|Xv\|_q}{\|v\|_p}$, where $q$ is the dual exponent of $p$ (i.e., $\tfrac{1}{q}+\tfrac{1}{p} = 1$), and the following lemma derived from it:
\begin{lem}
\label{normexpr}
For any $1 \leq p \leq \infty$, $\| X^{\top} \|_{1 \to p} = \| X \|_{q \to \infty} = \max_{j=1}^m \| X_{j:} \|_p$, 
where $X_{j:}$ denotes $j$th row of $X$ and $m$ is the number of rows of matrix.
\end{lem}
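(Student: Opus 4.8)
The plan is to establish the two claimed equalities by showing that each operator norm separately reduces to the common quantity $\max_{j}\|X_{j:}\|_p$. I would treat the two operator norms independently rather than trying to relate $X$ and $X^{\top}$ directly, since the two simplifications rely on different elementary facts: $\ell_p$-$\ell_q$ duality in one case, and subadditivity of the norm in the other. Throughout, $X\in\mathbb{R}^{m\times d}$ so that $X_{i:}\in\mathbb{R}^d$, and $q$ is the dual exponent of $p$, i.e.\ $1/p+1/q=1$.

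First I would handle $\|X\|_{q\to\infty}$. Writing $v\in\mathbb{R}^d$ and expanding the $\ell_\infty$ norm gives $\|Xv\|_\infty=\max_i|X_{i:}\cdot v|$, so that $\|X\|_{q\to\infty}=\sup_{v\neq 0}\max_i |X_{i:}\cdot v|/\|v\|_q=\max_i \sup_{v\neq 0}|X_{i:}\cdot v|/\|v\|_q$, where the interchange of $\max_i$ and $\sup_v$ is valid because the maximum runs over the finite index set $i\in[m]$ (one checks both inequalities directly). The inner supremum is exactly the dual norm of $\|\cdot\|_q$ evaluated at $X_{i:}$; since $1/p+1/q=1$, the tight form of H\"older's inequality gives $\sup_{v\neq 0}|u\cdot v|/\|v\|_q=\|u\|_p$, and hence $\|X\|_{q\to\infty}=\max_i\|X_{i:}\|_p$.

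Next I would handle $\|X^{\top}\|_{1\to p}$. Writing $v\in\mathbb{R}^m$ we have $X^{\top}v=\sum_{j=1}^m v_j (X_{j:})^{\top}$, so by the triangle inequality and homogeneity of the norm, $\|X^{\top}v\|_p\le\sum_j |v_j|\,\|X_{j:}\|_p\le (\max_j\|X_{j:}\|_p)\,\|v\|_1$, which yields $\|X^{\top}\|_{1\to p}\le\max_j\|X_{j:}\|_p$. The reverse inequality follows by taking $v=e_{j^*}$ for the row $j^*$ achieving the maximum, since $X^{\top}e_{j^*}=(X_{j^*:})^{\top}$ and $\|e_{j^*}\|_1=1$. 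Combining this with the previous paragraph yields the lemma.

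The computations are routine; the one step that warrants care is the duality identity $\sup_{v\neq 0}|u\cdot v|/\|v\|_q=\|u\|_p$ for dual exponents, which must be verified to hold tightly across the whole range $1\le p\le\infty$ --- including the degenerate endpoints $p=1,q=\infty$ and $p=\infty,q=1$, where the extremal $v$ is a sign vector or a single coordinate, respectively. I do not expect this to be a serious obstacle, so the proof should be short; the main point is simply to invoke $\ell_p$-$\ell_q$ duality cleanly and apply it uniformly over the full range of $p$.
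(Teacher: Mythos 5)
Your proof is correct. The paper does not actually supply a proof of this lemma---it is stated as a standard consequence of the definition $\|X\|_{p \to q} = \sup_{v \neq 0} \|Xv\|_q / \|v\|_p$, with a citation to the literature---so there is no paper proof to compare against; your argument (interchanging $\max_i$ with $\sup_v$ and invoking tight H\"older duality for the $q \to \infty$ norm, and the triangle inequality plus a standard basis vector for the $1 \to p$ norm) is the standard derivation, and it is complete, including the endpoint cases $p \in \{1,\infty\}$.
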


We have the following result:
\begin{lem}
\label{expectednorm}
For parameter $\gamma$ in Algorithm~\ref{alg:RTop-kF} , $R_D$ being the bound on $\ell_2$ norm of the feature vectors (rows of document matrix $X$), $m$ being the upper bound on number of documents per query, $U$ being the radius of the Euclidean ball denoting the space of ranking parameters and $R_{\max}$ being the maximum possible relevance value (in practice always $\le$ 5), let $C^{\phi} \in \{C^{sq}, C^{svm}, C^{KL}\}$ be polynomial functions of $R_D, m, U, R_{max}$, where the degrees of the polynomials depend on the surrogate ($\phi_{sq}, \phi_{svm}, \phi_{KL}$), with no degree ever greater than four. Then we have, 
\begin{equation}
\label{eq:expectednorm}
\begin{split}
 \E_t \left[\|\tilde{z}_t\|_2^2 \right] \le \dfrac{C^{\phi}}{\gamma}
\end{split}
\end{equation}
\end{lem}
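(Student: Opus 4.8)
The plan is to handle all three convex surrogates through a common two-step reduction and then close each case with surrogate-specific but routine magnitude bounds. Every estimator has the form $\tilde z = X^{\top} v$ for an explicit vector $v \in \reals^m$ (the bracketed quantity in each display of Sec.~\ref{convexsurrogates}), so I would first apply Lemma~\ref{normexpr} with $p = 2$ to peel off the data matrix: $\|\tilde z\|_2 = \|X^{\top} v\|_2 \le \|X^{\top}\|_{1 \to 2}\,\|v\|_1 = (\max_{j}\|X_{j:}\|_2)\,\|v\|_1 \le R_D\,\|v\|_1$. Hence $\E_t[\|\tilde z\|_2^2] \le R_D^2\,\E_t[\|v\|_1^2]$, and it suffices to control $\E_t[\|v\|_1^2]$ for each surrogate.

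The engine of the bound is an averaging identity that exploits the fact that $\tilde\sigma_t \sim \mathbb{P}_t$ and that $p(\cdot)$ (resp.\ $p(\cdot,\cdot)$) is exactly the top-$1$ (resp.\ top-$2$) marginal of $\mathbb{P}_t$ from Eq.~\ref{eq:shortprob}. For any function of the top item, $\E_t[g(\sigma(1))] = \sum_{j} p(j)\,g(j)$, so taking $g(j) = 1/p(j)^2$ gives the telescoping $\E_t[1/p(\sigma(1))^2] = \sum_{j} 1/p(j)$. The uniform-exploration component of $\mathbb{P}_t$ forces $p(j) \ge \gamma/m$ (read directly off the compact form in Sec.~\ref{complexity-C}), whence $\E_t[1/p(\sigma(1))^2] \le m^2/\gamma$; the same argument over ordered pairs yields $\E_t[1/(p(\sigma(1),\sigma(2)) + p(\sigma(2),\sigma(1)))^2] \le m^2(m-1)^2/\gamma$, using $p(j_1,j_2) \ge \gamma/(m(m-1))$. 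This is precisely where the single factor of $1/\gamma$ enters.

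It then remains to bound the \emph{numerators} of $v$ via the deterministic estimates $|s(i)| = |X_{i:}w| \le R_D U$ (from $\|X_{i:}\|_2 \le R_D$, $\|w\|_2 \le U$) and $|R(i)| \le R_{\max}$. For $\phi_{sq}$ I would split $v = 2s - 2R(\sigma(1))e_{\sigma(1)}/p(\sigma(1))$, bound $\|v\|_1 \le 2\|s\|_1 + 2R_{\max}/p(\sigma(1)) \le 2mR_D U + 2R_{\max}/p(\sigma(1))$, apply $(a+b)^2 \le 2a^2 + 2b^2$, and take expectations via the identity; combined with $\gamma < 1$ (so the dense $s$-term folds into the $1/\gamma$ term) this gives $C^{sq} = O(m^2 R_D^2(R_D^2 U^2 + R_{\max}^2))$, with per-variable degree at most four. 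For $\phi_{svm}$ each $h_{s,i,j}$ is an indicator-weighted $(e_j - e_i)$, so $\|h_{s,\cdot,\cdot}\|_1 \le 2$ and the numerator is the pure constant $\le 4$; the top-$2$ form of the identity then yields $C^{svm} = O(R_D^2 m^2 (m-1)^2)$. For $\phi_{KL}$, $v = (\exp(s(\sigma(1))) - \exp(R(\sigma(1))))e_{\sigma(1)}/p(\sigma(1))$ is sparse, and the top-$1$ identity applies directly.

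The main obstacle is the $\phi_{KL}$ numerator: since $|s(\sigma(1))| \le R_D U$ and $R(\sigma(1)) \le R_{\max}$, the cleanest bound is $|\exp(s(\sigma(1))) - \exp(R(\sigma(1)))| \le e^{R_D U} + e^{R_{\max}}$, which is exponential rather than polynomial in the parameters. I would therefore lean on the standing boundedness assumptions (in particular $R_{\max} \le 5$ and a bounded score range) to treat these exponentials as absolute constants, so that the resulting $C^{KL}$ is polynomial in the remaining parameters with degree governed by the $R_D^2$ prefactor and the $m^2/\gamma$ from the averaging step. The genuinely polynomial, degree-$\le 4$ statement is most transparent for $\phi_{sq}$ and $\phi_{svm}$, where no exponentials appear, and the KL case matches it only after the bounded exponential factors are absorbed into the constant.
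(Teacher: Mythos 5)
Your proof follows essentially the same route as the paper's: peel off $X^{\top}$ via Lemma~\ref{normexpr} to get the $R_D$ factor, bound $\E_t\left[1/p(\sigma(1))^2\right]$ (and its top-$2$ analogue) by $O(m^2/\gamma)$ (resp.\ $O(m^4/\gamma)$) using the uniform-exploration lower bound on the marginals, and finish each case with the deterministic bounds $|s(i)|\le R_D U$ and $|R(i)|\le R_{\max}$, so your constants match the paper's up to minor slack. Your remark about the KL numerator is well taken: the paper's own constant is $C^{KL}=m^2R_D^2\exp(2R_DU)$, which is not literally a polynomial of degree at most four as the lemma statement asserts, so your explicit acknowledgment that the exponential factors must be absorbed as bounded constants is, if anything, more careful than the original.
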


Plugging Eq.~\ref{eq:expectednorm} and Eq.~\ref{eq:exploration} in Eq.~\ref{eq:flaxman}, and optimizing over $\eta$ and $\gamma$, (which gives $\eta= O(T^{-2/3})$ and $\gamma= O(T^{-1/3})$), we get the final regret bound:

\begin{thm}
\label{theoryboundinpartial}
For any sequence of instances and labels $(X_t,R_t)_{\{t \in [T]\}}$, applying Algorithm~\ref{alg:RTop-kF} with top-1 feedback for $\phi_{sq}$ and $\phi_{KL}$ and top-2 feedback for $\phi_{svm}$,  will produce the following bound:
\begin{equation}
\begin{split}
\E \left[\sum_{t=1}^T \phi(\tilde{s}_t, R_t) \right] -  \underset{w:\|w\|_2 \le U}{\min} \sum_{t=1}^T \phi(X_t w,R_t)  \le C^{\phi} O \left(T^{2/3} \right)
\end{split}
\end{equation}
where $C^{\phi}$ is a surrogate dependent function, as described in Lemma~\ref{expectednorm} , and  expectation is taken over underlying randomness of the algorithm, over $T$ rounds.
\end{thm}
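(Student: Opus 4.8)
The plan is to assemble three ingredients that the preceding development has already put in place: the online gradient descent (OGD) regret guarantee of Eq.~\ref{eq:flaxman}, the exploration-overhead bound of Eq.~\ref{eq:exploration}, and the second-moment control on the gradient estimators from Lemma~\ref{expectednorm}. The conceptual crux is that, for each surrogate, the estimator $\tilde{z}_t$ built from top-$k$ feedback satisfies $\E_t[\tilde{z}_t] = \nabla_{w=w_t}\phi(X_t w, R_t)$; this unbiasedness is exactly what the Flaxman et al.\ form of OGD requires. The feedback levels in the statement (top-$1$ for $\phi_{sq}$ and $\phi_{KL}$, top-$2$ for $\phi_{svm}$) are precisely those under which Lemma~\ref{unbiasedestimator} guarantees such an estimator exists, given the decomposability of each gradient over the relevant number of coordinates of $R$.

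First I would chain the bounds. Starting from Eq.~\ref{eq:exploration} and substituting the OGD bound Eq.~\ref{eq:flaxman} for $\E[\sum_t \phi(X_t w_t, R_t)]$ gives
\begin{equation*}
\E\left[\sum_{t=1}^T \phi(\tilde{s}_t, R_t)\right] - \min_{\|w\|_2 \le U}\sum_{t=1}^T \phi(X_t w, R_t) \le \frac{U^2}{2\eta} + \frac{\eta}{2}\E\left[\sum_{t=1}^T \|\tilde{z}_t\|_2^2\right] + O(\gamma T).
\end{equation*}
By the tower property and Lemma~\ref{expectednorm}, $\E[\sum_{t=1}^T \|\tilde{z}_t\|_2^2] = \sum_{t=1}^T \E[\E_t\|\tilde{z}_t\|_2^2] \le T C^\phi/\gamma$, so the right-hand side reduces to $\frac{U^2}{2\eta} + \frac{\eta T C^\phi}{2\gamma} + O(\gamma T)$.

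The final step is a two-stage optimization over the free parameters. Fixing $\gamma$ and balancing the $\eta$-dependent terms $\tfrac{U^2}{2\eta} + \tfrac{\eta T C^\phi}{2\gamma}$ yields $\eta^\star \asymp U\sqrt{\gamma/(T C^\phi)}$ and collapses those two terms into $U\sqrt{T C^\phi/\gamma}$. Minimizing the residual $U\sqrt{T C^\phi/\gamma} + O(\gamma T)$ over $\gamma$ by balancing the $\gamma^{-1/2}\sqrt{T}$ term against the $\gamma T$ term gives $\gamma^\star = O(T^{-1/3})$ and hence $\eta^\star = O(T^{-2/3})$, matching the stated choices. Substituting back, both surviving terms are of order $T^{2/3}$, producing the claimed $C^\phi\,O(T^{2/3})$ bound (absorbing $U$ and the residual $\sqrt{C^\phi}$ into the surrogate-dependent constant, consistent with $C^\phi$ being reported only as a surrogate-dependent function).

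I do not expect a genuine obstacle: the real work carrying the theorem---constructing the unbiased estimators (Lemma~\ref{unbiasedestimator} together with the explicit per-surrogate formulas) and bounding their second moments (Lemma~\ref{expectednorm})---is already done, and what remains is bookkeeping plus a routine convex two-variable optimization. The only point demanding mild care is the conditional-expectation alignment: since $X_t$ is deterministic given the past, $\tilde{z}_t = X_t^{\top}(\widehat{\nabla_s \phi})$ is unbiased for $\nabla_{w=w_t}\phi$ exactly when $\widehat{\nabla_s \phi}$ is unbiased for $\nabla_s \phi(s_t^{w_t}, R_t)$, which is what the per-surrogate constructions deliver, and convexity of each $\phi$ in its first argument is what legitimizes invoking Eq.~\ref{eq:flaxman}.
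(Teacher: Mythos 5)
Your proposal is correct and follows exactly the route the paper takes: the paper's entire argument is the sentence preceding the theorem, namely plugging Eq.~\ref{eq:expectednorm} and Eq.~\ref{eq:exploration} into Eq.~\ref{eq:flaxman} and optimizing over $\eta$ and $\gamma$ to obtain $\eta = O(T^{-2/3})$, $\gamma = O(T^{-1/3})$. You have simply made the tower-property step and the two-stage balancing explicit, which the paper leaves implicit.
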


{\bf Discussion:} It is known that online bandit games are special instances of partial monitoring games. For bandit online convex optimization problems with Lipschitz, convex surrogates, the best regret rate known so far, that can be achieved by an efficient algorithm, is $O(T^{3/4})$ (however, see the work of \cite{bubeck2015multi} for a non-constructive $O(\log^4(T) \sqrt{T})$ bound). Surprisingly, Alg.~\ref{alg:RTop-kF}, when applied in a partial monitoring setting to the Lipschitz, convex surrogates that we have listed, achieves a better regret rate than what is known in the bandit setting. Moreover, as we show subsequently, for an entire class of Lipschitz convex surrogates (subclass of NDCG calibrated surrogates), sub-linear (in $T$) regret is not even achievable. Thus, our work indicates that even within the class of Lipschitz, convex surrogates, regret rate achievable is dependent on the structure of surrogates; something that does not arise in bandit convex optimization. 

\subsection{Impossibility of Sublinear Regret  for NDCG Calibrated Surrogates}
Learning to rank methods optimize surrogates to learn a ranking function, even though performance is measured by target measures like NDCG. This is done because direct optimization of the measures lead to NP-hard optimization problems. One of the most desirable properties of any surrogate is \emph{calibration}, i.e., the surrogate should be calibrated w.r.t the target \citep{bartlett2006convexity}. Intuitively, it means that a function with small expected surrogate loss on unseen data should have small expect target loss on unseen data.  We focus on NDCG calibrated surrogates (both convex and non-convex) that have been characterized by \cite{ravikumar2011ndcg}. We first state the necessary and sufficient condition for a surrogate to be calibrated w.r.t NDCG. For any score vector $s$ and distribution $\eta$ on relevance space $\mathcal{Y}$, let $\bar{\phi}(s,\eta)= \E_{R \sim \eta} \phi(s,R)$. Moreover, we define $G({\bf R})= (G(R_1),\ldots,G(R_m))^{\top}$. $Z(R)$ is defined in Sec~\ref{rankingmeasures-NC}.
\begin{thm}
\label{calibration}
\cite[Thm. 6]{ravikumar2011ndcg} A surrogate $\phi$ is NDCG calibrated iff for any distribution $\eta$ on relevance space $\mathcal{Y}$, there exists an invertible, order preserving map $g: \reals^m \mapsto \reals^m$ s.t. the unique minimizer $s^{*}_{\phi}(\eta)$ can be written as
\begin{equation}
\label{eq:calibration}
s^*_{\phi}(\eta)= g \left( \E_{R \sim \eta} \left[\frac{G({\bf R})}{Z(R)}\right] \right) .
\end{equation}
\end{thm}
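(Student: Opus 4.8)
The plan is to establish the equivalence at the level of the \emph{conditional} (pointwise) risk, since both $\bar{\phi}(s,\eta)$ and the expected NDCG are computed against a single distribution $\eta$ on $\mathcal{Y}$; standard calibration theory then lifts a pointwise characterization to the full calibration statement. The first step is to identify the NDCG-optimal prediction. Because $NDCG(\sigma,R)$ depends on the score vector only through the sorted order $\sigma=\argsort(s)$, linearity of expectation gives
\begin{equation*}
\E_{R\sim\eta}[NDCG(\sigma,R)] = \sum_{i=1}^m \frac{u(i)}{\log_2(1+\sigma^{-1}(i))}, \qquad u := \E_{R\sim\eta}\!\left[\frac{G({\bf R})}{Z(R)}\right].
\end{equation*}
Since $1/\log_2(1+r)$ is strictly decreasing in the rank $r$, the rearrangement inequality shows that this expectation is maximized exactly by the permutations placing the coordinates of $u$ in non-increasing order. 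Hence the NDCG-optimal behavior under $\eta$ is completely determined by the single vector $u$, and a score vector is NDCG-optimal iff $\argsort(s)=\argsort(u)$ (ignoring ties).

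With this reduction in hand, the direction ($\Leftarrow$) is short. If the unique minimizer satisfies $s^*_\phi(\eta)=g(u)$ for an invertible, order-preserving $g$, then $g$ preserves the coordinatewise order, so $\argsort(s^*_\phi(\eta))=\argsort(u)$ and the minimizer is itself NDCG-optimal. To turn ``the exact minimizer is optimal'' into calibration I would use uniqueness of $s^*_\phi(\eta)$ together with a compactness argument on the bounded score space: any sequence $s_n$ with $\bar{\phi}(s_n,\eta)\to\min_s\bar{\phi}(s,\eta)$ must converge to $s^*_\phi(\eta)$, and since $\argsort$ is locally constant away from ties, the rankings induced by $s_n$ eventually agree with $\argsort(u)$, driving the NDCG regret to zero.

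The direction ($\Rightarrow$) is where the main difficulty lies: calibration is now assumed and the map $g$ must be produced. The core claim to extract is that, for every $\eta$, the order of the minimizer agrees with the order of $u$: if $u(i)>u(j)$ then NDCG-optimality forces item $i$ above item $j$, and calibration forces the minimizer to realize an optimal ranking, whence $s^*_\phi(\eta)(i)>s^*_\phi(\eta)(j)$; given matching orders one can construct an invertible order-preserving $g$ sending $u$ to $s^*_\phi(\eta)$. I expect the genuine obstacles to be threefold: (i) converting the asymptotic calibration hypothesis into the exact statement that the \emph{minimizer}, not merely near-minimizers, is NDCG-optimal, which again rests on uniqueness and a limiting argument; (ii) handling ties in $u$, where several rankings are simultaneously optimal, so that the correspondence between the order of $u$ and the order of $s^*_\phi(\eta)$ keeps $g$ well-defined and order-preserving; and (iii) securing invertibility and global order-preservation of $g$, for which the key lever is that calibration holds \emph{simultaneously for all} $\eta$, so that as $\eta$ ranges over distributions on $\mathcal{Y}$ the vector $u$ sweeps a full-dimensional region and coordinatewise monotonicity of the minimizer in $u$ can be read off by perturbing $\eta$ until two coordinates of $u$ cross. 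The tie-breaking and degenerate-$\eta$ boundary analysis is the technical crux of this direction.
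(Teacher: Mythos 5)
This statement is not proved in the paper at all: it is Theorem~6 of \cite{ravikumar2011ndcg}, imported verbatim with a citation and used as a black box in the proof of Theorem~\ref{impossiblegame}. So there is no in-paper proof to compare your attempt against; what you have written is a reconstruction of the argument from the cited reference.

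On its own merits, your sketch correctly identifies the heart of the matter: since $NDCG(\sigma,R)=f(\sigma)\cdot\frac{G({\bf R})}{Z(R)}$, the expected NDCG under $\eta$ is linear in $u=\E_{R\sim\eta}\bigl[\frac{G({\bf R})}{Z(R)}\bigr]$, and the rearrangement inequality then says a score vector is NDCG-optimal for $\eta$ iff it sorts the items in the order of $u$. That reduction, plus the observation that an invertible order-preserving $g$ is precisely the device that encodes ``$s^*_{\phi}(\eta)$ has the same $\argsort$ as $u$,'' is indeed the content of the characterization, and your $(\Leftarrow)$ direction is essentially complete modulo the usual level-set/coercivity argument needed to pass from ``the exact minimizer is optimal'' to the asymptotic calibration statement (note the score space is not bounded a priori, so compactness must be earned, e.g.\ from uniqueness of the minimizer and lower semicontinuity of $\bar{\phi}(\cdot,\eta)$). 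The $(\Rightarrow)$ direction is where your proposal remains a plan rather than a proof: the step from calibration to ``the minimizer itself realizes an optimal ranking'' is fine (a constant near-minimizing sequence at $s^*_{\phi}(\eta)$ would otherwise violate calibration), but the construction of a single invertible, order-preserving $g:\reals^m\to\reals^m$ with $g(u)=s^*_{\phi}(\eta)$, and the treatment of ties in $u$ (where several rankings are simultaneously NDCG-optimal and order-preservation places no constraint on tied coordinates), are exactly the points you defer. Since the paper itself never discharges these either, the honest assessment is that your outline matches the standard route through Ravikumar et al.'s result, with the genuinely technical steps acknowledged but not carried out.
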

Informally, Eq.~\ref{eq:calibration} states that $\argsort(s^*_{\phi}(\eta)) \subseteq \argsort(\E_{R \sim \eta} \left[\tfrac{G({\bf R})}{Z(R)}\right])$ 
\cite{ravikumar2011ndcg} give concrete examples of NDCG calibrated surrogates, including how some of the popular surrogates can be converted into NDCG calibrated ones: e.g., the NDCG calibrated version of squared loss is $\|s- \frac{G({\bf R})}{Z(R)}\|_2^2$.  

We now state the \emph{impossibility} result for the class of NDCG calibrated surrogates when feedback is restricted to top ranked item.

\begin{thm}
\label{impossiblegame}
Fix the online learning to rank game with top $1$ feedback and any NDCG calibrated surrogate. Then, for every learner's algorithm, there exists an adversary strategy such that the learner's expected regret is $\Omega(T)$. 
\end{thm}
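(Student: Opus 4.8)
The plan is to prove the lower bound by an information-theoretic reduction to a ``hopeless'' two-point sub-game, using Theorem~\ref{calibration} to convert the abstract calibration structure into a concrete statement about optimal rankings. Concretely, I would fix a single feature matrix $X$ used in every round (so that the contextual game collapses to a fixed-item game) and exhibit two adversary strategies, each drawing $R_t$ i.i.d.\ from a relevance distribution $\eta_1$ or $\eta_2$, with two properties: (i) the two distributions are \emph{indistinguishable under top-$1$ feedback}, i.e.\ they induce the same marginal law on every single coordinate $R(j)$, $j\in[m]$; and (ii) they have \emph{conflicting optimal rankings}, meaning $\argsort(\bar u(\eta_1))$ and $\argsort(\bar u(\eta_2))$ disagree on which document belongs on top, where $\bar u(\eta)=\E_{R\sim\eta}[G(\mathbf R)/Z(R)]$. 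Property~(ii) together with Theorem~\ref{calibration} forces the unique surrogate minimizers $s^*_\phi(\eta_b)=g(\bar u(\eta_b))$ to induce different top-ranked documents, since $g$ is order-preserving.

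For the explicit construction I would work with $m=3$ and binary relevance (so $G(\mathbf R)=\mathbf R$), and take
\[
\eta_1: \ (1,0,0),\,(0,1,1) \text{ each w.p. } \tfrac12, \qquad \eta_2: \ (0,1,0),\,(1,0,1) \text{ each w.p. } \tfrac12 .
\]
Every coordinate is $\mathrm{Bernoulli}(\tfrac12)$ under both laws, so (i) holds. Using the discounts $d=(1,\,1/\log_2 3,\,1/2)$ and the normalizer $Z(R)$ from Section~\ref{rankingmeasures-NC}, a direct computation gives $\bar u(\eta_1)=(\tfrac12,\,\tfrac1{2Z_2},\,\tfrac1{2Z_2})$ and $\bar u(\eta_2)=(\tfrac1{2Z_2},\,\tfrac12,\,\tfrac1{2Z_2})$ with $Z_2=1+1/\log_2 3>1$; hence $\argsort(\bar u(\eta_1))$ puts document $1$ on top while $\argsort(\bar u(\eta_2))$ puts document $2$ on top. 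The normalization $Z(R)$ is exactly what breaks linearity in $R$ and lets two laws with identical marginals nonetheless disagree on the optimal ranking --- this is the structural reason the unnormalized measures (Theorem~\ref{globalobservability}) behave differently from NDCG (Lemma~\ref{globalfailsfornormalized}).

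Next I would run the standard two-point argument. Because only one coordinate is revealed per round and its law is $\mathrm{Bernoulli}(\tfrac12)$ independently of which document the learner places on top and of whether the adversary uses $\eta_1$ or $\eta_2$, the entire feedback sequence, and hence the learner's (possibly randomized, history-dependent) distribution $q_t$ over played score vectors $\tilde s_t$, is \emph{identical} under the two adversaries. Writing $f_b(s)=\bar\phi(s,\eta_b)-\bar\phi(s^*_\phi(\eta_b),\eta_b)\ge 0$ for the excess surrogate risk, the summed regret satisfies
\[
R_1+R_2 \;\ge\; \sum_{t=1}^T \E_{\tilde s_t\sim q_t}\big[f_1(\tilde s_t)+f_2(\tilde s_t)\big] \;\ge\; c\,T, \qquad c:=\inf_{s\in\reals^m}\big(f_1(s)+f_2(s)\big),
\]
provided the per-round comparator value matches $\bar\phi(s^*_\phi(\eta_b),\eta_b)$; I would guarantee this by choosing $X$ (e.g.\ a suitably scaled identity, respecting the feature-norm bound $R_D$) so that both minimizers $s^*_\phi(\eta_b)$ are representable as $Xw$ with $\|w\|_2\le U$, and by a routine concentration step to pass from the realized comparator $\min_w\sum_t\phi(Xw,R_t)$ to $T\,\bar\phi(s^*_\phi(\eta_b),\eta_b)$. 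If $c>0$ then $\max(R_1,R_2)\ge cT/2$, and the adversary simply commits to whichever of $\eta_1,\eta_2$ is worse for the given learner, yielding the claimed $\Omega(T)$ bound.

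The main obstacle is establishing the uniform gap $c>0$, i.e.\ that no single score vector is simultaneously near-optimal for $\eta_1$ and $\eta_2$. Positivity of $f_1+f_2$ at each finite $s$ follows from the \emph{uniqueness} of the calibrated minimizer (Theorem~\ref{calibration}) together with $s^*_\phi(\eta_1)\ne s^*_\phi(\eta_2)$, but upgrading this to a bound bounded away from zero requires ruling out a sequence $s_n\to\infty$ along which both excess risks vanish; this is where I expect to spend the real effort, controlling the behaviour of $\bar\phi(\cdot,\eta_b)$ at infinity (via continuity/coercivity of the calibrated surrogate, or by confining the relevant score region using the bounded comparator class). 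A secondary technical point is the representability of $s^*_\phi(\eta_b)$ within the prescribed parameter ball, which I finesse through the freedom in choosing the fixed feature matrix $X$.
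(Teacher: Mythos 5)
Your proposal follows essentially the same route as the paper: both arguments reduce the theorem to exhibiting two adversary distributions over binary relevance vectors that are indistinguishable under top-$1$ feedback --- which, as you and the paper both observe, is equivalent to having equal coordinate-wise marginals $\E_{R\sim\eta_1}[R]=\E_{R\sim\eta_2}[R]$ --- while Theorem~\ref{calibration} forces their calibrated minimizers to place different documents on top because $\argsort(\E[G(\mathbf R)/Z(R)])$ differs; a two-point argument then yields $\Omega(T)$. Your explicit pair (uniform on $\{(1,0,0),(0,1,1)\}$ versus uniform on $\{(0,1,0),(1,0,1)\}$, with common mean $(\tfrac12,\tfrac12,\tfrac12)$) is a valid and arguably cleaner instantiation than the paper's $p,\tilde p$, which are supported on all eight relevance vectors with common mean $(0.45,0.45,0.4)$; and playing the worse of the two pure strategies versus the paper's mixture $(p+\tilde p)/2$ is an immaterial difference. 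The one place you diverge is the final accounting: you lower-bound $R_1+R_2$ by $T\cdot\inf_{s\in\reals^m}(f_1(s)+f_2(s))$ and honestly flag that positivity of this infimum requires controlling the surrogate's behaviour as $\|s\|\to\infty$. The paper avoids this uniform-gap issue by asserting (informally) that the learner never benefits from playing outside $\cup_p s^*_p$ and then treating the learner as choosing only between $s^*_\phi(\eta_1)$ and $s^*_\phi(\eta_2)$, for which a constant per-round gap follows immediately from uniqueness of the calibrated minimizer. So the ``real effort'' you anticipate is precisely the step the paper handles by fiat rather than by a supplied argument; you are not missing an idea that the paper provides, and your construction and reduction are otherwise the same as the paper's.
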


We note that the proof of Theorem.\ 3 of \cite{piccolboni2001discrete} cannot be directly extended to prove the impossibility result because it relies on constructing a connected graph on vertices defined by neighboring actions of learner. In our case, due to the continuous nature of learner's actions, the graph will be an empty graph and proof will break down.

\section{Experiments}
\label{experiments}
We conducted experiments on simulated and commercial datasets to demonstrate the performance of our algorithms.
\subsection{Non Contextual Setting}
\label{experiments-NC}
{\bf Objectives}: We had the following objectives while conducting experiments in the non-contextual, online ranking with partial feedback setting:
\begin{itemize}
\item Investigate how performance of Algorithm~\ref{alg:top-k} is affected by size of blocks during blocking.
\item Investigate how performance of the algorithm is affected by amount of feedback received (i.e., generalizing $k$ in top $k$ feedback).
\item Demonstrate the difference between regret rate of our algorithm, which operates in partial feedback setting, with regret rate of a full information algorithm which receives full relevance vector feedback at end of each round.
\end{itemize}
We applied Algorithm~\ref{alg:top-k} in conjunction with Follow-The-Perturbed-Leader (FTPL) full information algorithm, as described in Sec.~\ref{algorithm-NC}. We note that since our work is first of its kind in the literature, we had no comparable baselines. The generic partial monitoring algorithms that do exist cannot be applied due to computational inefficiency (Sec.~\ref{minimax-NC}).\\\\
{\bf Experimental Setting}: All our experiments were conducted with respect to the DCG measure, which is quite popular in practice, and binary graded relevance vectors. Our experiments were conducted on the following simulated dataset. We fixed number of items to $20$( $m=20$). We then fixed a ``true" relevance vector which had $5$ items with relevance level $1$ and $15$ items with relevance level $0$. We then created a total of T=$10000$ relevance vectors by corrupting the true relevance vector. The corrupted copies were created by independently flipping  each relevance level ($0$ to $1$ and vice-versa) with a small probability. The reason for creating adversarial relevance vectors in such a way was to reflect diversity of preferences in practice. In reality, it is likely that most users will have certain similarity in preferences, with small deviations on certain items and certain users. That is, some items are likely to be relevance in general, with most items not relevant to majority of users, with slight deviation from user to user. Moreover, the comparator term in our regret bound (i.e., cumulative loss/gain of the true best ranking in hindsight) only makes sense if there is a ranking which satisfies most users.\\\\
{\bf Results}: We plotted average regret over time under DCG. Average regret over time means cumulative regret up to time $t$, divided by $t$, for $1 \le t \le T$. Figure~\ref{Fig1} demonstrates the effect of block size on the regret rate under DCG. We fixed feedback to relevance of top ranked item ($k=1$). As can be seen in Corollary~\ref{efficientregret-DCG}, the optimal regret rate is achieved by optimizing over number of blocks (hence block size), which requires prior knowledge of time horizon $T$. We wanted to demonstrate how the regret rate (and hence the performance of the algorithm) differs with different block sizes. The optimal number of blocks in our setting is $K \sim 200$, with corresponding block size being $\ceil{T/K}= 50$. As can be clearly seen, with optimal block size, the regret drops fastest and becomes steady after a point. $K=10$ means that block size is $1000$. This means over the time horizon, number of exploitation rounds greatly dominates number of exploration rounds, leading to regret dropping at a slower rate initially than than the case with optimal block size. However, the regret drops of pretty sharply later on. This is because the relevance vectors are slightly corrupted copies of a ``true" relevance vector and the algorithm gets a good estimate of the true relevance vector quickly and then more exploitation helps. When $K=400$ (i.e, block size is $25$), most of the time, the algorithm is exploring, leading to a substantially worse regret and poor performance.

Figure~\ref{Fig2} demonstrates the effect of amount of feedback on the regret rate under DCG. We fixed $K=200$, and varied feedback as relevance of top $k$ ranked items per round, where $k=1,5,10$. Validating our regret bound, we see that as $k$ increases, the regret decreases.

Figure~\ref{Fig3} compares regret of our algorithm, working with top $1$ feedback and FTPL full information algorithm, working with full relevance vector feedback at end of each round. We fixed $K=200$ and the comparison was done from $1000$ iterations onwards, i.e., roughly after the initial learning phase. FTPL full information algorithm has regret rate of $O(T^{1/2})$ (ignoring other parameters). So, as expected, FTPL with full information feedback outperforms our algorithm with highly restricted feedback; yet, we have demonstrated, both theoretically and empirically, that it is possible to have a good ranking strategy with highly restricted feedback.

\begin{figure}[h]
\begin{center}
\centerline{\includegraphics[height=70mm, width=130mm]{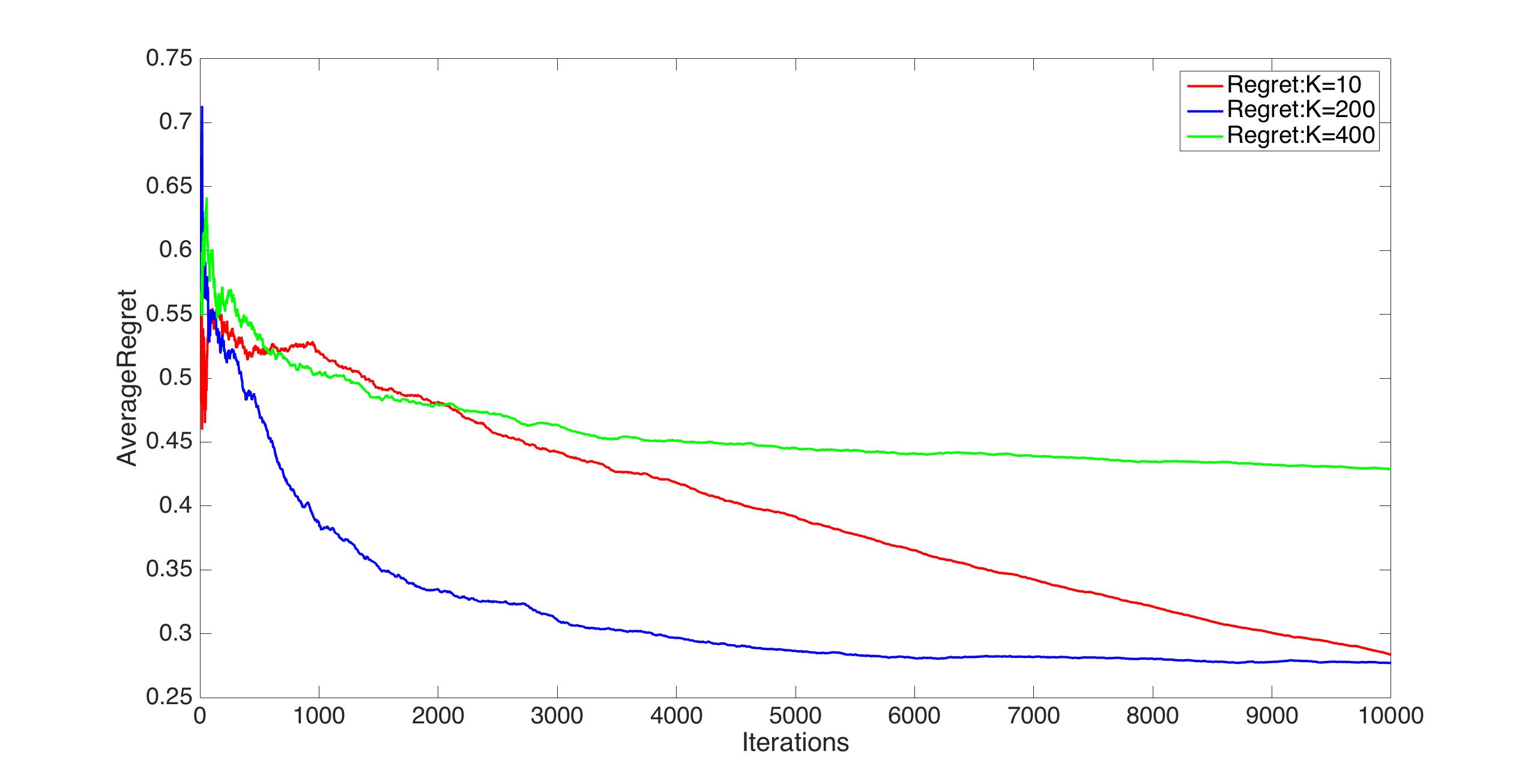}}
\caption{Average regret under DCG, with feedback on top ranked object, for varying block size, where block size is $\ceil{T/K}$. \emph{Best viewed in color}. } \label{Fig1}
\end{center}
\end{figure} 

\begin{figure}[h]
\begin{center}
\centerline{\includegraphics[height=70mm, width=130mm]{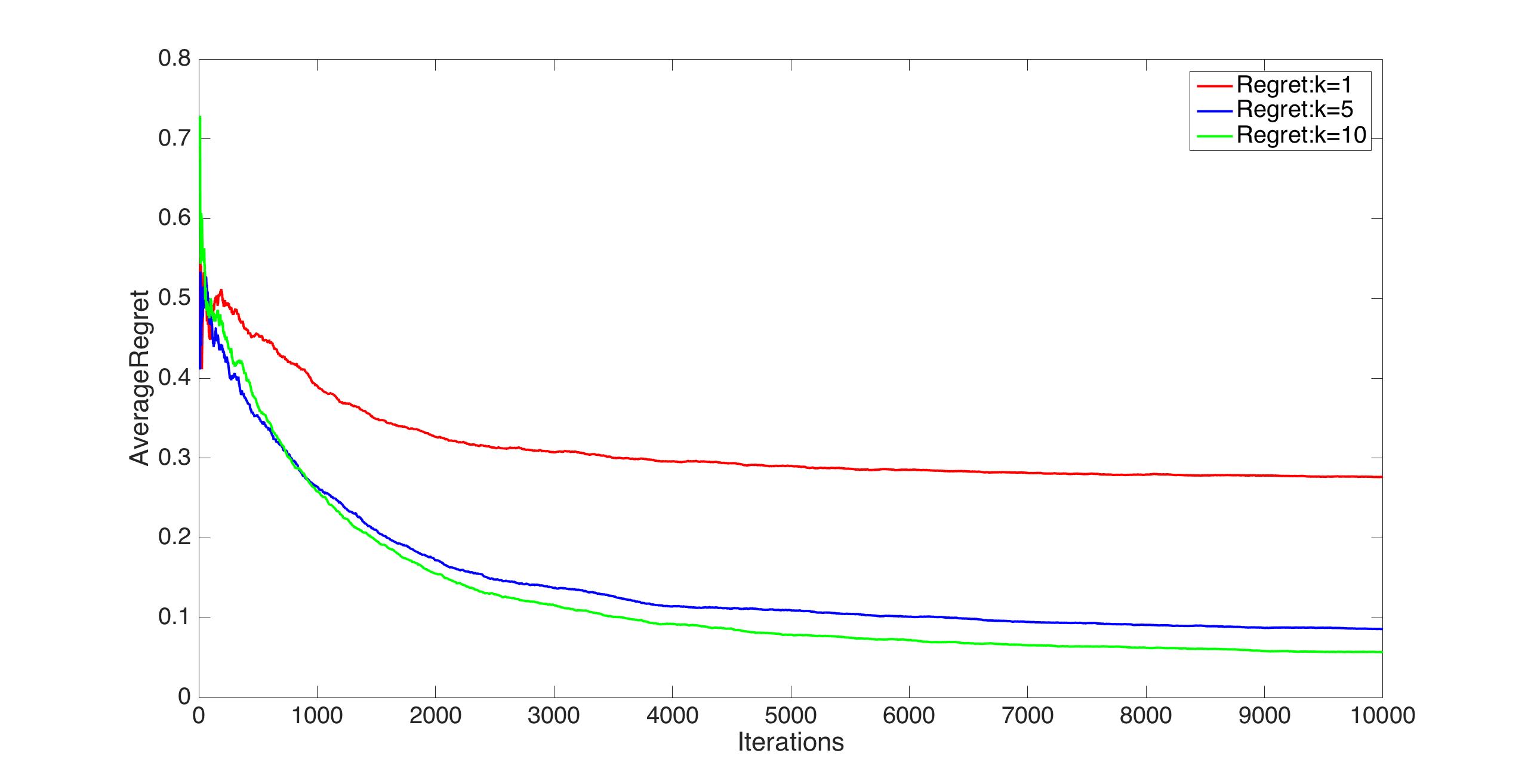}}
\caption{Average regret under DCG, where $K=200$, for varying amount of feedback.  \emph{Best viewed in color. }}\label{Fig2}
\end{center}
\end{figure} 

\begin{figure}[h]
\begin{center}
\centerline{\includegraphics[height=70mm, width=130mm]{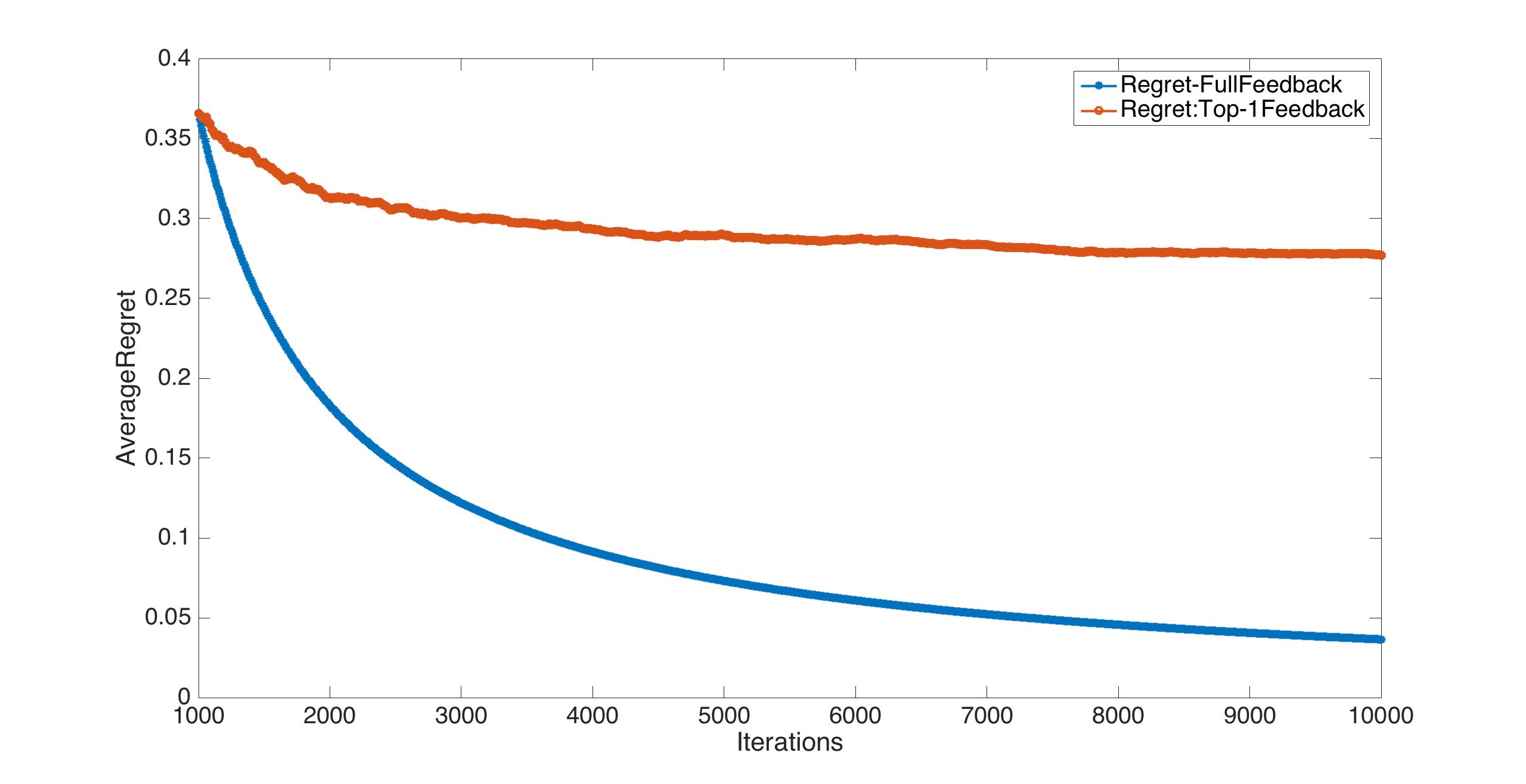}}
\caption{Comparison of average regret over time, for DCG, between top-1 feedback and full relevance vector feedback.  \emph{Best viewed in color. }}\label{Fig3}
\end{center}
\end{figure} 

\subsection{Contextual Setting}
\label{experiments-C}

\begin{figure}[h]
\begin{center}
\centerline{\includegraphics[height=70mm, width=130mm]{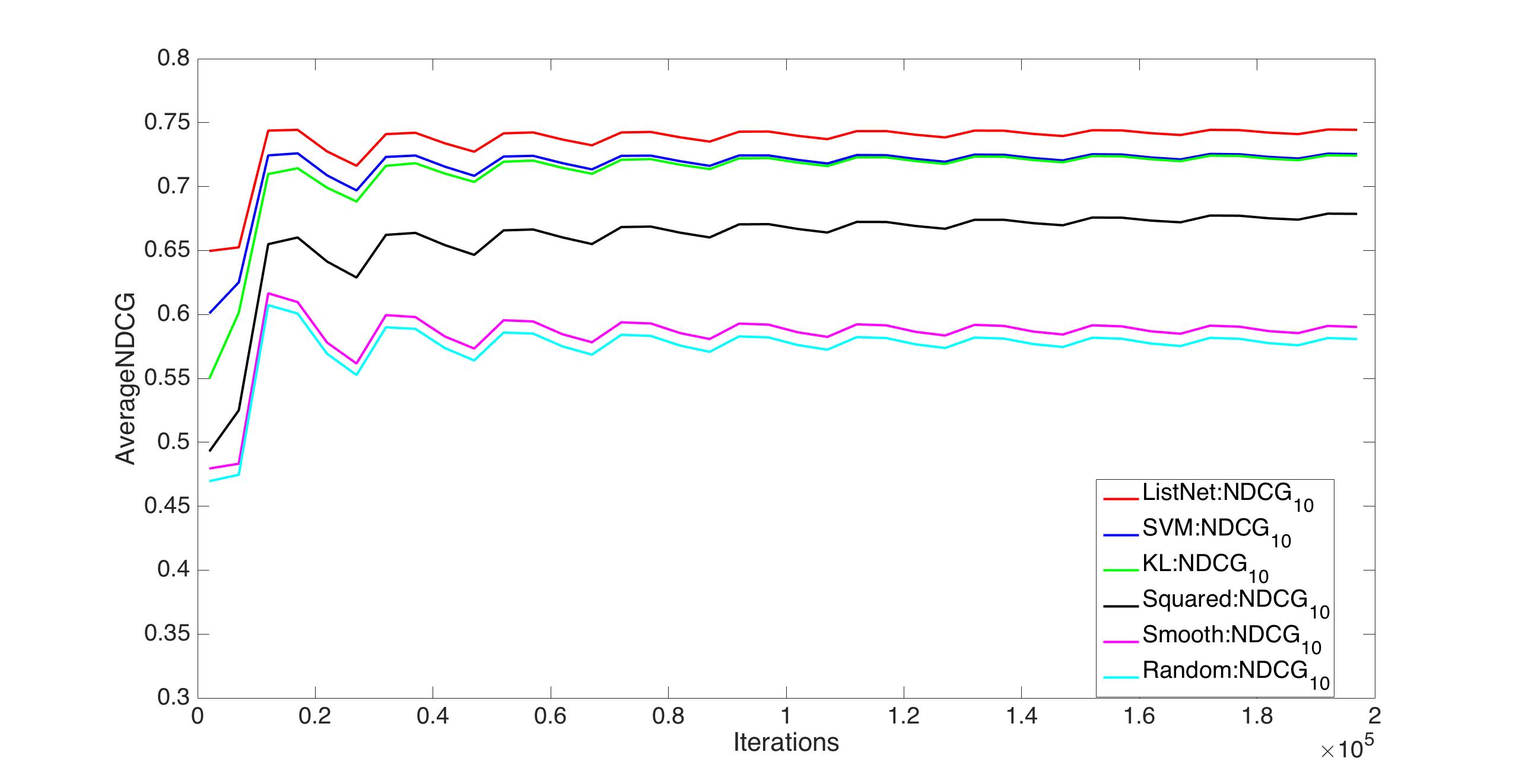}}
\caption{Average $\text{NDCG}_{10}$ values of different algorithms, for Yahoo dataset. ListNet:$\text{NDCG}_{10}$ (in cyan) operates on full feedback and Random:$\text{NDCG}_{10}$ (in red) does not receive any feedback. \emph{Best viewed in color }.}\label{Fig4}
\end{center}
\end{figure} 

\begin{figure}[h]
\begin{center}
\centerline{\includegraphics[height=70mm, width=130mm]{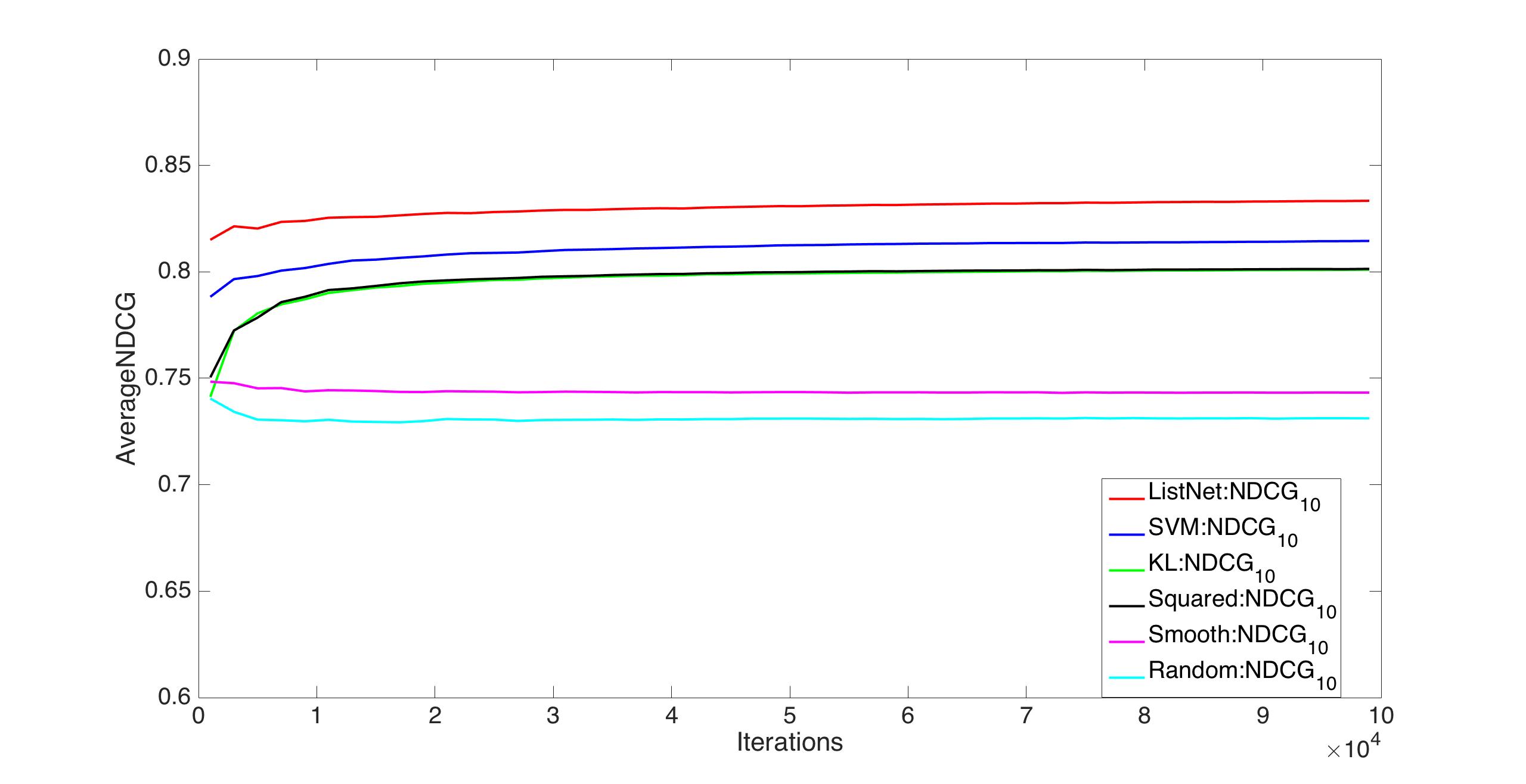}}
\caption{Average $\text{NDCG}_{10}$ values of different algorithms, for Yahoo dataset. ListNet:$\text{NDCG}_{10}$ (in cyan) operates on full feedback and Random:$\text{NDCG}_{10}$ (in red) does not receive any feedback. \emph{Best viewed in color. }}\label{Fig5}
\end{center}
\end{figure}

{\bf Objective:}  Since our contextual, online learning to rank with restricted feedback setting involves  query-document matrices, we could conduct experiments on commercial, publicly available ranking datasets.Our objective was to demonstrate that it is possible to learn a good ranking function, even with highly restricted feedback, when standard online learning to rank algorithms would require full feedback at end of each round. As stated before, though our algorithm is designed to minimize surrogate based regret,  the surrogate loss is only of interest. The users only care about the ranking presented to them, and indeed the algorithm interacts with users by presenting ranked lists and getting feedback on top ranked item(s). We tested the quality of the ranked lists, and hence the performance of the evolving ranking functions, against the full relevance vectors, via ranking measure NDCG, cutoff at the $10$th item. NDCG, cutoff at a point $n$, is defined as follows:
\[
NDCG_n(\sigma,R)= \frac{1}{Z_n(R)}\sum_{i=1}^n \frac{2^{R(\sigma(i))}-1}{\log_2(1+i)}
\]
where $Z_n(R)= \underset{\sigma \in S_m}{\max} \sum_{i=1}^n  \frac{2^{R(\sigma(i))}-1}{\log_2(1+ i)}$. We want to emphasize that the algorithm was in no way affected by the fact that we were measuring its performance with respect to NDCG cutoff at $10$. In fact, the cutoff point can be varied, but usually, researchers report performance under NDCG cutoff at $5$ or $10$.  \\\\
{\bf Baselines:} We applied Algorithm~\ref{alg:RTop-kF}, with top $1$ feedback, on Squared, KL (un-normalized ListNet) and SmoothDCG surrogates, and with top $2$ feedback, on the RankSVM surrogate. Based on the objective of our work, we selected two different ranking algorithms as baselines. The first one is the online version of ListNet ranking algorithm (which is essentially OGD on cross-entropy function), with full relevance vector revealed at end of every round. ListNet is not only one of the most cited ranking algorithms (over 700 citations according to Google Scholar), but also one of the most validated algorithms \citep{tax2015}. We emphasize that some of the ranking algorithms in literature, which have shown better empirical performance than ListNet, are based on non-convex surrogates with complex, non-linear ranking functions. These algorithms cannot usually be converted into online algorithms which learn from streaming data. Our second algorithm is a simple, fully random algorithm , which outputs completely random ranking of documents at each round. This algorithm, in effect, receives no feedback at end of each round. \emph{Thus, we are comparing Algorithm~\ref{alg:RTop-kF}, which learns from highly restricted feedback, with an algorithm which learns from full feedback and an algorithm which receives no feedback and learns nothing.}  \\\\
{\bf Datasets:} We compared the various ranking functions on two large scale commercial datasets. They were Yahoo's Learning to Rank Challenge dataset \citep{chapelle2011yahoo} and a dataset published by Russian search engine Yandex \citep{Yandex}. The Yahoo dataset has 19944 unique queries with 5 distinct relevance levels, while Yandex has 9126 unique queries with 5 distinct relevance levels.\\\\
{\bf Experimental Setting}: We selected time horizon $T= 200,000$ (Yahoo) and $T=100,000$ (Yandex) iterations for our experiments (thus, each algorithm went over each dataset multiple times). The reason for choosing different time horizons is that there are roughly double the number of queries in Yahoo dataset as compared to Yandex dataset. All the online algorithms, other than the fully random one, involve learning rate $\eta$ and exploration parameter $\gamma$ (full information ListNet does not involve $\gamma$ and SmoothDCG has an additional smoothing parameter $\epsilon$). While obtaining our regret guarantees, we had established that $\eta=O(T^{-2/3})$ and $\gamma= O(T^{-1/3})$. In our experiments, for each instance of Algorithm~\ref{alg:RTop-kF}, we selected a time varying $\eta=\frac{0.01}{t^{2/3}}$ and $\gamma=\frac{0.1}{t^{1/3}}$, for round $t$ . We fixed $\epsilon=0.01$. For ListNet, we selected $\eta=\frac{0.01}{t^{1/2}}$, since regret guarantee in OGD is established with $\eta=O(T^{-1/2})$. We plotted average $\text{NDCG}_{10}$ against time, where average  $\text{NDCG}_{10}$ at time $t$ is the cumulative $\text{NDCG}_{10}$ up to time $t$, divided by $t$. 
We made an important observation while comparing the performance plots of the algorithms. As we have shown, construction of the unbiased estimators involve division by a probability value (Eq~\ref{eq:unbiasedestimator}). The particular probability value can be $\frac{\gamma}{m}$, which is very small since $\gamma$ goes to $0$, when the top ranked item of the randomly drawn permutation does not match the top ranked item of the permutation given by the deterministic score vector (Sec~\ref{complexity-C}). The mismatch happens with very low probability (since the random permutation is actually the deterministic permutation with high probability).  While theoretically useful, in practice, dividing by such small value negatively affected the gradient estimation and hence the performance of our algorithm. So, when the mismatch happened, we scaled up $\gamma$ on the mismatch round by a constant, to remove the negative effect. \\\\
{\bf Results:} Figure~\ref{Fig4} and Figure~\ref{Fig5} show that ListNet, with full information feedback at end of each round, has highest average NDCG value throughout, as expected. However, Algorithm~\ref{alg:RTop-kF}, with the convex surrogates, produce competitive performance. In fact, in the Yahoo dataset, our algorithm, with RankSVM and KL, are very close to the performance of ListNet. RanSVM based algorithm does better than the others, since the estimator of RankSVM gradient is constructed from top $2$ feedback, leading to lower variance of the estimator. KL based algorithm does much better than Squared loss based algorithm on Yahoo and equally as well on Yandex dataset.  \emph{Crucially, our algorithm, based on all three convex surrogates, perform significantly better than the purely random algorithm, and are much closer to ListNet in performance, despite being much closer to the purely random algorithm in terms of feedback}. Our algorithm, with SmoothDCG, on the other hand, produce poor performance. We believe the reason is the non-convexity of the surrogate, which leads to the optimization procedure possibly getting stuck at a local minima. In batch setting, such problem is avoided by an annealing technique that successively reduces $\epsilon$. We are not aware of an analogue in an online setting. Possible algorithms optimizing non-convex surrogates in an online manner, which require gradient of the surrogate, may be adapted to this partial feedback setting. The main purpose for including SmoothDCG in our work was to show that unbiased estimation of gradient, from restricted feedback, is possible even for non-convex surrogates.

\section{Conclusion and Future Directions}
\label{conclusion}
We studied the problem of online learning to rank with a novel, restricted feedback model. The work is divided into two parts: in the first part, the set of items to be ranked is fixed, with varying user preferences, and in the second part, the items vary, as traditional query-documents matrices. The parts are tied by the feedback model; where the user gives feedback only on top $k$ ranked items at end of each round, though the performance of the learner's ranking strategy is judged against full, \emph{implicit} relevance vectors. In the first part, we gave comprehensive results on learnability with respect to a number of practically important ranking measures. We also gave a generic algorithm, along with an efficient instantiation, which achieves sub-linear regret rate for certain ranking measures. In the second part, we gave an efficient algorithm, which works on a number of popular ranking surrogates, to achieve sub-linear regret rate. We also gave an impossibility result for an entire class of ranking surrogates. Finally, we conducted experiments on simulated and commercial ranking datasets to demonstrate the performance of our algorithms.

We highlight some of the open questions of interest:
\begin{itemize}
\item What are the minimax regret rates with top $k$ feedback model, for $k>1$, for the ranking measures DCG, PairwiseLoss, Precision@$n$ and their normalized versions NDCG, AUC and AP? Specifically, NDCG and AP are very popular in the learning to rank community. We showed that with top $1$ feedback model, no algorithm can achieve sub-linear regret for NDCG and AP. Is it possible to get sub-linear regret with $1<k < m$?
\item We used FTPL as the sub-routine in Algorithm~\ref{alg:top-k} to get an efficient algorithm. It might be possible to use other full information algorithms as sub-routine, retaining the efficiency, but getting tighter rates in terms of parameters (other than $T$) and better empirical performance.
\item We applied Algorithm~\ref{alg:RTop-kF} on three convex surrogates and one non-convex surrogates. It would be interesting to investigate what other surrogates the algorithm can be applied on, guided by Lemma~\ref{unbiasedestimator}, and test its empirical performance. Since the algorithm learns a ranking function in the traditional query-documents setting, the question is more of practical interest.
\item We saw that Algorithm~\ref{alg:RTop-kF}, when applied to SmoothDCG, does not produce competitive empirical performance. It has been shown that a ranking function, learnt by optimizing SmoothDCG in the batch setting, has extremely competitive empirical performance \citep{letor}. In the batch setting, simulated annealing is used to prevent the optimization procedure getting stuck in local minima. Any algorithm that optimizes non-convex surrogates in an online manner, by accessing its gradient, can replace the online gradient descent part in our algorithm and tested on SmoothDCG for empirical performance.
\item We proved an impossibility result for NDCG calibrated surrogates with top $1$ feedback. What is the minimax regret for NDCG calibrated surrogates, with top $k$ feedback, for $k>1$?  
\end{itemize}

\acks{The authors acknowledge the support of NSF under grants IIS-1319810 and CAREER IIS-1452099. }

\bibliography{OnlineRankingBibliography}


\newpage

\appendix
\section*{Appendix A.}
\label{Appendix-A}
We provide technical details of results of Online Ranking with Restricted Feedback- Non Contextual Setting.

{\bf Proof of Theorem~\ref{localobservability}}:
\begin{proof}
We will explicitly show that local observability condition fails by considering the case when number of objects is $m=3$. Specifically, action pair \{$\sigma_1, \ \sigma_2$\}, in Table \ref{lossmatrix-table} are neighboring actions, using Lemma \ref{neighbor-actions} . Now every other action $\{\sigma_3,\sigma_4,\sigma_5,\sigma_6\}$ either places object 2 at top or object 3 at top. It is obvious that the set of probabilities for which $E[R(1)]\ge E[R(2)]=E[R(3)]$ cannot be a subset of any $C_3,C_4,C_5, C_6$. From Def. 4, the neighborhood action set of actions $\{\sigma_1, \sigma_2\}$ is precisely $\sigma_1$ and $\sigma_2$ and contains no other actions.
By definition of signal matrices $S_{\sigma_1}, \ S_{\sigma_2}$ and entries $\ell_1, \ \ell_2$ in Table \ref{lossmatrix-table} and \ref{feedbackmatrix-table}, we have,
\begin{equation}
\begin{aligned}
 S_{\sigma_1}= S_{\sigma_2}=
  \left[ {\begin{array}{cccccccc}
   1 & 1 & 1 & 1 & 0 & 0 & 0 & 0 \\
   0 & 0 & 0 & 0 & 1 & 1 & 1 & 1 \\
  \end{array} } \right]\\
\ell_1 - \ell_2 = 
 \left[ {\begin{array}{cccccccc}
   0 & 1 & -1 & 0 & 0 & 1 & -1 & 0 \\
  \end{array} } \right] .
\end{aligned}
\end{equation}
It is clear that $\ell_1 - \ell_2 \notin Col(S^{\top}_{\sigma_1})$. Hence, Definition 5 fails to hold.
\end{proof}

{\bf Proof of Theorem~\ref{regret}}:

\begin{proof}

{\bf Full information feedback}: Instead of top $k$ feedback, assume that at end of each round, after learner reveals its action, the full relevance vector $R$ is revealed to the learner. Since the knowledge of full relevance vector allows the learner to calculate the loss for every action (SumLoss$(\sigma,R)$, $\forall \ \sigma$), the game is in full information setting, and the learner, using the full information algorithm, will have an $O(C \sqrt{T})$ expected regret for $SumLoss$ (ignoring computational complexity). Here, $C$  denotes parameter specific to the full information algorithm used. 

{\bf Blocking with full information feedback}: We consider a blocked variant of the full information algorithm. We still assume that full relevance vector is revealed at end of each round. Let the time horizon $T$ be divided into $K$ blocks, i.e., $\{B_1,\ldots,B_K\}$, of equal size. Here, $B_i= \{(i-1)(T/K)+1, (i-1)(T/K)+2, (i-1)T/K +3, \ldots, i (T/K)\}$. While operating in a block, the relevance vectors revealed at end of each round are accumulated, but not used to generate learner's actions like in the ``without blocking" variant. Assume at the start of block $B_i$, there was some vector $s_{i-1} \in \mathbb{R}^m$. Then, at each round in the block, the randomized full information algorithm exploits $s_{i-1}$ and outputs a permutation (basically maintains a distribution over actions, using $s_{i-1}$, and samples from the distribution). At the end of a block, the average of the accumulated relevance vectors ($R_i^{avg}$) for the block is used to update, as $s_{i-1} + R_i^{avg}$, to get $s_{i}$ for the next block. The process is repeated for each block. 

Formally, the full information algorithm creates distribution $\rho_i$ over the actions, at beginning of block $B_i$, exploiting information $s_{i-1}$. Thus, $\rho_i \in \Delta$, where $\Delta$ is the probability simplex over $m!$ actions. Note that $\rho_i$ is a deterministic function of $\{R^{avg}_1, \ldots, R^{avg}_{i-1}\}$.

Since action $\sigma_t$, for $t \in B_i$, is generated according to distribution $\rho_i$ (we will denote this as $\sigma_t \sim \rho_i$), and in block $i$, distribution $\rho_i$ is fixed, we have
\begin{equation*}
\E_{\sigma_t \sim \rho_i} [ \sum_{t \in [B_i]} SumLoss(\sigma_t,R_t) ]= \sum_{t \in B_i} \rho_i \cdot [SumLoss(\sigma_1,R_t), \ldots, SumLoss(\sigma_{m!},R_t)] .
\end{equation*}

(dot product between 2 vectors of length $m!$).

Thus, the total expected loss of this variant of the full information problem is:
\begin{align}
\label{eq:regret1}
\notag  \E \sum_{t=1}^T  [ SumLoss(\sigma_t,R_t) ] & =   \sum_{i=1}^K \E_{\sigma_t \sim \rho_i}[ \sum_{t \in B_i} SumLoss(\sigma_t,R_t) ]\\
\notag & = \sum_{i=1}^K \sum_{t \in B_i} \rho_i \cdot [SumLoss(\sigma_1,R_t), \ldots, SumLoss(\sigma_{m!},R_t)]\\
\notag & =  \sum_{i=1}^K \sum_{t \in B_i} \rho_i \cdot [\sigma^{-1}_1 \cdot R_t, \ldots, \sigma^{-1}_{m!} \cdot R_t)] \ \text{ By defn. of SumLoss}\\
\notag & = \dfrac{T}{K} \sum_{i=1}^K \rho_i \cdot [\sigma^{-1}_1 \cdot R^{avg}_i, \ldots, \sigma^{-1}_{m!}\cdot R^{avg}_i]\\
\notag &= \dfrac{T}{K} \sum_{i=1}^K \E_{\sigma_i \sim \rho_i}[ SumLoss(\sigma_i,R^{avg}_i) ]\\
& = \dfrac{T}{K} \E_{\sigma_1\sim\rho_1,\ldots,\sigma_K\sim \rho_K} \sum_{i=1}^K SumLoss(\sigma_i,R^{avg}_i) 
\end{align}
where $R^{avg}_i = \sum_{t \in B_i} \dfrac{R_t}{T/K}$.
Note that, at end of every block $i \in [K]$, $\rho_i$ is updated to $\rho_{i+1}$.
By the regret bound of the full information algorithm, for $K$ rounds of full information problem, we have:
\begin{equation}
\label{eq:regret2}
\begin{aligned}
\E_{\sigma_1\sim\rho_1,\ldots,\sigma_K\sim\rho_K} \sum_{i=1}^K SumLoss(\sigma_i,R^{avg}_i)  & \le \underset{\sigma}{\min} \sum_{i=1}^K SumLoss(\sigma ,R^{avg}_i)  + C \sqrt{K}\\
& = \underset{\sigma}{\min} \sum_{i=1}^K \sigma^{-1} \cdot R^{avg}_i + C \sqrt{K}\\
& = \underset{\sigma}{\min} \sum_{t=1}^T \sigma^{-1} \cdot \dfrac{R_t}{T/K} + C \sqrt{K}\\
\end{aligned}
\end{equation}

Now, since
$$\underset{\sigma}{\min}\ {\sum_{t=1}^T \sigma^{-1} \cdot \dfrac{R_t}{T/K}}= \underset{\sigma}{\min}\ \dfrac{1}{T/K} \sum_{t=1}^T SumLoss(\sigma,R_t),$$
combining Eq. \ref{eq:regret1} and Eq. \ref{eq:regret2}, we get:
\begin{equation}
\label{eq:regret3}
\sum_{t=1}^T \E_{\sigma_t \in \rho_i} [ SumLoss(\sigma_t,R_t) ] \le \underset{\sigma}{\min} \sum_{t=1}^T SumLoss(\sigma,R_t)+ C \dfrac{T}{\sqrt{K}}.
\end{equation}

{\bf Blocking with top $k$ feedback}: However, in our top $k$ feedback model, the learner does not get to see the full relevance vector at each end of round.  Thus, we form the unbiased estimator $\hat{R}_i$ of $R^{avg}_i$, using Lemma~\ref{unbiasedestimator}. That is, at start of each block, we choose $\ceil{m/k}$ time points uniformly at random, and at those time points, we output a random permutation which places $k$ distinct objects on top (refer to Algorithm~\ref{alg:top-k}). At the end of the block, we form the vector $\hat{R}_i$ which is the unbiased estimator of $R^{avg}_i$. Note that using random vector $\hat{R}_i$ instead of true $R^{avg}_i$ introduces randomness in the distribution $\rho_i$ itself .
But significantly, $\rho_i$ is dependent only on information received up to the beginning of block $i$ and is independent of the information collected in the block.  We show the exclusive dependence as $\rho_i({\hat{R}_1,\hat{R}_2,..,\hat{R}_{i-1}})$. Thus, for block $i$, we have:
\begin{align*}
& \E_{\sigma_t \sim \rho_i({\hat{R}_1,\hat{R}_2,..,\hat{R}_{i-1}})} \sum_{t \in [B_i]} SumLoss(\sigma_t,R_t)\\
& = \dfrac{T}{K} \E_{\sigma_i \sim \rho_i({\hat{R}_1,\hat{R}_2,..,\hat{R}_{i-1}})} SumLoss(\sigma_i,R^{avg}_i) \\
&\quad \text{(From Eq.~\ref{eq:regret1})}\\ 
& = \dfrac{T}{K} \E_{\sigma_i\sim \rho_i({\hat{R}_1,\hat{R}_2,..,\hat{R}_{i-1}})} \E_{\hat{R}_i}SumLoss(\sigma_i,\hat{R}_i) \\
& \quad (\because \text{SumLoss is linear in both arguments } \text{ and $\hat{R}_i$ is unbiased)} \\
& = \dfrac{T}{K} \E_{\hat{R}_i} \E_{\sigma_i\sim\rho_i({\hat{R}_1,\hat{R}_2,..,\hat{R}_{i-1}})} SumLoss(\sigma_i,\hat{R}_i) .
\end{align*}
In the last step above, we crucially used the fact that, since random distribution $\rho_i$ is independent of $\hat{R}_i$, the order of expectations is interchangeable. Taking expectation w.r.t. $\hat{R}_1,\hat{R}_2,..,\hat{R}_{i-1}$, we get,
\begin{equation}
\begin{split}
\label{eq:intermediate}
\E_{\hat{R}_1,\ldots, \hat{R}_{i-1}} \E_{\sigma_t \sim \rho_i({\hat{R}_1,\hat{R}_2,..,\hat{R}_{i-1}})} & \sum_{t \in [B_i]} SumLoss(\sigma_t,R_t) \\
& = \dfrac{T}{K} \E_{\hat{R}_1,\ldots, \hat{R}_{i-1}, \hat{R}_i} E_{\sigma_i \sim \rho_i({\hat{R}_1,\hat{R}_2,..,\hat{R}_{i-1}})} SumLoss(\sigma_i,\hat{R}_i) .
\end{split}
\end{equation}

Thus, 
\begin{align*}
\E \sum_{t=1}^T SumLoss(\sigma_t,R_t) & = \E \sum_{i=1}^K \sum_{t \in [B_i]} SumLoss(\sigma_t,R_t)\\
& = \sum_{i=1}^K E_{\hat{R}_1,\ldots, \hat{R}_{i-1}} E_{\sigma_t \sim \rho_i({\hat{R}_1,\hat{R}_2,..,\hat{R}_{i-1}})}\sum_{t \in [B_i]} SumLoss(\sigma_t,R_t)\\
& = \dfrac{T}{K} \sum_{i=1}^K \E_{\hat{R}_1,\ldots, \hat{R}_{i-1}, \hat{R}_i} \E_{\sigma_i \sim \rho_i({\hat{R}_1,\hat{R}_2,..,\hat{R}_{i-1}})} SumLoss(\sigma_i,\hat{R}_i)\\
&  \text{(From Eq.~\ref{eq:intermediate})}\\
& = \dfrac{T}{K} \E_{\hat{R}_1,\ldots, \hat{R}_K}  \E_{\sigma_i \sim \rho_i({\hat{R}_1,\hat{R}_2,..,\hat{R}_{i-1}})} \sum_{i=1}^K SumLoss(\sigma_i,\hat{R}_i) 
\end{align*}

Now using Eq.~\ref{eq:regret2}, we can upper bound the last term above as
\begin{align*}
& \le \dfrac{T}{K}\{ \E_{\hat{R}_1,\ldots,\hat{R}_K}[\underset{\sigma}{\min} \sum_{i=1}^K \sigma ^{-1} \cdot \hat{R}_i ]+ C \sqrt{K}\} \\
& \le\dfrac{T}{K}\{ \underset{\sigma}{\min} \sum_{i=1}^K \sigma^{-1} \cdot R^{avg}_i + C \sqrt{K}\} \\
&\quad \text{(Jensen's \ Inequality)}\\
& \le \underset{\sigma}{\min} \sum_{t=1}^T \sigma^{-1} \cdot R_t + C \dfrac{T}{\sqrt{K}} \\
& = \underset{\sigma}{\min} \sum_{t=1}^T SumLoss(\sigma, R_t)+ C \dfrac{T}{\sqrt{K}}.
\end{align*}

{\bf Effect of exploration}: Since in each block $B_i$, $\ceil{m/k}$ rounds are reserved for exploration, where we do not draw $\sigma_t$ from distribution $\rho_i$, we need to account for it in our regret bound. Exploration leads to an extra regret of $C^{I} \ceil{m/k} K$, where $C^{I}$ is a constant depending on the loss under consideration and specific full information algorithm used. The extra regret is because loss in each of the exploration rounds is at most $C^{I}$ and there are a total of $\ceil{m/k} K$ exploration rounds over all $K$ blocks. Thus, overall regret :
\begin{align}
\E\left[ \sum_{t=1}^T SumLoss(\sigma_t,R_t) \right]- \underset{\sigma}{\min} \sum_{t=1}^T SumLoss(\sigma, R_t) \le C^{I} \ceil{m/k} K + C \dfrac{T}{\sqrt{K}} .
\end{align}
Now we optimize over $K$, to get:
\begin{equation}
\begin{aligned}
\label{eq:mainforregret}
\E\left[\sum_{t=1}^T SumLoss(\sigma_t,R_t)\right] \le   \underset{\sigma}{\min}\sum_{t=1}^T SumLoss(\sigma,R_t)  +  2 (C^I)^{1/3} C^{2/3} \ceil{m/k}^{1/3}T^{2/3}\end{aligned}
\end{equation}

\end{proof}

{\bf Proof of Corollary~\ref{efficientregret-SumLoss}}:

\begin{proof}
We only need to instantiate the constants $C$ and  $C^{I}$ from Theorem~\ref{regret}, with respect to SumLoss and FTPL.
FTPL has the following parameters in its regret bound, for any online full information linear optimization problem: $D$ is the $\ell_1$ diameter of learner's action set, $R$ is upper bound on difference between losses of $2$ actions on same information vector and $A$ is the $\ell_1$ diameter of the set of information vectors (adversary's action set).

For SumLoss, it can be easily calculated that $R = \sum_{i=1}^m \sigma^{-1}(i) R(i) = O(m^2)$,  $D = \sum_{i=1}^m \sigma^{-1}(i)= O(m^2)$,  and $A = \sum_{i=1}^m R(i)= O(m)$ .

FTPL gets $O(C \sqrt{T})$ regret over $T$ rounds when $\epsilon= \sqrt{\frac{D}{R A T}}$.  Here, $C= 2 \sqrt{DRA}$ and $C^{I}= R$. Substituting the values of $D,R, A$, we conclude. 
\end{proof}

{\bf Extension of results from SumLoss to DCG and Precision@n}: 

{\bf DCG}: Due to structural differences, there are minor differences in definitions and proofs of theorems for SumLoss and DCG. We give pointers in the in proving that local observability condition fails to hold for DCG, when restricted to top $1$ feedback. We can skip the explicit proof of global observability, since the application of Algorithm~\ref{alg:top-k} already establishes that $O(T^{2/3})$ regret can be achieved. 

With slight abuse of notations, the loss matrix $L$ implicitly means gain matrix, where entry in cell $\{i,j\}$ of $L$ is $f(\sigma_i) \cdot g(R_j)$. The columns of feedback matrix $H$ are expanded to account for greater number of moves available to adversary (due to multi-graded relevance vectors).  In Definition 1, learner action $i$ is optimal if $\ell_i \cdot p \ge \ell_j \cdot p, \ \forall j \neq i$.

In Definition 2, the maximum number of distinct elements that can be in a row of $H$ is $n+1$. The signal matrix now becomes $S_i \in \{0,1\}^{(n+1) \times 2^m}$, where $(S_i)_{j,\ell}= \mathbbm{1}(H_{i,\ell}= j-1)$. 

\emph{Local Observability Fails}: Since we are trying to establish a lower bound, it is sufficient to show it for binary relevance vectors, since the adversary can only be more powerful otherwise. 

In Lemma~\ref{pareto-optimal}, proved for SumLoss, $\ell_i \cdot p$ equates to $f(\sigma) \cdot \E [R]$. From definition of DCG, and from the structure and properties of $f(\cdot)$, it is clear that $\ell_i \cdot p$ is \emph{maximized} under the same condition, i.e, $\E[R(\sigma_i(1)] \ge \E[R(\sigma_i(2)] \ge \ldots \ge \E[R(\sigma_i(m)]$. Thus, all actions are Pareto-optimal.

Careful observation of Lemma~\ref{neighbor-actions} shows that it is directly applicable to DCG, in light of extension of Lemma~\ref{pareto-optimal} to DCG.

Finally, just like in SumLoss, simple calculations with $m=3$ and $n=1$, in light of Lemma~\ref{pareto-optimal} and \ref{neighbor-actions}, show that local observability condition fails to hold.

We show the calculations:
\begin{equation*}
\begin{aligned}
 S_{\sigma_1}= S_{\sigma_2}=
  \left[ {\begin{array}{cccccccc}
   1 & 1 & 1 & 1 & 0 & 0 & 0 & 0 \\
   0 & 0 & 0 & 0 & 1 & 1 & 1 & 1 \\
  \end{array} } \right]\\
\end{aligned}
\end{equation*}

\begin{equation*}
\begin{aligned}
\ell_{\sigma_1}= & [0, 1/2, 1/\log_2 3, 1/2 + 1/\log_2 3, 1, 3/2, \\
& 1 + 1/\log_2 3, 3/2 + 1/\log_2 3]\\
\ell_{\sigma_2}= & [0, 1/\log_2 3, 1/2, 1/2 + 1/\log_2 3, 1, 1 + 1/\log_2 3,\\
& 3/2, 3/2 + 1/\log_2 3]
\end{aligned}
\end{equation*}

It is clear that $\ell_1 - \ell_2 \notin Col(S^{\top}_{\sigma_1})$. Hence, Definition 5 fails to hold.

{\bf Proof of Corrolary~\ref{efficientregret-DCG}}

For DCG, the parameters of FTPL are: $R= \sum_{i=1}^m f^s(\sigma^{-1}(i))g^s(R(i))= O(m(2^n-1)),  D= \sum_{i=1}^m f^s(\sigma^{-1}(i))= O(m), A= \sum_{i=1}^m g^s(R(i))= O(m(2^n-1))$. Again, $C= 2 \sqrt{DRA}$ and $C^{I}= R$. \\

{\bf Precision@n}:

{\bf Proof of Corrolary~\ref{efficientregret-Precision@n}}

For Precision@$n$, the parameters of FTPL are: $D= \sum_{i=1}^m f^s(\sigma^{-1}(i))= O(n), R= \sum_{i=1}^m f^s(\sigma^{-1}(i))g^s(R(i))= O(n), A= \sum_{i=1}^m g^s(R(i))= O(m)$. Again, $C= 2 \sqrt{DRA}$ and $C^{I}= R$. \\

{\bf Non-existence of Sublinear Regret Bounds for NDCG, AP and AUC}

We show via simple calculations that for the case $m=3$, global observability condition fails to hold for NDCG, when feedback is restricted to top ranked item, and relevance vectors are restricted to take binary values. It should be noted that allowing for multi-graded relevance vectors only makes the adversary more powerful; hence proving for binary relevance vectors is enough.

The intuition behind failure to satisfy global observability condition is that the $NDCG(\sigma,R)$ = $f(\sigma) \cdot g(R)$, where where $g(r)= R/ Z(R)$ (see Sec.\ref{rankingmeasures-NC} ). Thus, $g(\cdot)$ cannot be represented by univariate, scalar valued functions. This makes it impossible to write the difference between two rows of the loss matrix as linear combination of columns of (transposed) signal matrices.

Similar intuitions hold for AP and AUC.

{\bf Proof of Lemma \ref{globalfailsfornormalized}}

\begin{proof}
We will first consider NDCG and then, AP and AUC.

{\bf NDCG}:

The first and last row of Table \ref{lossmatrix-table}, when calculated for NDCG, are:
\begin{equation*}
\begin{aligned}
\ell_{\sigma_1}=  [1, 1/2, 1/\log_2 3, (1+\log_2 3/2))/(1+\log_2 3), 1, 3/(2(1+1/\log_2 3)), 1, 1]\\
\ell_{\sigma_6}=  [1, 1, \log_2 2/\log_2 3, 1, 1/2, 3/(2(1+1/\log_2 3)), (1+(\log_2 3)/2))/(1+\log_2 3), 1]
\end{aligned}
\end{equation*}

We remind once again that NDCG is a gain function, as opposed to SumLoss. 

The difference between the two vectors is:
\begin{equation*}
\begin{aligned}
\ell_{\sigma_1} - \ell_{\sigma_6}= [0, -1/2, 0, -\log_2 3/(2(1+\log_2 3)), 1/2, 0, \log_2 3/(2(1+\log_2 3)), 0].
\end{aligned}
\end{equation*} 
 
The signal matrices are same as SumLoss:

\begin{equation*}
\begin{aligned}
 S_{\sigma_1}= S_{\sigma_2}=
  \left[ {\begin{array}{cccccccc}
   1 & 1 & 1 & 1 & 0 & 0 & 0 & 0 \\
   0 & 0 & 0 & 0 & 1 & 1 & 1 & 1 \\
  \end{array} } \right]\\
\\
S_{\sigma_3}= S_{\sigma_5}=
  \left[ {\begin{array}{cccccccc}
   1 & 1 & 0 & 0 & 1 & 1 & 0 & 0 \\
   0 & 0 & 1 & 1 & 0 & 0 & 1 & 1 \\
  \end{array} } \right]\\
\\
S_{\sigma_4}= S_{\sigma_6}=
  \left[ {\begin{array}{cccccccc}
   1 & 0 & 1 & 0 & 1 & 0 & 1 & 0 \\
   0 & 1 & 0 & 1 & 0 & 1 & 0 & 1 \\
  \end{array} } \right]\\
\end{aligned}
\end{equation*}

It can now be easily checked that $\ell_{\sigma_1} - \ell_{\sigma_6}$ does not lie in the (combined) column span of the (transposed) signal matrices.

We show similar calculations for AP and AUC.

{\bf AP}:

We once again take $m=3$. The first and last row of Table \ref{lossmatrix-table}, when calculated for AP, is:
\begin{equation*}
\begin{aligned}
&\ell_{\sigma_1}= [1, 1/3, 1/2, 7/12, 1, 5/6, 1, 1]\\
&\ell_{\sigma_6}= [1, 1, 1/2, 1, 1/3, 5/6, 7/12, 1]
\end{aligned}
\end{equation*}

Like NDCG, AP is also a gain function.

The difference between the two vectors is:
\begin{equation*}
\ell_{\sigma_1} - \ell_{\sigma_6}= [0, -2/3, 0, -5/12, 2/3, 0, 5/12, 0].
\end{equation*} 
 
The signal matrices are same as in the SumLoss case:
\begin{equation*}
\begin{aligned}
 S_{\sigma_1}= S_{\sigma_2}=
  \left[ {\begin{array}{cccccccc}
   1 & 1 & 1 & 1 & 0 & 0 & 0 & 0 \\
   0 & 0 & 0 & 0 & 1 & 1 & 1 & 1 \\
  \end{array} } \right]\\
\\
S_{\sigma_3}= S_{\sigma_5}=
  \left[ {\begin{array}{cccccccc}
   1 & 1 & 0 & 0 & 1 & 1 & 0 & 0 \\
   0 & 0 & 1 & 1 & 0 & 0 & 1 & 1 \\
  \end{array} } \right]\\
\\
S_{\sigma_4}= S_{\sigma_6}=
  \left[ {\begin{array}{cccccccc}
   1 & 0 & 1 & 0 & 1 & 0 & 1 & 0 \\
   0 & 1 & 0 & 1 & 0 & 1 & 0 & 1 \\
  \end{array} } \right]\\
\end{aligned}
\end{equation*}

It can now be easily checked that $\ell_{\sigma_1} - \ell_{\sigma_6}$ does not lie in the (combined) column span of the (transposed) signal matrices.
\end{proof}

{\bf AUC}:

For AUC, we will show the calculations for $m=4$. This is because global observability does hold with $m=3$, as the normalizing factors for all relevance vectors with non-trivial mixture of $0$ and $1$ are same (i.e, when relevance vector has 1 irrelevant and 2 relevant objects, and 1 relevant and 2 irrelevant objects, the normalizing factors are same).  The normalizing factor changes from $m=4$ onwards; hence global observability fails.

Table~\ref{lossmatrix-table} will be extended since $m=4$. Instead of illustrating the full table, we point out the important facts about the loss matrix table with $m=4$ for AUC.

The $2^4$ relevance vectors heading the columns are:

$R_1= 0000,\ R_2= 0001,\ R_3=0010,\ R_4=0100,\ R_5=1000,\ R_6=0011,\ R_7=0101,\ R_8=1001,\ R_9=0110,\ R_{10}=1010,\ R_{11}=1100,\ R_{12}=0111,\ R_{13}=1011,\ R_{14}=1101,\ R_{15}=1110,\ R_{16}=1111$.

We will calculate the losses  of 1st and last (24th) action, where $\sigma_1= 1234$ and $\sigma_{24}=4321$.

\begin{equation*}
\begin{aligned}
&\ell_{\sigma_1}= [0, 1, 2/3, 1/3, 0, 1, 3/4, 1/2, 1/2, 1/4, 0, 1, 2/3, 1/3, 0, 0]\\
&\ell_{\sigma_{24}}= [0, 0, 1/3, 2/3, 1, 0, 1/4, 1/2, 1/2, 3/4, 1, 0, 1/3, 2/3, 1, 0]
\end{aligned}
\end{equation*}

AUC, like SumLoss, is a loss function.

The difference between the two vectors is:
\begin{equation*}
\begin{aligned}
\ell_{\sigma_1} - \ell_{\sigma_{24}}=  [0, 1, 1/3, -1/3, -1, 1, 1/2, 0, 0, -1/2, -1, 1, 1/3, -1/3, -1, 0].
\end{aligned}
\end{equation*} 

The signal matrices for AUC with $m=4$ will be slightly different. This is because there are 24 signal matrices, corresponding to 24 actions. However, groups of 6 actions will share the same signal matrix. For example, all 6 permutations that place object 1 first will have same signal matrix, all 6 permutations that place object 2 first will have same signal matrix, and so on. For simplicity, we denote the signal matrices as $S_1, S_2, S_3, S_4$, where $S_i$ corresponds to signal matrix where object $i$ is placed at top. We have:

\begin{equation*}
\begin{aligned}
 S_1=
  \left[ {\begin{array}{cccccccccccccccc}
   1 & 1 & 1 & 1 & 0 & 1 & 1 & 0 & 1 & 0 & 0 & 1 & 0 & 0 & 0 & 0 \\
   0 & 0 & 0 & 0 & 1 & 0 & 0 & 1 & 0 & 1 & 1 & 0 & 1 & 1 & 1 & 1 \\
  \end{array} } \right]\\
\\
 S_2=
  \left[ {\begin{array}{cccccccccccccccc}
   1 & 1 & 1 & 0 & 1 & 1 & 0 & 1 & 0 & 1 & 0 & 0 & 1 & 0 & 0 & 0 \\
   0 & 0 & 0 & 1 & 0 & 0 & 1 & 0 & 1 & 0 & 1 & 1 & 0 & 1 & 1 & 1 \\
  \end{array} } \right]\\
\\
 S_3=
  \left[ {\begin{array}{cccccccccccccccc}
   1 & 1 & 0 & 1 & 1 & 0 & 1 & 1 & 0 & 0 & 1 & 0 & 0 & 1 & 0 & 0 \\
   0 & 0 & 1 & 0 & 0 & 1 & 0 & 0 & 1 & 1 & 0 & 1 & 1 & 0 & 1 & 1 \\
  \end{array} } \right]\\
\\
 S_4=
  \left[ {\begin{array}{cccccccccccccccc}
   1 & 0 & 1 & 1 & 1 & 0 & 0 & 0 & 1 & 1 & 1 & 0 & 0 & 0 & 1 & 0 \\
   0 & 1 & 0 & 0 & 0 & 1 & 1 & 1 & 0 & 0 & 0 & 1 & 1 & 1 & 0 & 1 \\
  \end{array} } \right]\\
\end{aligned}
\end{equation*}

It can now be easily checked that $\ell_{\sigma_1} - \ell_{\sigma_{24}}$ does not lie in the (combined) column span of transposes of $S_{1}, S_{2}, S_{3}, S_4$.

\section*{Appendix B.}
\label{Appendix-B}
We provide technical details of results of Online Ranking with Restricted Feedback- Contextual Setting.

{\bf Proof of Lemma~\ref{unbiasedestimator}}: We restate the lemma before giving the proof, for ease of reading:

{\bf Lemma~\ref{unbiasedestimator}}: Let $F: \mathbb{R}^m \mapsto \mathbb{R}^a$ be a vector valued function, where $m\ge 1$, $a\ge 1$. For a fixed $x \in \mathbb{R}^m$, let $k$ entries of $x$ be observed at random. That is, for a fixed probability distribution $\mathbb{P}$ and some random $\sigma \sim \mathbb{P}(S_m)$, observed tuple is $\{\sigma, x_{\sigma(1)}, \ldots, x_{\sigma(k)}\}$. The necessary condition for existence of an unbiased estimator of $F(x)$, that can be constructed from $\{\sigma, x_{\sigma(1)}, \ldots, x_{\sigma(k)}\}$, is that it should be  possible to decompose $F(x)$ over $k$ (or less) coordinates of $x$ at a time. That is, $F(x)$ should have the following structure:
\begin{equation*}
F(x)= \sum\limits_{(i_1,i_2,\ldots,i_{\ell}) \in \ \perm{m}{\ell}} h_{i_1,i_2,\ldots,i_{\ell}}(x_{i_1}, x_{i_2},\ldots, x_{i_{\ell}}) 
\end{equation*}
where $\ell \le k$, $\perm{m}{\ell}$ is $\ell$ permutations of $m$ and $h: \mathbb{R}^{\ell} \mapsto \mathbb{R}^a$.
Moreover, when $F(x)$ can be written in form of Eq~\ref{eq:decoupling} , with $\ell=k$, an unbiased estimator of $F(x)$, based on $\{\sigma, x_{\sigma(1)}, \ldots, x_{\sigma(k)}\}$, is, 
\begin{equation*}
g(\sigma, x_{\sigma(1)}, \ldots, x_{\sigma(k)})=  \dfrac{\sum\limits_{(j_1,j_2,\ldots, j_k) \in S_k}h_{\sigma(j_1),\ldots, \sigma(j_k)}(x_{\sigma(j_1)}, \ldots, x_{\sigma(j_k)})}{\sum\limits_{\substack{(j_1,\ldots, j_k) \in S_k}} p(\sigma(j_1),\ldots,\sigma(j_k))} 
\end{equation*}
where $S_k$ is the set of $k!$ permutations of $[$k$]$ and $p(\sigma(1),\ldots, \sigma(k))$ is as in Eq~\ref{eq:shortprob} .

\begin{proof}
For a fixed $x \in \mathbb{R}^m$ and probability distribution $\mathbb{P}$, let the random permutation be $\sigma \sim \mathbb{P}(S_m)$ and the observed tuple be $\{\sigma, x_{\sigma(1)}, \ldots, x_{\sigma(k)}\}$. Let $\hat{G}= G(\sigma, x_{\sigma(1)}, \ldots, x_{\sigma(k)})$ be an unbiased estimator of $F(x)$ based on the random observed tuple. Taking expectation, we get:
\begin{equation*}
\begin{aligned}
\begin{split}
F(x)= &\E_{\sigma \sim \mathbb{P}} \left[\hat{G} \right] = \sum_{\pi \in S_m} \mathbb{P}(\pi) G(\pi, x_{\pi(1)}, \ldots, x_{\pi(k)}) \\
&= \sum_{(i_1, i_2, \ldots, i_k) \in \ \perm{m}{k}} \sum_{\pi \in S_m} \mathbb{P}(\pi) \mathbbm{1}(\pi(1)=i_1, \pi(2)=i_2, \ldots, \pi(k)= i_k) G(\pi, x_{i_1}, x_{i_2}, \ldots, x_{i_k}) 
\end{split} 
\end{aligned}
\end{equation*}
We note that $\mathbb{P}(\pi) \in [0,1]$ is independent of $x$ for all $\pi \in S_m$. Then we can use the following construction of function $h(\cdot)$:
\begin{equation*}
h_{i_1,i_2,\ldots, i_k} (x_{i_1}, \ldots, x_{i_k})=  \sum_{\pi \in S_m} \mathbb{P}(\pi) \mathbbm{1}(\pi(1)=i_1, \pi(2)=i_2, \ldots, \pi(k)= i_k) G(\pi, x_{i_1}, x_{i_2}, \ldots, x_{i_k}) 
\end{equation*}
 and thus,
\begin{equation*}
F(x)= \sum\limits_{(i_1,i_2,\ldots,i_{k}) \in \ \perm{m}{k}} h_{i_1,i_2,\ldots,i_{k}}(x_{i_1}, x_{i_2},\ldots, x_{i_{}}) 
\end{equation*}
Hence, we conclude that for existence of an unbiased estimator based on the random observed tuple, it should be  possible to decompose $F(x)$ over $k$ (or less) coordinates of $x$ at a time. The ``less than $k$'' coordinates arguement follows simply by noting that if $F(x)$ can be decomposed over $\ell$ coordinates at a time ($\ell <k$) and observation tuple is \{$\sigma, x_{\sigma(1)}, \ldots, x_{\sigma(k)})$\}, then any $k - \ell$ observations can be thrown away and the rest used for construction of the unbiased estimator.

The construction of the unbiased estimator proceeds as follows:

Let $F(x)= \sum_{i=1}^m h_i(x_i)$ and feedback is for top-1 item ($k=1$). The  unbiased estimator according to Lemma.~\ref{unbiasedestimator} is:

\begin{equation*}
g(\sigma,x_{\sigma(1)})= \dfrac{h_{\sigma(1)}(x_{\sigma(1)})}{p(\sigma(1))} = \dfrac{h_{\sigma(1)}(x_{\sigma(1)})}{\sum_{\pi} \mathbb{P}(\pi) \mathbbm{1}(\pi (1)=\sigma(1))}
\end{equation*}

Taking expectation w.r.t. $\sigma$, we get:
\begin{equation*}
\E_{\sigma}[g(\sigma,x_{\sigma(1)})]= \sum_{i=1}^m \dfrac{h_{i}(x_{i}) (\sum_{\pi} \mathbb{P}(\pi) \mathbbm{1}(\pi(1)=i))}{\sum_{\pi} \mathbb{P}(\pi) \mathbbm{1}(\pi(1)=i)}= \sum_{i=1}^m h_{i}(x_i)= F(x)
\end{equation*}

Now, let $F(x)= \sum\limits_{i \neq j =1}^m h_{i,j}(x_{i},x_{j})$ and the feedback is for top-2 item ($k=2$). The  unbiased estimator according to Lemma.~\ref{unbiasedestimator} is:
\begin{equation*}
\begin{split}
g(\sigma,x_{\sigma(1)}, x_{\sigma(2)})& = \dfrac{h_{\sigma(1), \sigma(2)}(x_{\sigma(1)}, x_{\sigma(2)}) + h_{\sigma(2),\sigma(1)}(x_{\sigma(2)}, x_{\sigma(1)})}{p(\sigma(1), \sigma(2))+ p(\sigma(2), \sigma(1))}\\
\end{split}
\end{equation*}

We will use the fact that for any 2 permutations $\sigma_1, \sigma_2$, which places the same 2 objects in top-2 positions but in opposite order, estimators based on $\sigma_1$ (i.e,  $g(\sigma_1, x_{\sigma_1(1)}, x_{\sigma_1(2)})$) and $\sigma_2$ (i.e, $g(\sigma_2, x_{\sigma_2(1)}, x_{\sigma_2(2)})$) have same numerator and denominator.  For eg., let $\sigma_1(1)=i, \sigma_1(2)=j$. Numerator and denominator for $g(\sigma_1, x_{\sigma_1(1)}, x_{\sigma_1(2)})$  are $h_{i,j}(x_i,x_j) + h_{j,i}(x_j,x_i)$ and $p(i,j)+ p(j,i)$ respectively. Now let $\sigma_2(1)=j, \sigma_2(2)=i$. Then numerator and denominator for $g(\sigma_2, x_{\sigma_2(1)}, x_{\sigma_2(2)})$ are $ h_{j,i}(x_j,x_i)+ h_{i,j}(x_i,x_j)$  and $p(j,i)+ p(i,j)$ respectively.

Then, taking expectation w.r.t. $\sigma$, we get:
\begin{equation*}
\begin{split}
\E_{\sigma}{g(\sigma,x_{\sigma(1)}, x_{\sigma(2)})} &= \sum_{i \neq j =1}^m \dfrac{(h_{i,j}(x_i,x_j) + h_{j,i}(x_j,x_i))p(i,j)}{p(i,j)+p(j,i)} \\
& = \sum_{i> j =1}^m \dfrac{(h_{i,j}(x_i,x_j) + h_{j,i}(x_j,x_i)) (p(i,j)+p(j,i))}{p(i,j)+p(j,i)} \\
&= \sum_{i>j=1}^m (h_{i,j}(x_i,x_j) + h_{j,i}(x_j,x_i)) = \sum_{i \neq j =1}^m h_{i,j}(x_i,x_j)  = F(x)
\end{split}
\end{equation*}

This chain of logic can be extended for any $k \ge 3$. Explicitly, for general $k \le m$, let $\mathbb{S}(i_1,i_2,\ldots, i_k)$ denote all permutations of the set $\{i_1, \ldots, i_k \}$. Then, taking expectation of the unbiased estimator will give:

\begin{equation*}
\begin{split}
&\E_{\sigma}{g(\sigma,x_{\sigma(1)}, \ldots, x_{\sigma(k)})} \\
& = \sum_{(i_1,i_2,\ldots,i_k) \in \ \perm{m}{k}} \dfrac{\left(\sum\limits_{(j_1,\ldots,j_k) \in \mathbb{S}(i_1,\ldots,i_k)}h_{j_1,\ldots,j_k}(x_{j_1},\ldots,x_{j_k})\right)p(i_1,\ldots,i_k)}{\sum\limits_{(j_1,\ldots,j_k) \in \mathbb{S}(i_1,\ldots,i_k)} p(j_1,\ldots,j_k)} \\
& = \sum_{i_1 > i_2 > \ldots > i_k =1}^m \dfrac{\left(\sum\limits_{(j_1,\ldots,j_k) \in \mathbb{S}(i_1,\ldots,i_k)}h_{j_1,\ldots,j_k}(x_{j_1},\ldots,x_{j_k})\right)\left(\sum\limits_{(j_1,\ldots,j_k) \in \mathbb{S}(i_1,\ldots,i_k)} p(j_1,\ldots,j_k)\right)}{\sum\limits_{(j_1,\ldots,j_k) \in \mathbb{S}(i_1,\ldots,i_k)} p(j_1,\ldots,j_k)} \\
&= \sum_{i_1 > i_2 > \ldots > i_k =1}^m \left(\sum\limits_{(j_1,\ldots,j_k) \in \mathbb{S}(i_1,\ldots,i_k)}h_{j_1,\ldots,j_k}(x_{j_1},\ldots,x_{j_k}) \right)\\
& = \sum\limits_{(i_1, i_2, \ldots, i_{k}) \in \ \perm{m}{k}} h_{i_1,i_2,\ldots,i_{k}}(x_{i_1}, x_{i_2},\ldots, x_{i_k})  = F(x)
\end{split}
\end{equation*}

{\bf Note}: For $k=m$, i.e., when the full feedback is received, the unbiased estimator is:

\begin{equation*}
\begin{split}
g(\sigma, x_{\sigma(1)}, \ldots, x_{\sigma(m)}) & =  \dfrac{\sum\limits_{(j_1,j_2,\ldots, j_m) \in S_m}h_{\sigma(j_1),\ldots, \sigma(j_m)}(x_{\sigma(j_1)}, \ldots, x_{\sigma(j_m)})}{\sum\limits_{\substack{(j_1,\ldots, j_m) \in S_m}} p(\sigma(j_1),\ldots,\sigma(j_m))} \\
&=  \dfrac{\sum\limits_{(i_1, i_2, \ldots, i_m) \in \ \perm{m}{m}} h_{i_1,\ldots, i_m}(x_{i_1}, \ldots, x_{i_m})}{1}= F(x)
\end{split}
\end{equation*}

Hence, with full information, the unbiased estimator of $F(x)$ is actually $F(x)$ itself, which is consistent with the theory of unbiased estimator.

\end{proof}

{\bf Proof of Lemma~\ref{expectednorm}} :

\begin{proof}
All our unbiased estimators are of the form $X^{\top} f(s,R,\sigma)$. We will actually get a bound on $f(s,R,\sigma)$ by using Lemma~\ref{normexpr} and $p \to q$ norm relation, to equate out $X$:
\begin{equation*}
\begin{split}
\|\tilde{z}\|_2 & = \|X^{\top} f(s,R,\sigma)\|_2 \le \|X^{\top}\|_{1 \to 2} \|f(s,R,\sigma)\|_1 \le  R_D   \|f(s,R,\sigma)\|_1
\end{split}
\end{equation*} 
since $R_D \ge \max_{j=1}^m \| X_{j:} \|_2$.

{\bf Squared Loss}: The unbiased estimator of gradient of squared loss, as given in the main text, is:
\begin{equation*}
\tilde{z}= X^{\top} (2 (s  - \dfrac{R(\sigma(1)) e_{\sigma(1)}}{p(\sigma(1))})) 
\end{equation*} 
where $p(\sigma(1))= \sum_{\pi \in S_m} \mathbb{P} (\pi) \mathbbm{1}(\pi(1)= \sigma(1))$ ($\mathbb{P}= \mathbb{P}_t$ is the distribution at round $t$ as in Algorithm~\ref{alg:RTop-kF} )

Now we have:
\begin{equation*}
\|s  - \dfrac{R(\sigma(1)) e_{\sigma(1)}}{p(\sigma(1))}\|_1 \le m R_D U  + \dfrac{R_{max}}{p(\sigma(1))} \le \dfrac{m R_D U R_{max}}{p(\sigma(1)}
\end{equation*}
Thus, taking expectation w.r.t $\sigma$, we get:
\begin{equation*}
\E_{\sigma} \|\tilde{z}\|^2_2 \le m^2 R_D^4 U^2 R_{max}^2 \E_{\sigma} {\dfrac{1}{p(\sigma(1))^2}}= m^2 R_D^4 U^2 R_{max}^2 \sum_{i=1}^m \dfrac{p(i)}{p^2(i)} 
\end{equation*}
Now, since $p(i) \ge \dfrac{\gamma}{m}$, $\forall \ i$, we get: $\E_{\sigma} \|\tilde{z}\|^2_2 \le $ $ \dfrac{C^{sq}}{\gamma}$, where $C^{sq}= m^4 R_D^4 U^2 R_{max}^2$. 

{\bf RankSVM Surrogate}:  The unbiased estimator of gradient of the RankSVM surrogate, as given in the main text, is:
\begin{equation*}
\tilde{z}=  X^{\top} \left(\dfrac{h_{s,\sigma(1),\sigma(2)}(R(\sigma(1)), R(\sigma(2))) + h_{s,\sigma(2),\sigma(1)}(R(\sigma(2)), R(\sigma(1)))}{p(\sigma(1),\sigma(2))+ p(\sigma(2),\sigma(1))}\right)
\end{equation*} 
where $h_{s, i,j}(R(i),R(j))= \mathbbm{1}(R(i) >R(j)) \mathbbm{1}(1+s(j)>s(i)) (e_j - e_i)$ and $p(\sigma(1), \sigma(2))= \sum\limits_{\pi \in S_m} \mathbb{P} (\pi) \mathbbm{1}(\pi(1)= \sigma(1), \pi(2)= \sigma(2))$ ($\mathbb{P}= \mathbb{P}_t$ as in  Algorithm~\ref{alg:RTop-kF})).

Now we have:
\begin{equation*}
\|\dfrac{h_{s,\sigma(1),\sigma(2)}(R_{\sigma(1)}, R_{\sigma(2)}) + h_{s,\sigma(2),\sigma(1)}(R_{\sigma(2)}, R_{\sigma(1)})}{p(\sigma(1),\sigma(2))+ p(\sigma(2),\sigma(1))}\|_1 \le \dfrac{2 }{ p(\sigma(1),\sigma(2)) + p(\sigma(2),\sigma(1))}
\end{equation*}

Thus, taking expectation w.r.t $\sigma$, we get:
\begin{equation*}
\E_{\sigma} \|\tilde{z}\|^2_2 \le 4 R_D^2 \E_{\sigma} \dfrac{1}{(p(\sigma(1), \sigma(2)) + p(\sigma(2),\sigma(1)))^2} \le 4 R_D^2 \sum_{i>j}^m \dfrac{p(i,j) +p(j,i)}{(p(i,j) + p(j,i))^2}
\end{equation*}
Now, since $p(i, j) \ge \dfrac{\gamma}{m^2}$, $\forall \ i,j$, we get: $\E_{\sigma} \|\tilde{z}\|^2_2 \le $ $ \dfrac{C^{svm}}{\gamma}$, where $C^{svm}= O(m^4 R_D^2) $. 

{\bf KL based Surrogate}:  The unbiased estimator of gradient of the KL based surrogate, as given in the main text, is:
\begin{equation*}
\tilde{z}= X^{\top} \left( \dfrac{(\exp(s(\sigma(1))) - \exp(R(\sigma(1))))e_{\sigma(1)}}{p(\sigma(1))}\right)
\end{equation*}
where $p(\sigma(1))= \sum_{\pi \in S_m} \mathbb{P} (\pi) \mathbbm{1}(\pi(1)= \sigma(1))$ ($\mathbb{P}= \mathbb{P}_t $ as in Alg.~\ref{alg:RTop-kF}) ).

Now we have:
\begin{equation*}
\|  \dfrac{(\exp(s(\sigma(1))) - \exp(R(\sigma(1))))e_{\sigma(1)}}{p(\sigma(1))}\|_1 \le \dfrac{\exp(R_D U)}{p(\sigma(1))}
\end{equation*}

Thus, taking expectation w.r.t $\sigma$, we get:
\begin{equation*}
\E_{\sigma} \|\tilde{z}\|^2_2 \le R_D^2 \exp(2R_D U) \E_{\sigma} \dfrac{1}{p(\sigma(1))^2} 
\end{equation*}
Following the same arguement as in squared loss, we get: $\E_{\sigma} \|\tilde{z}\|^2_2 \le $ $ \dfrac{C^{KL}}{\gamma}$, where $C^{KL}= m^2 R_D^2 \exp(2R_D U)$. 

\end{proof}

{\bf Proof of Lemma~\ref{RankSVM}} :

\begin{proof}
Let $m=3$. Collection of all terms which are functions of 1st coordinate of $R$, i.e, $R(1)$, in the gradient of RankSVM  is: $\mathbbm{1}(R(1)>R(2))\mathbbm{1}(1+s(2)>s(1)) (e_2-e_1)$ + $\mathbbm{1}(R(2)>R(1))\mathbbm{1}(1+s(1)>s(2)) (e_1-e_2)$ + $\mathbbm{1}(R(1)>R(3))\mathbbm{1}(1+s(3)>s(1)) (e_3-e_1)$ + $\mathbbm{1}(R(3)>R(1))\mathbbm{1}(1+s(1)>s(3)) (e_1-e_3)$.  Now let $s(1)=1, s(2)=0, s(3)=0$. Then the collection becomes: $\mathbbm{1}(R(2)>R(1))(e_1-e_2)$ + $\mathbbm{1}(R(3)>R(1))(e_1-e_3)$ = $(\mathbbm{1}(R(2)>R(1)) + \mathbbm{1}(R(3)>R(1)))e_1 - \mathbbm{1}(R(2)>R(1))e_2 - \mathbbm{1}(R(3)>R(1))e_3$. Now, if the gradient can be decomposed over each coordinate of $R$, then the collection of terms associated with $R(1)$ should \emph{only and only be a function} of $R(1)$. Specifically, $(\mathbbm{1}(R(2)>R(1)) + \mathbbm{1}(R(3)>R(1)))$ (the non-zero coefficient of $e_1$) should be a function of only $R(1)$ (similarly for $e_2$ and $e_3$). 

Now assume that the $(\mathbbm{1}(R(2)>R(1)) + \mathbbm{1}(R(3)>R(1)))$ can be expressed as a function of $R(1)$ only. Then the difference between the coefficient's values, for the following two cases: $R(1)=0, R(2)=0, R(3)=0$ and $R(1)=1, R(2)=0,R(3)=0$, would be same as the difference between the coefficient's values, for the following two cases: $R(1)=0, R(2)=1,R(3)=1$ and $R(1)=1,R(2)=1,R(3)=1$ (Since the difference would be affected only by change in $R(1)$ value). It can be clearly seen that the change in value between the first two cases is: $0-0=0$, while the change in value between the second two cases is: $2-0=2$. Thus, we reach a contradiction. 
\end{proof}

{\bf Proof of Lemma~\ref{listnet}} :

\begin{proof}
The term associated with the 1st coordinate of $R$, i.e, $R(1)$, in the gradient of ListNet is =  $ \sum_{i=1}^m \left(\dfrac{-\exp(R(i))}{\sum_{j=1}^m \exp(R(j))} + \dfrac{\exp(s(i))}{\sum_{j=1}^m \exp(s(j))} \right) e_i$ (in fact, the same term is associated with every coordinate of $R$).

Specifically, $f(R)= \left(\dfrac{-\exp(R(1))}{\sum_{j=1}^m \exp(R(j))} + \dfrac{\exp(s_1)}{\sum_{j=1}^m \exp(s(j))} \right)$ is the non-zero coefficient of $e_1$, associated with $R(1)$. Now, if $f(R)$ would have only been a function of $R(1)$, then $\dfrac{\partial^2 f(R)}{\partial R(1) \partial R(j)}$, $\forall \ j \neq 1$ would have been zero. It can be clearly seen this is not the case.

Now, the term associated jointly with $R(1)$ and $R(2)$, in the gradient of ListNet is same as before, i.e, $ \sum_{i=1}^m \left(\dfrac{-\exp(R(i))}{\sum_{j=1}^m \exp(R(j))} + \dfrac{\exp(s(i))}{\sum_{j=1}^m \exp(s(j))} \right) e_i$  (since $R(1)$ and $R(2)$ are present in all the summation terms of the gradient).

Specifically, $f(R)= \left(\dfrac{-\exp(R(i))}{\sum_{j=1}^m \exp(R(j))} + \dfrac{\exp(s(i))}{\sum_{j=1}^m \exp(s(j))} \right)$ is the non-zero coefficient of $e_1$. Now, if $f(R)$ would have only been a function of $R(1)$ and $R(2)$, then $\dfrac{\partial^3 f(R)}{\partial R(1) \partial R(2) \partial R(j)}$, $\forall j \neq 1, j \neq 2$ would have been zero.  It can be clearly seen this is not the case.

The same argument can be extended for any $k <m$. 

\end{proof}

{\bf Proof of Theorem.~\ref{impossiblegame}}:

\begin{proof} ({\it Sketch})
The proof builds on the proof of hopeless finite action partial monitoring games given by \cite{piccolboni2001discrete}. An examination of their proof of Theorem.\ 3 indicates that for hopeless games, there have to exist two probability distributions (over adversary's actions), which are indistinguishable in terms of feedback but the optimal learner's actions for the distributions are different. We first provide a mathematical explanation as to why such existence lead to hopeless games. Then, we provide a characterization of indistinguishable probability distributions in our problem setting, and then exploit the characterization of optimal actions for NDCG calibrated surrogates (Theorem~\ref{calibration}) to explicitly construct two such probability distributions. This proves the result. Full proof is given below.
\end{proof}

\begin{proof}
We will first fix the setting of the online game. We consider $m=3$ and fixed the document matrix $X \in \mathbb{R}^{3 \times 3}$ to be the identity. At each round of the game, the adversary generates the fixed $X$ and the learner chooses a score vector $s \in \reals^3$. Making the matrix $X$ identity makes the distinction between weight vectors $w$ and scores $s$ irrelevant since $s = Xw = w$. We note that allowing the adversary to vary $X$ over the rounds only makes him more powerful, which can only increase the regret. We also restrict the adversary to choose binary relevance vectors. Once again, allowing adversary to choose multi-graded relevance vectors only makes it more powerful. Thus, in this setting, the adversary can now choose among $2^3=8$ possible relevance vectors.  The learner's action set is infinite, i.e., the learner can choose any score vector $s= Xw= \reals^m$. The loss function $\phi(s,R)$ is any NDCG calibrated surrogate and feedback is the relevance of top-ranked item at each round, where ranking is induced by sorted order (descending) of score vector. We will use $p$ to denote randomized adversary one-short strategies, i.e. distributions over the $8$ possible relevance score vectors. Let $s^*_p = \argmin_{s} \E_{R \sim p} \phi(s,R)$. We note that in the definition of NDCG calibrated surrogates, \cite{ravikumar2011ndcg} assume that the optimal score vector for each distribution over relevance vectors is unique and we subscribe to that assumption. The assumption was taken to avoid some boundary conditions.

It remains to specify the choice of $U$, a bound on the Euclidean norm of the weight vectors (same as score vectors for us right now) that is used to define the best loss in hindsight. It never makes sense for the learner to play anything outside the set $\cup_p s^*_p$ so that we can set $U = \max \{ \|s\|_2 \::\: s \in \cup_p s^*_p \}$.

The paragraph following Lemma 6 of Thm. 3 in \cite{piccolboni2001discrete} gives the main intuition behind the argument the authors developed to prove hopelessness of finite action partial monitoring games. To make our proof self contained, we will explain the intuition in a rigorous way. 

{\bf Key insight}: Two adversary strategies $p,\tilde{p}$ are said to be indistinguishable from the learner's feedback perspective, if for every action of the learner, the probability distribution over the feedbacks received by learner is the same for $p$ and $\tilde{p}$. Now assume that adversary always selects actions according to one of the two such indistinguishable strategies. Thus, the learner will always play one of $s^*_p$ and $s^*_{\tilde{p}}$. By uniqueness, $s^*_p \neq s^*_{\tilde{p}}$. Then, the learner incurs a constant (non-zero) regret on any round where adversary plays according to $p$ and learner plays $s^*_{\tilde{p}}$, or if the adversary plays according to $\tilde{p}$ and learner plays $s^*_p$.   We show that in such a setting, adversary can simply play according to $(p+\tilde{p})/2$ and the learner suffers an expected regret of $\Omega(T)$.

Assume that the adversary selects $\{R_1,\ldots, R_T\}$ from product distribution $\otimes p$. Let the number of times the learner plays $s^*_p$ and $s^*_{\tilde{p}}$ be denoted by random variables $N^{p}_1$ and $N^{p}_2$ respectively, where $N^{p}$ shows the exclusive dependence on $p$. It is always true that $N^p_1 + N^{p}_2= T$. Moreover, let the expected per round regret be $\epsilon_p$ when learner plays $s^*_{\tilde{p}}$ , where the expectation is taken over the randomization of adversary. Now, assume that adversary selects $\{R_1,\ldots, R_T\}$ from product distribution $\otimes \tilde{p}$. The corresponding notations become $N^{\tilde{p}}_1$ and $N^{\tilde{p}}_2$ and $\epsilon_{\tilde{p}}$. Then, 
\[
\E_{(R_1,\ldots,R_T) \sim \otimes p} \E_{(s_1,\ldots,s_T)} [\text{Regret}((s_1,\ldots,s_T),(R_1,\ldots,R_T))] = 0 \cdot \E [N^{p}_1] + \epsilon_{p} \cdot \E [N^{p}_2]
\]
 and 
 \[
 \E_{(R_1,\ldots,R_T) \sim \otimes \tilde{p}} \E_{(s_1,\ldots,s_T)} [\text{Regret}((s_1,\ldots,s_T),(R_1,\ldots,R_T))] = \epsilon_{\tilde{p}} \cdot \E [N^{\tilde{p}}_1] + 0 \cdot \E [N^{\tilde{p}}_2]
\]
  Since $p$ and $\tilde{p}$ are indistinguishable from perspective of learner, $\E [N^{p}_1]= \E[N^{\tilde{p}}_1]= \E [N_1]$ and $\E [N^{p}_2]= \E[N^{\tilde{p}}_2]= \E [N_2]$. That is, the random variable denoting number of times $s^*_p$ is played by learner does not depend on adversary distribution (same for $s^*_{\tilde{p}}$.). Using this fact and averaging the two expectations, we get: 
 \begin{equation*}
 \begin{aligned}
 \begin{split}
 \E_{(R_1,\ldots,R_T) \sim {\frac{\otimes p+\otimes \tilde{p}}{2}}} \E_{(s_1,\ldots,s_T)} [\text{Regret}((s_1,\ldots,s_T),(R_1,\ldots,R_T))] & = \frac{\epsilon_{\tilde{p}}}{2} \cdot \E [N_1] + \frac{\epsilon_{p}}{2}  \cdot\E [N_2] \\
 & \ge \min (\frac{\epsilon_p}{2},\frac{\epsilon_{\tilde{p}}}{2})  \cdot \E [N_1 + N_2]= \epsilon \cdot T
 \end{split}
 \end{aligned}
\end{equation*}

Since 
\[
\begin{aligned}
\sup_{R_1,\ldots,R_T} \E [\text{Regret}((s_1,\ldots,s_T)& ,(R_1,\ldots,R_T))]  \ge \\
& \E_{(R_1,\ldots,R_T) \sim {\frac{\otimes p+ \otimes \tilde{p}}{2}}} \E_{(s_1,\ldots,s_T)} [\text{Regret}((s_1,\ldots,s_T),(R_1,\ldots,R_T))]
\end{aligned}
\]
we conclude that for every learner algorithm, adversary has a strategy, s.t. learner suffers an expected regret of $\Omega(T)$.

Now, the thing left to be shown is the existence of two indistinguishable distributions $p$ and $\tilde{p}$, s.t. $s^*_p \neq s^*_{\tilde{p}}$.

{\bf Characterization of indistinguishable strategies in our problem setting}: Two adversary's strategies $p$ and $\tilde{p}$ will be indistinguishable, in our problem setting, if for every score vector $s$, the relevances of the top-ranked item, according to s, are same for relevance vector drawn from $p$ and $\tilde{p}$.  Since relevance vectors are restricted to be binary, mathematically, it means that $\forall s$, $\mathbb{P}_{R \sim p} (R(\pi_s(1))=1)= \mathbb{P}_{R \sim \tilde{p}} (R(\pi_s(1))=1)$ (actually, we also need $\forall s$, $\mathbb{P}_{R \sim p} (R(\pi_s(1))=0)= \mathbb{P}_{R \sim \tilde{p}} (R(\pi_s(1))=0)$, but due to the binary nature, $\mathbb{P}_{R \sim p} (R(\pi_s(1))=1)= \mathbb{P}_{R \sim \tilde{p}} (R(\pi_s(1))=1)$ $\implies$ $\mathbb{P}_{R \sim p} (R(\pi_s(1))=0)= \mathbb{P}_{R \sim \tilde{p}} (R(\pi_s(1))=0)$). Since the equality has to hold $\forall s$, this implies $\forall j \in [m]$, $\mathbb{P}_{R \sim p} (R(j)=1)= \mathbb{P}_{R \sim \tilde{p}} (R(j)=1)$ (as every item will be ranked at top by some score vector). Hence, $\forall j \in [m]$, $\E_{R \sim p} [R(j)]= \E_{R \sim \tilde{p}} [R(j)]$ $\implies$ $\E_{R \sim p} [R]= \E_{R \sim \tilde{p}} [R]$. It can be seen clearly that the chain of implications can be reversed. Hence, $\forall s$, $\mathbb{P}_{R \sim p} (R(\pi_s(1))=1)= \mathbb{P}_{R \sim \tilde{p}} (R(\pi_s(1))=1)$ $\Longleftrightarrow$ $\E_{R \sim p} [R]= \E_{R \sim \tilde{p}} [R]$.

{\bf Explicit adversary strategies}: Following from the discussion so far and Theorem~\ref{calibration}, if we can show existence of two strategies $p$ and $\tilde{p}$ s.t. $\E_{R \sim p} [R]= \E_{R \sim \tilde{p}} [R]$, but $\argsort \left(\E_{R \sim p} \left[\frac{G({\bf R})}{Z(R)}\right]\right) \neq \argsort \left(\E_{R \sim \tilde{p}} \left[\frac{G({\bf R})}{Z(R)}\right]\right)$, we are done.

The 8 possible relevance vectors (adversary's actions) are $(R_1,R_2,R_3,R_4,R_5,R_6,R_7,R_8)= (000, 110, 101, 011, 100, 010, 001, 111)$. Let the two probability vectors be: 
\[
\begin{aligned}
& p= (0.0, 0.1, 0.15, 0.05, 0.2, 0.3,0.2,0.0) \\
& \tilde{p}= (0.0,0.3,0.0,0.0,0.15,0.15,0.4,0.0).
\end{aligned}
\]

The data is provided in table format in Table.~\ref{relevanceprobability-table}.  

Under the two distributions, it can be checked that $\E_{R \sim p} [R]= \E_{R \sim \tilde{p}} [R]= (0.45,0.45,0.4)^{\top}$. 

However, $\E_{R \sim p} \left[\frac{G({\bf R})}{Z(R)}\right]= (0.3533,0.3920,0.3226)^{\top}$, but $\E_{R \sim \tilde{p}} \left[\frac{G({\bf R})}{Z(R)}\right]= (0.3339, 0.3339, 0.4000)^{\top}$. Hence, $\argsort \left(\E_{R \sim p} \left[\frac{G({\bf R})}{Z(R)}\right]\right)= [2,1,3]^{\top}$ but $\argsort \left(\E_{R \sim \tilde{p}} \left[\frac{G({\bf R})}{Z(R)}\right]\right) \in \{[3,1,2]^{\top}, [3,2,1]^{\top}\}$.

\begin{table}[t] 
\caption{Relevance and probability vectors.} 
\label{relevanceprobability-table}
\begin{center}
\tabcolsep=0.110cm
\begin{tabular}{c c c c c c c c c}  
\hline 
\hline $p$ & 0.0 & 0.1 & 0.15 & 0.05 & 0.2 & 0.3 & 0.2 & 0.0\\ [0.4ex]
\hline $\tilde{p}$ & 0.0 & 0.3 & 0.0 & 0.0 & 0.15 & 0.15 & 0.4 & 0.0\\ [0.4ex] 
\hline Rel.& $R_1$ & $R_2$ & $R_3$ & $R_4$ & $R_5$ & $R_6$ & $R_7$ & $R_8$ \\ [0.4ex] 
\hline 
& 0 & 1 & 1 & 0 & 1 & 0 & 0 & 1 \\ 
& 0 & 1 & 0 & 1 & 0 & 1 & 0 & 1 \\ 
& 0 & 0 & 1 & 1 & 0 & 0 & 1 & 1  \\ 
\hline 
\end{tabular} 
\end{center}
\end{table}

\end{proof}

\end{document}